\definecolor{mydarkblue}{rgb}{0,0.08,0.45}
\definecolor{mydarkblue}{rgb}{0,0.08,0.45}
\definecolor{myteal}{RGB}{27,158,119}
\definecolor{myorange}{RGB}{217,95,2}
\definecolor{myred}{RGB}{231,41,138}
\definecolor{mypurple}{RGB}{152,78,163}
\definecolor{myblue}{RGB}{55,126,184}
\definecolor{mygreen}{RGB}{0,100,0}
\newtheorem{lemma}{Lemma}
\newtheorem{theorem}{Theorem}
\newtheorem{proposition}{Proposition}
\newtheorem{assumption}{Assumption}
\definecolor{myblue}{HTML}{D2E4FC}
\def\R{\mathbb{R}}
\def\Filt{\mathscr{F}}
\renewcommand\ss{{\boldsymbol s}}
\def\aa{{\boldsymbol a}}
\def\bb{{\boldsymbol b}}
\def\xx{{\boldsymbol x}}
\def\XX{{\boldsymbol X}}
\def\YY{{\boldsymbol Y}}
\def\ZZ{{\boldsymbol Z}}
\def\aa{{\boldsymbol a}}
\def\bb{{\boldsymbol b}}
\def\ee{{\boldsymbol e}}
\def\WW{{\boldsymbol W}}
\def\II{{\text{\textbf{I}}}}
\def\uu{{\boldsymbol u}}
\def\ww{{\boldsymbol w}}
\def\zz{{\boldsymbol z}}
\def\BB{{\boldsymbol B}}
\def\AA{{\boldsymbol A}}
\def\CC{{\boldsymbol C}}
\def\MM{{\boldsymbol M}}
\def\PP{{\boldsymbol P}}
\def\TT{{\boldsymbol T}}
\def\QQ{{\boldsymbol Q}}
\def\HH{{\boldsymbol H}}
\def\xxi{  {\boldsymbol \xi} } 
\def\SSigma{{\boldsymbol \Sigma}}
\def\YY{ {\boldsymbol Y} }
\def\eeta{{\boldsymbol \eta}}
\def\bbeta{{\boldsymbol \beta}}
\def\dif{\mathop{}\!\mathrm{d}}
\def\EE{{\mathbb E}\,}
\DeclareMathOperator{\tr}{tr}
\def\defas{\stackrel{\text{def}}{=}}
\DeclareDocumentCommand{\Prto} {o} {
  \IfNoValueTF {#1}
  {\overset{\Pr}{\longrightarrow}}
  { \xrightarrow[ #1 \to \infty]{\Pr }}
}
\DeclareDocumentCommand{\Asto} {o} {
  \IfNoValueTF {#1}
  {\overset{\text{\rm a.s.}}{\longrightarrow}}
  { \xrightarrow[ #1 \to \infty]{\text{\rm a.s.} }}
}
\DeclareDocumentCommand{\law} {o} {
  \IfNoValueTF {#1}
  {\overset{\text{law}}{=}}
  { \xrightarrow[ #1 \to \infty]{\Pr }}
}
\DeclareMathOperator{\Exp}{\mathbb{E}}
\newcommand{\vertiii}[1]{{\left\vert\kern-0.25ex\left\vert\kern-0.25ex\left\vert #1 
    \right\vert\kern-0.25ex\right\vert\kern-0.25ex\right\vert}}
\newcommand{\gf}{\bm{\mathscr{X}}^{\text{gf}}}
\newcommand{\sgf}{\bm{\mathscr{X}}^{\text{s-gf}}}
\newcommand{\sgd}{\bm{\xx}^{\text{sgd}}}
\newcommand{\mgd}{\bm{x}^{\text{m-gd}}}
\newif\ifshowcomments
\newcommand{\ba}[1]{\textcolor{olive}{[BA: #1]}}
\newcommand{\jp}[1]{\textcolor{blue}{[JP: #1]}}
\newcommand{\jp}[1]{}
\newcommand{\ba}[1]{}
\title{\textcolor{red}{[NB: COMMENTS DISPLAYED!]\\}Implicit Regularization or Implicit Conditioning? Exact Risk Trajectories of SGD in High Dimensions
}
\title{Implicit Regularization or Implicit Conditioning? Exact Risk Trajectories of SGD in High Dimensions}
\author{%
  Courtney Paquette\thanks{CP is a CIFAR AI chair, MILA and CP was supported by a Discovery Grant from the
Natural Science and Engineering Research Council (NSERC) of Canada; website \url{https://cypaquette.github.io/}. Research by EP was supported by a Discovery Grant from the
Natural Science and Engineering Research Council (NSERC) of Canada; website \url{https://elliotpaquette.github.io/}. }\\
  Google Research, Brain and McGill University\\
  \texttt{courtney.paquette@mcgill.ca} \\
   \And
   Elliot Paquette \\
  McGill University \\
   \texttt{elliot.paquette@mcgill.ca} \\
   \AND
   Ben Adlam \\
   Google Research, Brain \\
   \And
   Jeffrey Pennington \\
  Google Research, Brain \\
}
\begin{document}

\maketitle

\begin{abstract} 
Stochastic gradient descent (SGD) is a pillar of modern machine learning, serving as the go-to optimization algorithm for a diverse array of problems. While the empirical success of SGD is often attributed to its computational efficiency and favorable generalization behavior, neither effect is well understood and disentangling them remains an open problem. Even in the simple setting of convex quadratic problems, worst-case analyses give an asymptotic convergence rate for SGD that is no better than full-batch gradient descent (GD), and the purported implicit regularization effects of SGD lack a precise explanation. In this work, we study the dynamics of multi-pass SGD on high-dimensional convex quadratics and establish an asymptotic equivalence to a stochastic differential equation, which we call homogenized stochastic gradient descent (HSGD), whose solutions we characterize explicitly in terms of a Volterra integral equation. These results yield precise formulas for the learning and risk trajectories, which reveal a mechanism of implicit conditioning that explains the efficiency of SGD relative to GD. We also prove that the noise from SGD negatively impacts generalization performance, ruling out the possibility of any type of implicit regularization in this context. Finally, we show how to adapt the HSGD formalism to include streaming SGD, which allows us to produce an exact prediction for the excess risk of multi-pass SGD relative to that of streaming SGD (bootstrap risk). 

\end{abstract}

\section{Introduction}
Stochastic gradient descent (SGD) is the algorithm of choice for optimization in modern machine learning and has been hailed as a major reason for deep learning's success~\citep{bottou2012stochastic,Goodfellow-et-al-2016}. Explanations for the effectiveness of SGD typically refer to its computational efficiency and to its favorable generalization properties, but theoretical understanding of these purported benefits is far from complete.

The efficiency of SGD has been the subject of extensive research, dating back to the original work of~\citet{robbins1951} and extending to modern large-scale machine learning applications (see e.g.~\citep{bottou2010large,bottou2007tradeoffs}). However, despite its widespread adoption and algorithmic simplicity, surprisingly little is known about how SGD performs in the types of high-dimensional optimization problems that occur in practice. Part of the challenge in deriving robust high-level conclusions about the efficiency of SGD is simply that those conclusions can depend on precisely which quantities are measured and what assumptions are leveraged. For example, in the extreme setting where the samples are one-hot vectors, running SGD on a quadratic function is actually identical to running full-batch gradient descent; as such, any statements about the two algorithms' relative efficiency must be data-dependent. Furthermore, the majority of prior analyses focus on the streaming or single-pass setting, where each sample is seen a single time. While this setting is appropriate when the number of samples $n$ is much larger than the dimensionality $d$, it does not adequately describe the practically-relevant overparameratized or high-dimensional settings where $d\gtrsim n$.

Moreover, the practical success of SGD has been so remarkable in recent years that a growing body of literature has suggested that its benefit to generalization extends beyond what any improved efficiency might reasonably afford~\citep{smith2021origin,jastrzkebski2017three,chaudhari2018stochastic,shallue2019measuring,smith2017bayesian}. Some of the myriad explanations for SGD's favorable generalization properties include the local geometry of minimizers~\citep{keskar2016on,hochreiter1997flat, WuMaE, gurbuzbalaban2020heavy-tail}, connections to approximate Bayesian inference~\citep{mandt2017stochastic}, and the regularization properties of noise~\citep{smith2020on}, among many others. While some of the these perspectives are intuitive and compelling, they are often difficult to rigorously establish from either an empirical or a theoretical perspective. Empirically, simulations at large scale command significant computational resources, and it can be challenging to push to sufficiently late times or sufficiently large batches to establish the appropriate baselines~\citep{shallue2019measuring,smith2020on}. Theoretically, the strongest existing results are again in the single-pass setting, for which a number of works have established excess risk bounds for quadratic problems~\citep{bach2013non,dieuleveut2017harder,ge2019step,wu2021last}. Much less is known in the multi-pass setting, though stability results were established by~\citep{hardt2016train}, and some recent works have begun examining generalization~\citep{lei2021generalization}.

In this work, we study the dynamics of multi-pass SGD on high-dimensional convex quadratic functions and derive exact asymptotic predictions for the learning and risk trajectories. Our analysis establishes an asymptotic equivalence to a stochastic differential equation, which we call homogenized stochastic gradient descent (HSGD), whose solutions we characterize explicitly in terms of a Volterra integral equation. These results allow us to define a precise data-dependent implicit-conditioning ratio (ICR) that determines whether SGD is more efficient than its full-batch cousins. The ICR favors SGD for many practical datasets, providing some explanation for the observed superior efficiency of SGD; interestingly, we also highlight settings for which SGD is less efficient than full-batch momentum gradient descent, underscoring the data-dependence of the conclusions. Moreover, our results also show that SGD does not improve generalization performance, whether measured in-distribution or out-of-distribution, and therefore that SGD does not offer any form of implicit regularization in this setting. We emphasize that our results do not rule out possible benefits for non-convex problems, but they do provide some of the first explicit negative results in the convex quadratic case. 

\subsection{Contributions}
Our primary contributions are to:
\begin{enumerate}
\item Establish the equivalence of quadratic statistics computed on the iterates of SGD and on a particular stochastic Langevin diffusion process called homogonized SGD (Theorem~\ref{thm:lppp});
\item Exactly characterize the asymptotic training and risk trajectories as the solutions of a deterministic Volterra integral equation (Theorem~\ref{thm:trainrisk});
\item Prove that the noise from SGD negatively impacts generalization performance, both in- and out-of-distribution (Section~\ref{sec:implicit_regularization}), but explain why the impact is often minimal in practice;
\item Introduce the implicit-conditioning ratio that describes when and by how much SGD accelerates convergence relative to the best full-batch methods (Section~\ref{sec:implicit_conditioning});
\item Analyze the limit of streaming SGD to show its inability to capture many salient features of the dynamics of multi-pass SGD (Section~\ref{sec:streaming_main}).
\end{enumerate}

\section{Preliminaries and background}

\paragraph{Problem setting.} We consider high-dimensional $\ell^2$-regularized least squares problems defined by,
 \begin{equation}\label{eq:rr}
    \min_{\xx\in\mathbb{R}^d} \Big\{ f(\xx) \defas  \frac{1}{2}\|\AA \xx - \bb\|_2^2 + \frac{\delta}{2} \|\xx\|^2 = \sum_{i=1}^n \underbrace{\frac{1}{2} \left ( (\aa_i \xx - b_i)^2  + \frac{\delta}{n} \|\xx\|^2 \right)}_{\defas f_i(\xx)} \Big\},
\end{equation}
where $\delta \ge 0$ is the ridge-regularization parameter. We denote the ridgeless empirical risk as
\begin{equation} \label{eq:lsq}
    \mathscr{L}(\xx) \defas \frac{1}{2} \|\AA \xx-\bb\|_2^2.
\end{equation}
On the problem \eqref{eq:rr}, the steps taken by gradient decent (GD) can be written recursively as
\begin{equation}\label{eq:gd}
    \mgd_{k+1} = \mgd_k - \gamma_k \nabla f(\mgd_k) 
    = \mgd_k - \gamma_k \AA^T ( \AA \mgd_k - \bb) - {\gamma_k \delta} \mgd_k + \Delta(\mgd_k - \mgd_{k-1}),
\end{equation}
where $\Delta > 0$ is the momentum parameter, $\gamma_k$ is the learning rate schedule, and $\xx_0 \in \mathbb{R}^d$ is an initial vector assumed to be independent of all other randomness and having norm at most $1$. When $A$ is large, computing these updates can be expensive, so an unbiased estimator for the true gradient is often used, where a subset of the data points are selected uniformly at random. We focus on the setting with batch size equal to one and without momentum, which we refer to as stochastic gradient descent (SGD), and for which the iterates can be written recursively as
\begin{equation} \begin{aligned} \label{eq:sgd}
    \sgd_{k+1} = \sgd_k - \gamma_k \nabla f_{i_k}(\sgd_k) = \sgd_k - \gamma_k \AA^T \ee_{i_k} \ee_{i_k}^T ( \AA \sgd_k - \bb) - \tfrac{\gamma_k \delta}{n} \sgd_k,
\end{aligned} \end{equation}
where the $i_k \sim\text{Unif}([n])$ iid. While it would also be possible to consider mini-batch SGD, previous work has shown that batch sizes that are vanishingly small as a fraction of the number of samples are equivalent to the single-batch analysis, after appropriately adjusting the time by a factor of the batch size \citep[Theorem 1]{paquette2020halting}; similarly, we do not consider high-dimensional SGD with momentum as it degenerates to SGD \citep{paquette2021dynamics}.  See also \cite{jain2018accelerating}.

\paragraph{Diffusion approximations and homogenized SGD.}
A common paradigm for understanding SGD is through stochastic Langevin diffusions (SLD), i.e. solutions of equations of the form
\begin{equation}\label{eq:SLD}
\dif \XX_t = -\gamma \nabla f(\XX_t) + \sqrt{ \SSigma_t } d\BB_t\,,
\end{equation}
where $\gamma$ is the step size of SGD, $f$ is the loss function, $\BB_t$ is a $d$-dimensional standard Brownian motion, and the matrix $0 \preceq \SSigma_t \in \mathbb{R}^{d \times d}$ models the noise covariance. 
In many analyses, no concrete connection between SGD and SLD is developed, and the diffusion is merely used to build intuition. A common example is the isotropic case ($\SSigma_t \propto \II_d$), for which the Fokker-Planck equation implies that the dynamics are reversible with respect to a density proportional to $e^{-f(x)}$. Consequently, the process can escape local minima, exhibiting a trade-off between the entropy and depth of minima and thereby highlighting a possible mechanism of implicit regularization. In the general anisotropic case, describing the stationary distribution is more difficult; nonetheless, the local geometry near minima of $f$ can be analyzed, see \citep{chaudhari2018stochastic,kunin2021rethinking}. 

While this type of implicit entropic regularization might ultimately underlie the generalization benefits of SGD for nonconvex problems, currently we lack a precise connection between a concrete SLD and a practical nonconvex learning problem. As such, the implicit regularization effects of SGD on nonconvex losses remains a largely unsolved problem.

For convex quadratics, however, the implications of Eq.~\eqref{eq:SLD} are quite clear: there is no notion of implicit regularization as the noise in SLD negatively impacts generalization performance. Note that because the noise is mean zero, any SLD is centered around \emph{gradient flow} (GF) $\gf_t$, which solves
\begin{equation}
\label{eqn:GF}
\dif \gf_t = -\nabla f(\gf_t)\,,
\end{equation}
leading to the following conclusion for generalization:
\begin{lemma}\label{lemma:impossibility}
Suppose the objective function is $f(\xx) = \frac12\bigl(\|\AA \xx-\bb\|^2 + \delta \|\xx\|^2\bigr)$.
Suppose $(\XX_t : t\in[0,\infty))$ is an SLD (i.e.\ $\XX_t$ solves \eqref{eq:SLD}) with $\|\SSigma_t\|_{op}$  almost surely bounded by some $C < \infty$.
Suppose the population risk $\mathcal{R} : \mathbb{R}^d \to \mathbb{R}$ is a convex function and denote $\xx_*\defas \lim_{t\to\infty}\gf_t$, then
\[\underbrace{\Exp [\mathcal{R}(\XX_t)]}_{\text{pop. risk of SLD}} \geq \underbrace{\mathcal{R}(\gf_{\gamma t} )}_{\substack{\text{pop. risk of}\\\text{gradient flow}}} \quad \text{for all $t \ge 0$ and hence,} \quad 
\underbrace{\liminf_{t \to \infty} \Exp [\mathcal{R}(\XX_t)]}_{\substack{\text{limiting pop. risk of SLD}}} \geq \underbrace{\mathcal{R}(\xx_*)}_{\substack{\text{limiting pop. risk}\\\text{gradient flow}}}.
\]
If in addition $\mathcal{R}$ is strictly convex, and $\SSigma_t\to \SSigma_\infty$ with $\SSigma_\infty \succ 0,$ then the inequality is strict.
\end{lemma}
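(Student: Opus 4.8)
The plan is to reduce both statements to Jensen's inequality, exploiting the fact that for a \emph{quadratic} $f$ any SLD is exactly centered on a time‑rescaled gradient flow.

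\emph{The two (non‑strict) inequalities.} Write $\HH\defas\nabla^2 f=\AA^T\AA+\delta\II$, so that $\nabla f(\xx)=\HH\xx-\AA^T\bb$ is affine. Taking expectations in \eqref{eq:SLD} and using that the stochastic integral against $\BB_t$ is a martingale, I would show ${\boldsymbol m}_t\defas\Exp[\XX_t]$ solves the \emph{deterministic} linear ODE $\dot{\boldsymbol m}_t=-\gamma\,\Exp[\nabla f(\XX_t)]=-\gamma\,\nabla f({\boldsymbol m}_t)$ with ${\boldsymbol m}_0=\xx_0$; since $t\mapsto\gf_{\gamma t}$ solves the same ODE with the same initial condition, uniqueness gives $\Exp[\XX_t]=\gf_{\gamma t}$ for all $t$. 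Jensen's inequality for the convex $\mathcal{R}$ then yields $\Exp[\mathcal{R}(\XX_t)]\ge\mathcal{R}(\Exp[\XX_t])=\mathcal{R}(\gf_{\gamma t})$, which is the first claim. Letting $t\to\infty$, using that a finite convex function on $\mathbb{R}^d$ is continuous and that $\gf_{\gamma t}\to\xx_*$ by hypothesis, gives $\liminf_{t\to\infty}\Exp[\mathcal{R}(\XX_t)]\ge\lim_{t\to\infty}\mathcal{R}(\gf_{\gamma t})=\mathcal{R}(\xx_*)$.

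\emph{The strict case.} Here I would assume $\delta>0$, so $\HH\succ0$ (the ridgeless case is recovered by restricting to $\mathrm{range}(\AA^T)$, the subspace on which $\gf_t$ converges). The fluctuation $\ZZ_t\defas\XX_t-\gf_{\gamma t}$ is a Gaussian Ornstein--Uhlenbeck-type process, $\ZZ_t=\int_0^t e^{-\gamma\HH(t-s)}\sqrt{\SSigma_s}\,\dif\BB_s$, with $\mathrm{Cov}(\XX_t)=\int_0^t e^{-\gamma\HH(t-s)}\SSigma_s e^{-\gamma\HH(t-s)}\,\dif s\to\SSigma_\infty^{\mathrm{stat}}\defas\int_0^\infty e^{-\gamma\HH u}\SSigma_\infty e^{-\gamma\HH u}\,\dif u$ as $t\to\infty$ (dominated convergence, using $\SSigma_t\to\SSigma_\infty$ and $\|\SSigma_t\|_{op}\le C$); since $\HH\succ0$ and $\SSigma_\infty\succ0$ one checks $\SSigma_\infty^{\mathrm{stat}}\succ0$, so $\XX_t\Rightarrow\XX_\infty\sim\mathcal{N}(\xx_*,\SSigma_\infty^{\mathrm{stat}})$, a non‑degenerate Gaussian. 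To turn weak convergence into a bound on expectations I would subtract an affine minorant $\ell$ of $\mathcal{R}$ tangent at $\xx_*$ (a supporting hyperplane through a subgradient), so that $\tilde{\mathcal{R}}\defas\mathcal{R}-\ell\ge0$ is still strictly convex with $\tilde{\mathcal{R}}(\xx_*)=0$; then the portmanteau theorem (for nonnegative lower‑semicontinuous integrands) gives $\liminf_t\Exp[\tilde{\mathcal{R}}(\XX_t)]\ge\Exp[\tilde{\mathcal{R}}(\XX_\infty)]$, and strict Jensen (strictly convex $\tilde{\mathcal{R}}$, non‑degenerate $\XX_\infty$) gives $\Exp[\tilde{\mathcal{R}}(\XX_\infty)]>\tilde{\mathcal{R}}(\Exp[\XX_\infty])=\tilde{\mathcal{R}}(\xx_*)=0$ — trivially so if that expectation is $+\infty$. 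Since $\Exp[\ell(\XX_t)]=\ell(\gf_{\gamma t})\to\ell(\xx_*)=\mathcal{R}(\xx_*)$ by affineness, combining gives $\liminf_{t\to\infty}\Exp[\mathcal{R}(\XX_t)]\ge\mathcal{R}(\xx_*)+\Exp[\tilde{\mathcal{R}}(\XX_\infty)]>\mathcal{R}(\xx_*)$.

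\emph{Main obstacle.} The genuinely delicate step is the interchange of $\liminf$ and $\Exp$ in the strict case: weak convergence of $\XX_t$ does not by itself control $\Exp[\mathcal{R}(\XX_t)]$ when $\mathcal{R}$ is unbounded below, which is why I first reduce to a nonnegative integrand via the supporting hyperplane before invoking portmanteau/Fatou, and why non‑degeneracy of the limiting law must be established (through positive‑definiteness of $\SSigma_\infty^{\mathrm{stat}}$) rather than merely assumed. Everything else — affineness of $\nabla f$, convergence of the OU covariance, and $\SSigma_\infty^{\mathrm{stat}}\succ0$ — is routine once $\HH\succ0$; the ridgeless case only adds the bookkeeping of projecting onto $\mathrm{range}(\AA^T)$, where both $\gf_t$ and the noise live.
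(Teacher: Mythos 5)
Your proposal is correct and follows essentially the same route as the paper's (three-sentence) proof: identify $\Exp[\XX_t]$ with time-rescaled gradient flow via linearity of $\nabla f$, apply Jensen's inequality, and for the strict case use convergence of the law of $\XX_t$ to a non-degenerate Gaussian centered at $\xx_*$ together with Fatou and strict Jensen. Your write-up merely fills in the details the paper leaves implicit — in particular the supporting-hyperplane reduction that makes the Fatou/portmanteau step legitimate for an integrand unbounded below, and the verification that the limiting covariance is positive definite.
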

\begin{proof}
The mean $\Exp [\XX_t]$, by the linearity of the gradient $\nabla f$, is GF.
Under the conditions given, the law of $\XX_t$ converges to a Gaussian variable centered at $\xx_*$.  Hence by Fatou's lemma and Jensen's inequality, the inequality follows.
\end{proof}
We emphasize that this conclusion applies even under general distribution shifts, so long as the risk remains a convex function. Still, the utility of Lemma~\ref{lemma:impossibility} may not be immediately clear, as it pertains to SLD and we have not yet established any concrete connection between SLD and the process of interest, SGD. Nor is it evident what form such a connection should take---the agreement between SGD and an SLD cannot occur at the level of individual states since the randomness from each process is not assumed to be coupled. Instead, the most we can hope for is that statistics of the processes agree. Specifically, we might hope that matching the noise structure of SGD with a careful choice of SLD will cause relevant statistics, like the population risk, to be equal.

It turns out that such a choice of SLD exists for convex quadratic problems in high dimensions, and is given by \textit{homogenized SGD} (HSGD), introduced simultaneously in \citep{mori2021logarithmic,paquette2021dynamics}. Both the empirical and population risks ($\mathcal{L}$, $\mathcal{R}$ resp.) of HSGD agree with the same of SGD in the high-dimensional limit (see Thm.~\ref{thm:lppp}). 
Mathematically, HSGD is the strong solution of the stochastic differential equation: 
\begin{equation}\label{eqF:HSQD}
    \dif \XX_t \defas -\gamma(t) \nabla \mathcal{L}(\XX_t) \dif t + \gamma(t) \sqrt{\tfrac{2}{n} \mathcal{L}(\XX_t) \nabla^2 \mathcal{L}(\XX_t) } \dif \BB_t,
    \quad\text{for quadratic } \mathcal{L},
\end{equation}
where again $\BB_t$ is a $d$-dimensional standard Brownian motion, $\gamma(t)$ is the learning rate schedule, and the initial condition is $\XX_0 = \xx_0$. Roughly, HSGD is a diffusion approximation to SGD that gains explanatory power when the \emph{dimensionality} is large. It does not require the step size $\gamma$ to be small. Note that as with other universality results, the details of the noise distribution are not relevant and only the second-order correlations contribute, which are carefully matched by HSGD to SGD.


The precise sense of the comparison requires us to evaluate low-dimensional statistics of the high-dimensional dynamics; ``low-dimensional'' must be effective, in that the  univariate statistics of the SGD iterates concentrate around the same statistic evaluated on HSGD. For understanding generalization or implicit regularization properties, a important statistic is the population risk, $\mathcal{R}$. 

\paragraph{Assumptions.} For all parts of our analysis to hold, the pair $(\AA, \bb)$ of the data matrix $\AA \in \mathbb{R}^{n \times d}$ and target vector $\bb \in \mathbb{R}^n$ must satisfy some \textit{quasi-random} assumptions---a set of deterministic conditions on the pair $(\AA, \bb)$ that are satisfied with high probability by natural classes of random matrix-vector pairs (see Appendix \ref{sec:quasi_random} for specifics).  We use the convention that the target and initialization vectors are bounded independent of $n$,  $\|\bb\|^2_2 \le C$ and $\|\xx_0\|_2^2 \le C$, respectively.  

We illustrate some examples below that we have shown to satisfy the quasi-random assumptions.
\begin{itemize}
    \item Gaussian linear regression.  Here the rows of $\AA$ are iid and drawn from a Gaussian with norm-bounded covariance $\Sigma$ and the target $\bb$ is drawn from a generative model, $\bb = \AA \widetilde{\xx} + \eeta$ for some unknown signal $\widetilde{\xx} \in \mathbb{R}^d$ and independent noise $\eeta \in \mathbb{R}^n$.
    \item Subgaussian linear designs.  In the example above, we can relax the Gaussian assumption to be of the form $\xx \Sigma^{1/2}$ for $\xx$ a vector of iid centered subgaussian random variables~\cite{zou2022risk,jain2018accelerating}.
    \item Gaussian random features with a linear ground truth~\citep{mei2019generalization,adlam2020understanding,Rahimi2008Random,adlam2022random}. Suppose $\AA$ is given by $\sigma( \XX \WW)$ for an iid standard Gaussian weight matrix $\WW$ and Gaussian data matrix $\XX$.  Suitable assumptions on the activation function $\sigma$ and the covariance $\Sigma$ of $\XX$ added.
\end{itemize}

\begin{assumption}\label{ass:risk}
The population risk $\mathcal{R} : \R^d \to \R$ is a quadratic, that is, it is a degree-2 polynomial or, equivalently, can be represented by 
\[\mathcal{R}(\xx) = \frac{1}{2} \xx^T \TT \xx + \uu^T \xx + c\]
for some $d \times d$ symmetric matrix $\TT$, vector $\uu \in \mathbb{R}^d$, and scalar $c \in \mathbb{R}$. We further assume that $\|\nabla^2 \mathcal{R}\|_{op} \leq C$, $\|\nabla \mathcal{R}(0)\|_2^2 \leq C$, and |$\mathcal{R}(0)| \leq C$.
\end{assumption}

A natural population risk is given by $\mathcal{R}(\xx) = \tfrac{1}{2} \EE[ (\aa \cdot \xx - b)^2]$ where $(\aa, b) \sim \mathcal{D}$. This distribution $\mathcal{D}$ may or may not be the same as the distribution that generated the data $[\AA ~|~ \bb]$ used in training. 

\ba{Something about high-dimensional limit?} \jp{Yes we ought to perhaps describe the setting, though not sure we need a technical assumption as I think/hope the conditions are embedded in the actual theorem statements?}

As we work in the high-dimensional limit, we suppose that $\gamma_k=\gamma(k/n)$ for a smooth, bounded function $\gamma(\cdot)$ such that $\gamma(t) \to \gamma \in [0, \infty)$ and $\widehat{\gamma} \defas \sup_{t \ge 0} \gamma(t) < \infty$.

\section{Main results}
\label{sec:main_results}
\begin{figure}[t]
    \centering
    \includegraphics[scale = 0.35]{./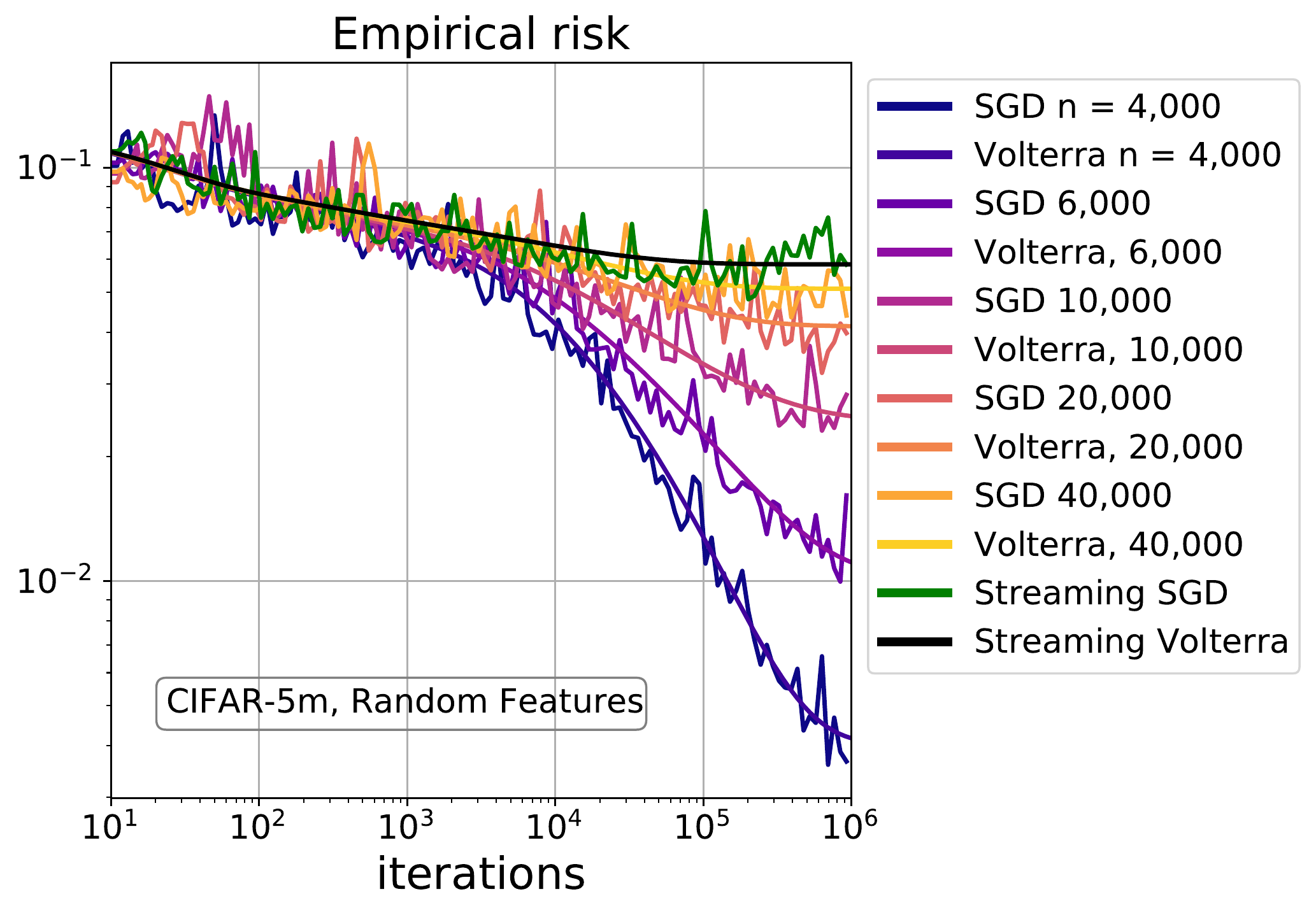}
     \includegraphics[scale =0.35]{./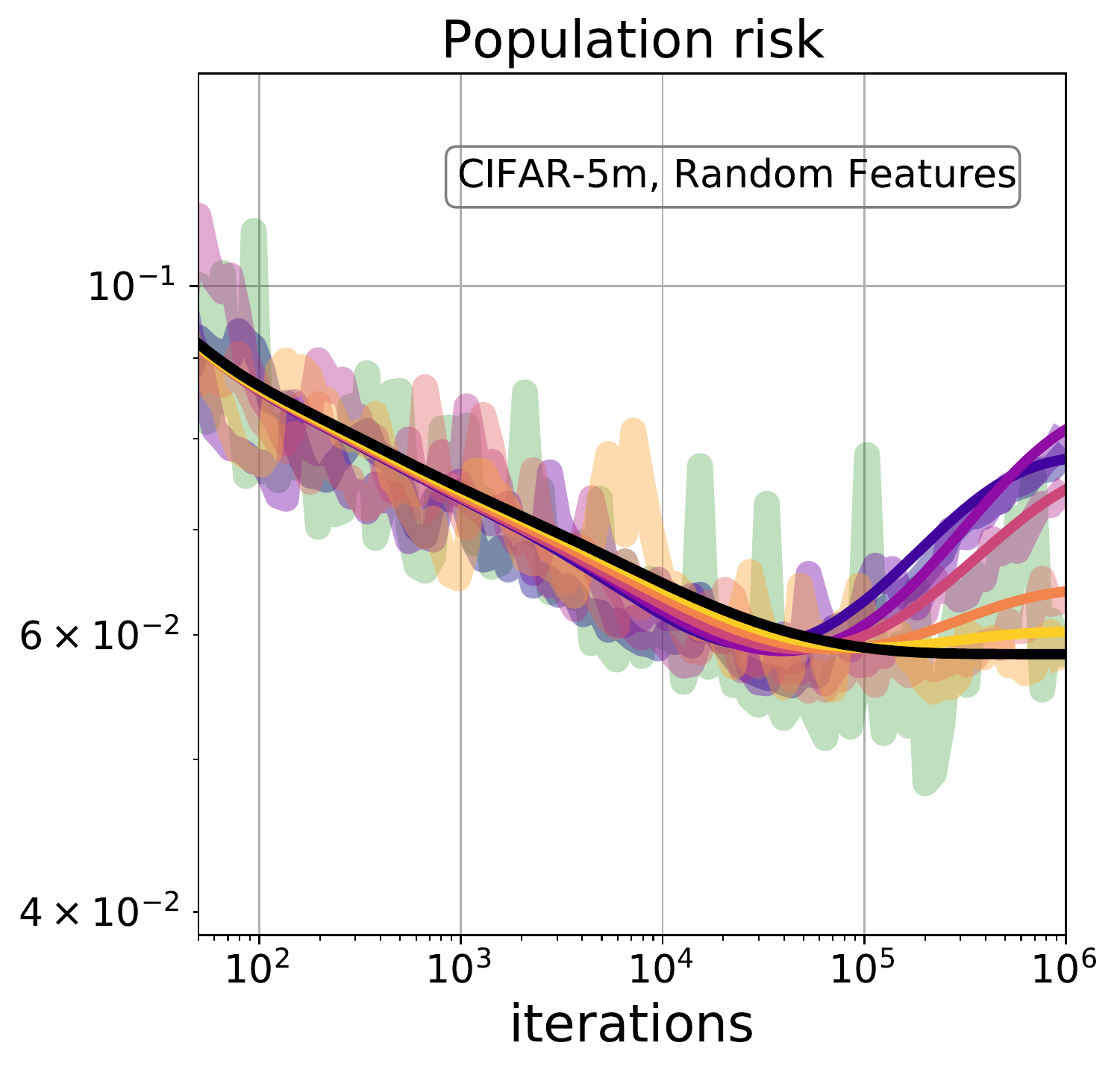}
    \caption{\textbf{Single runs of SGD vs.\ HSGD (Volterra) in streaming} on standarized CIFAR-5M \citep{nakkiran2021bootstrap} with car/plane class vector ($1,000,000$ samples); a standarized ReLu \eqref{eq:standarized_relu} random features model (see Appendix~\ref{sec: motivating_applications}) was applied with increasing number of samples $n$ and fixed $d=6000$. The predicted behavior from HSGD (denoted by Volterra) matches the performance of single runs of SGD for finite $n$ and streaming $(n = \infty)$. Shaded region (right) is the moving average of a single run of SGD. Empirical risk (left) increases monotonically with $n$ to its limit while population risk generally decreases with $n$.  Streaming corresponds to $n=\infty$ (see Sec.~\ref{sec:streaming_main}).
    For consistency across sample sizes, time is measured in iterations. Additional details in App~\ref{sec:numerical_simulations}.}
    \label{fig:CIFAR_5M_streaming}
\end{figure}
Our main results are analyzable (non-asymptotic) expressions for the empirical risk $\mathcal{L}$ and the population risk $\mathcal{R}$ of SGD at any time $t$ for the high-dimensional least squares problem~\eqref{eq:rr}. To begin, we first establish the following equivalence between SGD and HGSD.

\begin{theorem}[Equivalence of SGD and HSGD]\label{thm:lppp}
Suppose the pair $(\AA, \bb) \in \mathbb{R}^{n \times d} \times \mathbb{R}^d$ satisfy the quasi-random assumptions with $  d^\varepsilon \le n \le d^{1/\varepsilon} $ for some $\varepsilon \in (0,1]$. Let the iterates $\xx_t = \sgd_{\lfloor t\rfloor}$ be generated from multi-pass SGD Eq.~\eqref{eq:sgd} and $\XX_t$ be the solution of Eq.~\eqref{eqF:HSQD}.  
Then for any deterministic $T > 0$ and any $D>0,$ there is a $C>0$ such that
\[
\Pr \biggl[
\sup_{0 \leq t \leq T} \left \|
\begin{pmatrix}
\mathscr{L}( \xx_{\lfloor tn \rfloor }  )\\
\mathcal{R}(\xx_{\lfloor tn \rfloor }  ) 
\end{pmatrix} -  \begin{pmatrix}
\mathscr{L}(\XX_t)\\
\mathcal{R}(\XX_t)
\end{pmatrix} \right \|_2 > d^{-\varepsilon/2}
\biggr]
\leq Cd^{-D}.
\]
\end{theorem}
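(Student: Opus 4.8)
The plan is to prove the theorem by a martingale/coupling argument that tracks a small finite-dimensional set of quadratic statistics of both processes and shows they are uniformly close. The key observation is that the risk functionals $\mathscr{L}$ and $\mathcal{R}$ of either process are determined, via the spectral decomposition of $\AA^T\AA$, by how the ``residual'' $\xx - \xx_*$ is distributed across eigenspaces. Concretely, I would introduce the generalized quadratic observables $\zz_k(f) \defas (\xx_k - \xx_*)^T f(\AA^T\AA)(\xx_k - \xx_*)$ for a suitable finite collection of test functions $f$ (powers, resolvents $(\AA^T\AA - z)^{-1}$ evaluated on a net of spectral parameters $z$), together with the linear observables $\uu^T f(\AA^T\AA)(\xx_k - \xx_*)$ needed for the population risk. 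The first step is to write down the one-step update for each $\zz_k(f)$ under SGD \eqref{eq:sgd}: expanding the square produces a ``drift'' term matching the deterministic GF/GD contraction, plus a ``fluctuation'' term whose conditional mean (over $i_k$) contributes an extra $\tfrac{\gamma_k^2}{n}$-order piece, and plus a mean-zero martingale increment. Matching this conditional-mean recursion against the Itô formula applied to $f(\AA^T\AA)$-quadratics of the HSGD SDE \eqref{eqF:HSQD} is exactly how the noise coefficient $\sqrt{\tfrac{2}{n}\mathcal{L}(\XX_t)\nabla^2\mathcal{L}(\XX_t)}$ was designed; so the drifts of the two observable-vectors agree up to lower-order error.

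The second step is the stability/Gr\"onwall argument. Let $\Phi_k$ denote the vector of SGD observables at iteration $k$ and $\Psi_t$ the vector of HSGD observables. The discrete dynamics of $\Phi$ and the continuous dynamics of $\Psi$, once expressed in these coordinates, both take the form of a (random-coefficient) linear-in-$\Phi$ system with a forcing term that is itself a bounded function of the observables (since $\mathcal{L}(\XX_t)$ and $\tr(\AA^T\AA)/n$ are among them, under the quasi-random assumptions the relevant traces are $O(1)$). I would bound $\sup_{0\le k\le Tn}\|\Phi_k - \Psi_{k/n}\|$ by (i) the accumulated discretization error between the Euler-type step and the SDE, (ii) the accumulated drift-mismatch error, each of which is $O(d^{-c})$ with overwhelming probability using the quasi-random spectral control, and (iii) the martingale fluctuation, controlled by Freedman's/Azuma's inequality since each increment is bounded by $O(\gamma_k/\sqrt{n})\cdot\|\Phi_k\|^{1/2}\cdot\|\cdot\|$ and there are $O(n)$ of them, giving fluctuations of order $\widetilde{O}(n^{-1/2}) = \widetilde O(d^{-c})$; a union bound over the $O(\mathrm{poly}(d))$-size net of test functions $f$ preserves the $Cd^{-D}$ failure probability. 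The exponential Gr\"onwall factor coming from the linear system is $O(1)$ on $[0,T]$ because the spectrum of $\AA^T\AA$ is bounded (quasi-random assumption), so the learning rate stays in a stable regime; a stopping-time argument handles the a priori boundedness of $\|\Phi_k\|$ needed to make these estimates self-consistent.

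The final step is to recover $\mathscr{L}$ and $\mathcal{R}$ themselves from the resolvent observables: $\mathscr{L}(\xx) = \tfrac12 (\xx-\xx_*)^T\AA^T\AA(\xx-\xx_*) + \text{const}$ is literally $\tfrac12\zz_k(\lambda)$ plus a fixed scalar, and $\mathcal{R}(\xx) = \tfrac12(\xx-\xx_*)^T\TT(\xx-\xx_*) + \text{linear} + \text{const}$; when $\TT$ is simultaneously diagonalizable with $\AA^T\AA$ (or approximable in that class under the quasi-random assumptions relating the population covariance to $\AA$), $\mathcal{R}$ is a limit of linear combinations of the $\zz_k(f)$, so closeness of the observable vectors transfers to closeness of the risks, with the approximation error in the $f$-expansion folded into the $d^{-\varepsilon/2}$ budget. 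The supremum over $t\in[0,T]$ is upgraded from pointwise-on-a-net to uniform using the a.s.\ continuity of the HSGD paths and the uniform-in-$k$ Lipschitz-type bound on the discrete process (increments $O(\gamma_k/n)$ in drift).

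\textbf{Main obstacle.} The hard part will be the self-consistent control of the fluctuation term: the martingale increment for $\zz_k(f)$ scales with $\zz_k$ itself (it is a genuinely multiplicative noise, not additive), so the a priori bound on $\|\Phi_k\|$ feeds into the fluctuation bound which feeds back into Gr\"onwall. Closing this loop requires a carefully designed stopping time that exits as soon as any observable leaves a large-but-$O(1)$ ball, proving the exit probability is $Cd^{-D}$, and only then deducing the estimate holds on the whole interval. Secondarily, verifying that all the needed spectral quantities (traces of resolvents, $\uu$ projected onto eigenspaces, alignment of $\TT$ with $\AA^T\AA$) are simultaneously well-behaved is exactly what the quasi-random assumptions in Appendix~\ref{sec:quasi_random} are for, but threading a uniform net of spectral parameters $z$ through those assumptions while keeping the union bound at $\mathrm{poly}(d)$ cost is delicate.
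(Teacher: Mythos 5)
The paper does not actually prove Theorem~\ref{thm:lppp} in its own text: it states the argument is ``quite mathematically involved'' and relegates the entire proof to \citep[Theorem 1.3]{paquette2022homogenization}, so there is no in-paper proof to check your attempt against line by line. That said, your outline --- tracking resolvent-type quadratic observables of the iterates, matching the conditional one-step drift of SGD against the It\^o drift of HSGD (which is exactly how the noise coefficient in \eqref{eqF:HSQD} is engineered), controlling the martingale fluctuations via the delocalization conditions of Appendix~\ref{sec:quasi_random}, and closing with a stopping-time-plus-Gr\"onwall argument --- is essentially the strategy of that companion paper, so I would call this the same approach rather than a different route. The one point your sketch understates is the closure of the observable family under the dynamics: a single SGD step on a quadratic form generates new bilinear forms involving $\ee_i^T R(z;\AA\AA^T)\bb$ and off-diagonal resolvent entries, which is precisely why Assumptions~\ref{ass: laundry_list} and \ref{assumption: quadratics} are stated uniformly over an entire contour of spectral parameters (not a finite net of powers), and why the comparison is routed through a common deterministic limit rather than any pathwise coupling of the two processes.
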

As a result, for the rest of this paper, we can use homogenized SGD to analyze the behavior of multi-pass SGD. We will establish a similar conjecture for streaming SGD in Section~\ref{sec:streaming_main}. 



While the comparison of SGD to HSGD requires relatively strong assumptions on $\AA$ and $\bb$, the analysis of HSGD can be performed under weaker assumptions (no quasirandomness assumptions are needed). It suffices to suppose the problem is high dimensional in the following sense:
\begin{assumption}\label{ass:train}
The empirical risk $\mathscr{L}$ satisfies $\tr \nabla^2 \mathscr{L}=n$ and $0 \preceq \nabla^2 \mathscr{L} \preceq n d^{-\epsilon}$ for some $\epsilon > 0.$ 
\end{assumption}
This corresponds to the normalization where $\nabla^2 \mathscr{L} = \AA^T \AA$ and each row of $\AA$ is length $1$ and hence $\tr \nabla^2 \mathscr{L}=n$. \jp{Could be useful to have such a remark but I'm not seeing this normalization stated previously -- are we missing it or in Sec. 2 or...?}


Under Assumptions \ref{ass:risk} and \ref{ass:train}, the dynamics of the empirical and population risk under HSGD concentrate around a deterministic dynamical system driven by a Volterra integral equation: 

\global\mdfdefinestyle{exampledefault}{%
outerlinewidth=0pt,innerlinewidth=0pt,
roundcorner=5pt,backgroundcolor=myblue,
leftmargin=0.2cm,rightmargin=0.2cm
}

\newpage
\begin{mdframed}[style=exampledefault]
{\textbf{Volterra Dynamics, Multi-pass.} The following deterministic dynamical system is the high-dimensional limit for $\mathscr{L}(\XX_t)$ and $\mathcal{R}(\XX_t)$, respectively}
\begin{align} 
&\Psi_t
=
\mathscr{L}\bigl( 
\gf_{\Gamma(t)}\bigr)
+
\int_0^t 
K(t,s; \nabla^2 \mathscr{L}) 
\Psi_s
\dif s & \text{(Empirical risk)} \label{eqa:VLoss} \\ 
&\Omega_t
=
\mathcal{R}\bigl( 
\gf_{\Gamma(t)}\bigr)
+\int_0^t 
K(t,s; \nabla^2 \mathcal{R}) 
\Psi_s
\dif s & \text{(Population risk)} \label{eqa:PLoss}
\end{align}

    where the \textit{integrated learning rate $\Gamma$} and \textit{kernel $K$}, for any $d \times d$ matrix $\PP$, respectively are 
{\small
\begin{equation} \begin{aligned} \label{eqa:V} 
 \Gamma(t)\! = \!  \int_0^t \!  \gamma(s)\,\dif s,
\, 
\, 
K(t,s ; \PP) \!= \!
\tfrac{\gamma^2(s)}{n}
\tr
\bigl(
(\nabla^2 \mathscr{L})
\PP
\exp\bigl( -2(\nabla^2 \mathscr{L} + \delta \II_d)( \Gamma(t) - \Gamma(s))\bigr)
\bigr).
\end{aligned}
\end{equation} }
\end{mdframed}

\vspace{0.25cm}

\begin{theorem}[Concentration of HSGD around Volterra dynamics]\label{thm:trainrisk}
Under Assumptions \ref{ass:risk} and \ref{ass:train}, for any $T > 0$ and for any $D>0$ there exists sufficiently large $C>0$ such that for all $d>0$
\[
\Pr\biggl[
\sup_{0 \leq t \leq T}\biggl\|
\begin{pmatrix} \mathscr{L}(\XX_t) \\ \mathcal{R}(\XX_t)\end{pmatrix}
-
\begin{pmatrix}\Psi_t \\ \Omega_t \end{pmatrix}
\biggr\| > d^{-\epsilon/2} 
\biggr] \leq Cd^{-D},
\]
where $\Psi_t$ and $\Omega_t$ solve \eqref{eqa:VLoss} and \eqref{eqa:PLoss}.
\end{theorem}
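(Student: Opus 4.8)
The plan is to derive a closed evolution equation for the two statistics $\mathscr{L}(\XX_t)$ and $\mathcal{R}(\XX_t)$ directly from the SDE \eqref{eqF:HSQD} via Itô's formula, show that this evolution is a perturbed version of the Volterra system \eqref{eqa:VLoss}--\eqref{eqa:PLoss}, and then close the loop with a Grönwall-type argument after controlling the martingale and commutator error terms. First I would apply Itô's formula to the quadratic function $\mathscr{L}(\xx) = \tfrac12\|\AA\xx - \bb\|^2$ evaluated along $\XX_t$. Since $\mathscr{L}$ is quadratic, the drift contributes $-\gamma(t)\langle \nabla\mathscr{L}(\XX_t), \nabla\mathscr{L}(\XX_t)\rangle$ plus the regularization piece, the Itô correction contributes $\tfrac{\gamma(t)^2}{n}\mathscr{L}(\XX_t)\tr(\nabla^2\mathscr{L}\,\nabla^2\mathscr{L})$, and there is a martingale term $\dif M_t$ with quadratic variation of order $\tfrac{1}{n}$ times bounded quantities. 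The key structural observation is that the deterministic part of this equation, once one also tracks the ``cross'' statistic $\langle \AA\XX_t - \bb, \AA\gf_{\Gamma(t)} - \bb\rangle$ and more generally the family of statistics $\langle \XX_t - \gf_{\Gamma(t)}, \PP\, e^{-2(\nabla^2\mathscr{L}+\delta\II)(\Gamma(t)-\Gamma(s))}(\XX_t - \gf_{\Gamma(t)})\rangle$, forms a closed hierarchy whose solution is exactly the Volterra convolution with kernel $K$; the $e^{-2(\nabla^2\mathscr{L}+\delta)(\Gamma(t)-\Gamma(s))}$ factor arises from the gradient-flow semigroup solving $\dif\gf_t = -\nabla f(\gf_t)$, and the $\tfrac{\gamma^2(s)}{n}\tr(\cdot)$ prefactor is precisely the Itô diffusion coefficient integrated against that semigroup.

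The second step is to write $\mathscr{L}(\XX_t) = \mathscr{L}(\gf_{\Gamma(t)}) + (\text{deterministic integral term}) + \mathcal{E}_t$ where $\mathcal{E}_t$ collects (i) the martingale increments accumulated up to time $t$, and (ii) the ``self-coupling'' discrepancy between $\mathscr{L}(\XX_t)$ itself appearing inside the diffusion coefficient versus the Volterra unknown $\Psi_t$. Term (i) I would bound by the Burkholder--Davis--Gundy inequality: the quadratic variation is $\int_0^t \gamma(s)^4 n^{-2}\,(\text{bounded})\,\dif s \lesssim n^{-2}$ on $[0,T]$ using Assumption \ref{ass:train} ($\|\nabla^2\mathscr{L}\|_{op}\le nd^{-\epsilon}$ gives $\tr((\nabla^2\mathscr{L})^3) \le n \cdot (nd^{-\epsilon})^2$, etc., so after the $1/n^2$ from Itô the relevant variances are $O(d^{-2\epsilon})$ on bounded time horizons), together with a priori a.s.\ boundedness of $\mathscr{L}(\XX_t)$ on $[0,T]$ which follows from a preliminary stopping-time argument. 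Then a maximal inequality plus a union bound over a fine time-grid of spacing $d^{-10D}$, combined with the Lipschitz-in-$t$ modulus of continuity of all relevant quantities, upgrades the pointwise high-probability bound to the uniform $\sup_{0\le t\le T}$ statement with the stated $Cd^{-D}$ tail. For term (ii) and the population-risk component $\mathcal{R}(\XX_t)$, the same Itô computation applied to $\mathcal{R}$ gives $\mathcal{R}(\XX_t) = \mathcal{R}(\gf_{\Gamma(t)}) + \int_0^t K(t,s;\nabla^2\mathcal{R})\,\mathscr{L}(\XX_s)\,\dif s + (\text{martingale})$, i.e.\ the population risk is \emph{driven} by the empirical-risk statistic; notice this uses crucially that $\nabla\mathcal{R}$ evaluated along $\XX_t$ can be expanded in the same eigenbasis and that the diffusion coefficient of $\XX_t$ only involves $\nabla^2\mathscr{L}$, so the cross term produces $\tr(\nabla^2\mathscr{L}\cdot\nabla^2\mathcal{R}\cdot e^{-2(\ldots)})$ as in \eqref{eqa:V}. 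Finally I would close the argument: substituting the high-probability bound on $\mathcal{E}_t$ into the exact identity for $\mathscr{L}(\XX_t)$ yields $|\mathscr{L}(\XX_t) - \Psi_t| \le \int_0^t |K(t,s;\nabla^2\mathscr{L})|\,|\mathscr{L}(\XX_s) - \Psi_s|\,\dif s + d^{-\epsilon/2}$, and since $\sup_t \int_0^t |K(t,s;\nabla^2\mathscr{L})|\,\dif s < \infty$ on $[0,T]$ (the kernel is uniformly bounded by $\widehat\gamma^2 \tr((\nabla^2\mathscr{L})^2)/n \le \widehat\gamma^2 d^{-\epsilon}$-type estimates, in fact bounded independent of $d$), the generalized Grönwall inequality for Volterra equations gives $\sup_{0\le t\le T}|\mathscr{L}(\XX_t)-\Psi_t| \le C_T d^{-\epsilon/2}$; feeding this into the $\mathcal{R}$ identity then controls $\sup_t|\mathcal{R}(\XX_t)-\Omega_t|$ as well.

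The main obstacle I anticipate is making the self-consistency / fixed-point step rigorous without circularity: the diffusion coefficient in \eqref{eqF:HSQD} depends on $\mathscr{L}(\XX_t)$, which is the very quantity we are trying to show concentrates, so the Itô identity is genuinely nonlinear and one cannot simply ``read off'' the Volterra equation. The clean way around this is to first establish an \emph{a priori} crude bound ($\mathscr{L}(\XX_t) = O(1)$ with overwhelming probability on $[0,T]$, via the supermartingale-like structure of the drift and a stopping time), use that to linearize the noise coefficient at the cost of an $O(d^{-\epsilon/2})$ error, and only then invoke Grönwall; a secondary technical nuisance is that the Volterra kernel $K(t,s;\cdot)$ is not of convolution type in $t$ directly but in the time-changed variable $\Gamma(t)$, so all semigroup/commutator estimates should be carried out after the change of variables $\tau = \Gamma(t)$, under which $\gamma(\cdot)$ bounded away from degeneracy (or handled separately if $\gamma(t)\to 0$) keeps the equivalence of time scales under control.
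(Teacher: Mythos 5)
Your proposal follows essentially the same route as the paper's proof (Theorem~\ref{thm:conc} in Appendix~\ref{sec:VolterraConcentration}): a Duhamel/It\^o representation of $\XX_t$ around the gradient flow $\gf_{\Gamma(t)}$ whose quadratic expansion produces the Volterra kernel as the compensator of the noise term, an a priori high-probability bound on $\mathscr{L}(\XX_t)$ to break the circularity in the diffusion coefficient, martingale concentration combined with a union bound over a fine time mesh, a Gr\"onwall closure for $|\mathscr{L}(\XX_t)-\Psi_t|$, and the same representation for $\mathcal{R}(\XX_t)$ driven by $\Psi$. The only differences are cosmetic (the paper applies the integrating factor $\QQ_t=\exp((\AA^T\AA+\delta\II_d)\Gamma(t))$ to $\XX_t$ before expanding the quadratic, rather than tracking a hierarchy of statistics) plus a minor bookkeeping slip in your quadratic-variation scaling, which should come out as $O(d^{-\epsilon})$ via $\|\tfrac1n\nabla^2\mathscr{L}\|_{op}\le d^{-\epsilon}$ rather than $O(n^{-2})$; this does not affect the conclusion.
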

We give a formal proof of the concentration result in Appendix \ref{sec:VolterraConcentration} in Theorem \ref{thm:conc}.

\subsection{No implicit regularization from SGD}
\label{sec:implicit_regularization}
From \eqref{eqa:PLoss}, for convex $\mathcal{R}$ we observe immediately that the population risk $\Omega_t$ is only larger than the population risk of GF. 
Moreover, we have an explicit formula for the excess risk due to SGD noise,
\[
\underbrace{\Omega_t-\mathcal{R}\bigl( 
\gf_{\Gamma(t)}\bigr)}_{\text{excess risk due to SGD}} 
\defas \int_0^t 
K(t,s; \nabla^2 \mathcal{R}) 
\times \! \! \! \! \! \! \! \!\!\underbrace{\Psi_s}_{\text{limiting loss $\mathcal{L}$}} \! \! \! \! \! \! \! \!
\dif s.
\]
Note that the population risk of SGD tracks that of GF.  If GF overfits, SGD overfits as well; there is no statistical regularization due to the noise of SGD applied to empirical risk minimization (ERM).

We can further analyze the long-time behavior of SGD with exact limiting values for this excess risk.
\begin{theorem}[Time infinity risk values]\label{thm:eventualrisk}
If $\gamma(t) \to 0$ as $t\to\infty$ but $\Gamma(t) \to \infty$ (i.e.\ the usual Robbins-Monro setting), then the excess population risk of SGD over GF tends to $0$.
If on the other hand $\gamma(t) \to \gamma \in (0, 2(\tfrac{1}{n} \tr \big \{ \tfrac{(\AA^T\AA)^2}{\AA^T \AA + \delta \II_d} \big \})^{-1}$ , then with $\Psi_\infty$ given by the limiting empirical risk,
\[
\Psi_\infty 
=
\mathscr{L}\bigl( 
\gf_{\infty}\bigr)
\times
\biggl(
1
-
\frac{\gamma}{2n}
\tr
\biggl\{
\frac{
(\nabla^2 \mathscr{L})^2
}
{
\nabla^2 \mathscr{L} + \delta \II_d
}
\biggr\}
\biggr)^{-1}
\]
the excess risk due to SGD converges to
\[
\Omega_t-\mathcal{R}\bigl( 
\gf_{\Gamma(t)}\bigr)
\to \frac{\gamma  \Psi_\infty}{2n}  \times 
\tr
\biggl\{
\frac{
(\nabla^2 \mathcal{R})
(\nabla^2 \mathscr{L})
}
{
\nabla^2 \mathscr{L} + \delta \II_d
}
\biggr\}.
\]

\end{theorem}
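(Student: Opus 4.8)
The plan is to read off both limits directly from the Volterra equations \eqref{eqa:VLoss}--\eqref{eqa:PLoss}, after two preliminary reductions. First, since both hypotheses force $\Gamma(t)\to\infty$, the gradient flow $\gf_{\Gamma(t)}$ converges (on the range of $\nabla^2\mathscr L$ it relaxes exponentially; on $\ker\nabla^2\mathscr L$, if $\delta=0$, it is frozen but does not affect $\mathscr L$ or a quadratic $\mathcal R$ evaluated through $\AA\gf$ plus lower order terms only if $\mathcal R$ depends on $\xx$ only through... ) — more simply, $\gf_{\Gamma(t)}$ converges to some $\gf_\infty$ and hence $\mathscr L(\gf_{\Gamma(t)})\to \mathscr L(\gf_\infty)=:\mathscr L_\infty$ and $\mathcal R(\gf_{\Gamma(t)})\to\mathcal R(\gf_\infty)$. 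Second, I would record the a priori bound $\sup_{t\ge 0}\Psi_t<\infty$: on any compact interval it is Theorem~\ref{thm:trainrisk} (together with continuity of the Volterra solution), and globally it follows from the renewal structure of \eqref{eqa:VLoss} — in the Robbins--Monro case the kernel mass over a tail $[t_1,t]$ is eventually $\le\tfrac12$ because $\gamma(s)\to 0$, and in the constant-step case it is $\le\kappa<1$ with $\kappa$ as below. The key computational device is the change of variables $\sigma=\Gamma(s)$, which turns the kernel integral into
\[
\int_0^t K(t,s;\PP)\,\dif s
=\frac1n\int_0^{\Gamma(t)}\gamma(\Gamma^{-1}(\sigma))\,
\tr\!\bigl((\nabla^2\mathscr L)\PP\,e^{-2(\nabla^2\mathscr L+\delta\II_d)(\Gamma(t)-\sigma)}\bigr)\dif\sigma ,
\]
and I would use that the leading factor $\nabla^2\mathscr L$ confines the trace to $\mathrm{range}(\nabla^2\mathscr L)$, on which $\nabla^2\mathscr L+\delta\II_d\succeq(\lambda_{\min}^{+}+\delta)\II_d$ with $\lambda_{\min}^{+}>0$ the least positive eigenvalue, giving a uniform exponential decay of the integrand in $\Gamma(t)-\sigma$ that is valid even when $\delta=0$ and $\nabla^2\mathscr L$ is singular.

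For the \textbf{Robbins--Monro part}, I would split the $\sigma$-integral for $\Omega_t-\mathcal R(\gf_{\Gamma(t)})=\int_0^t K(t,s;\nabla^2\mathcal R)\Psi_s\,\dif s$ at $\sigma=\Gamma(t)-M$. The far part $\sigma\le\Gamma(t)-M$ is bounded by $\widehat\gamma\,C\,(\sup_s\Psi_s)\int_M^\infty e^{-2(\lambda_{\min}^{+}+\delta)\tau}\dif\tau$, which is small for $M$ large, uniformly in $t$. On the near part $\sigma\in[\Gamma(t)-M,\Gamma(t)]$ one has $\gamma(\Gamma^{-1}(\sigma))\le\sup_{s\ge \Gamma^{-1}(\Gamma(t)-M)}\gamma(s)\to 0$ as $t\to\infty$ (since $\Gamma(t)\to\infty$ forces $\Gamma^{-1}(\Gamma(t)-M)\to\infty$), so that part is at most $M\cdot o(1)\cdot C\cdot\sup_s\Psi_s$. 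Letting $t\to\infty$ then $M\to\infty$ gives $\Omega_t-\mathcal R(\gf_{\Gamma(t)})\to 0$, the first claim.

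For the \textbf{constant-step part}, set $\kappa(\PP):=\lim_{t\to\infty}\int_0^t K(t,s;\PP)\,\dif s$; by the displayed change of variables, dominated convergence (using the exponential decay), $\gamma(\Gamma^{-1}(\sigma))\to\gamma$, and $\int_0^\infty e^{-2\gamma(\nabla^2\mathscr L+\delta\II_d)u}\dif u=\tfrac1{2\gamma}(\nabla^2\mathscr L+\delta\II_d)^{-1}$ on the range, one gets $\kappa(\PP)=\tfrac{\gamma}{2n}\tr\bigl((\nabla^2\mathscr L)\PP(\nabla^2\mathscr L+\delta\II_d)^{-1}\bigr)$. In particular $\kappa:=\kappa(\nabla^2\mathscr L)=\tfrac{\gamma}{2n}\tr\frac{(\nabla^2\mathscr L)^2}{\nabla^2\mathscr L+\delta\II_d}<1$ precisely under the stated bound on $\gamma$ (recall $\nabla^2\mathscr L=\AA^T\AA$). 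To show $\Psi_t\to\Psi_\infty$, put $\overline\Psi:=\limsup_t\Psi_t$, $\underline\Psi:=\liminf_t\Psi_t$, both finite by the a priori bound; taking $\limsup$ in \eqref{eqa:VLoss} and splitting at $\Gamma(t)-M$ as above (far part $\to0$ by decay; near part $\le(\overline\Psi+\epsilon)\cdot(\text{mass}+o(1))$) yields, after $M\to\infty$ and $\epsilon\to0$, $\overline\Psi\le\mathscr L_\infty+\kappa\,\overline\Psi$, and symmetrically $\underline\Psi\ge\mathscr L_\infty+\kappa\,\underline\Psi$; since $\kappa<1$ this forces $\overline\Psi\le\mathscr L_\infty(1-\kappa)^{-1}\le\underline\Psi$, i.e. $\Psi_t\to\Psi_\infty=\mathscr L(\gf_\infty)\bigl(1-\tfrac{\gamma}{2n}\tr\frac{(\nabla^2\mathscr L)^2}{\nabla^2\mathscr L+\delta\II_d}\bigr)^{-1}$, matching the claimed formula. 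Finally, feeding $\Psi_s\to\Psi_\infty$ into \eqref{eqa:PLoss} and using the same split, $\Omega_t-\mathcal R(\gf_{\Gamma(t)})=\int_0^t K(t,s;\nabla^2\mathcal R)\Psi_s\,\dif s\to\Psi_\infty\,\kappa(\nabla^2\mathcal R)=\tfrac{\gamma\Psi_\infty}{2n}\tr\frac{(\nabla^2\mathcal R)(\nabla^2\mathscr L)}{\nabla^2\mathscr L+\delta\II_d}$, which is exactly the stated limit of the excess population risk.

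\textbf{Main obstacle.} The routine computations are the trace/exponential identities; the genuinely delicate points are (i) the \emph{uniform-in-$t$} bound $\sup_t\Psi_t<\infty$, which on $[0,\infty)$ is not a direct consequence of Theorem~\ref{thm:trainrisk} and must be extracted from the renewal/resolvent structure of \eqref{eqa:VLoss} — subcriticality $\kappa<1$ in the constant-step case, and ``the tail kernel mass is eventually small'' in the Robbins--Monro case — and (ii) justifying the interchange of $\limsup$/$\liminf$ (or plain limit) with the kernel integral. The change of variables $\sigma=\Gamma(s)$ together with the uniform exponential decay supplied by the $\nabla^2\mathscr L$-factor is what makes (ii) work: without it the near-diagonal window has unbounded Lebesgue length in the Robbins--Monro regime (since $\dif s=\dif\sigma/\gamma$ and $\gamma\to0$) and the squeeze argument would fail.
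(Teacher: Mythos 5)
Your proposal is correct. Its computational core coincides with the paper's proof (given for the more general Volterra SLD class in Theorem~\ref{thm:eventualrisk_SLD}): the change of variables $\sigma=\Gamma(s)$, the remark that the prefactor $\nabla^2\mathscr{L}$ restricts the trace to $\mathrm{range}(\nabla^2\mathscr{L})$ so the kernel decays exponentially even for singular $\nabla^2\mathscr{L}$ with $\delta=0$, and the dominated-convergence evaluation of the limiting kernel mass $\tfrac{\gamma}{2n}\tr\bigl((\nabla^2\mathscr{L})\PP(\nabla^2\mathscr{L}+\delta\II_d)^{-1}\bigr)$ all appear there. Where you genuinely differ is in establishing $\lim_{t\to\infty}\Psi_t$: the paper sandwiches the non-convolution kernel between two convolution kernels built from $\gamma\pm\varepsilon$, invokes the Gripenberg--Londen--Staffans machinery ($L^\infty$-type resolvents, the Volterra Gronwall inequality, nonnegativity of resolvents) to dominate $\Psi_t$ above and below by solutions of convolution equations, applies the renewal theorem to those, and finally sends $\varepsilon\to0$; you instead run a direct $\limsup/\liminf$ squeeze on the original equation after splitting the integral at $\Gamma(t)-M$, with the subcriticality $\kappa<1$ playing the same role in both arguments. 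Your a priori bound (local boundedness plus ``tail kernel mass eventually below one'', giving a contraction on the tail) is likewise an elementary substitute for the paper's decomposition of $[0,\infty)$ into finitely many intervals of kernel norm below one followed by the $L^\infty$-resolvent boundedness theorem. Your route is shorter and self-contained; the paper's buys a statement valid for the full Volterra SLD class (nonzero additive covariance $\mathscr{A}$, general $\mathscr{M}$), so the same limit formula is reused for the streaming and bootstrap results. Two details worth making explicit in your write-up: the $\liminf$ half of the squeeze needs $K(t,s;\nabla^2\mathscr{L})\ge 0$ and $\Psi_s\ge 0$ (both hold because the matrices in that kernel are simultaneously diagonalizable PSD functions of $\nabla^2\mathscr{L}$ and the forcing term is nonnegative), and your separate Robbins--Monro argument is in fact subsumed by the constant-step one with $\gamma=0$, which is how the paper phrases it.
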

There are a few conclusions to draw directly from this.  In the interpolation regime, that is where $\Psi_\infty = 0,$ there is no excess risk due to SGD and there is no need to send $\gamma$ to $0$.  Moreover, if the empirical risk $\Psi_\infty$ is small, the excess risk due to SGD is proportional to $\gamma \Psi_\infty$, and hence it is frequently orders of magnitude smaller than other potential sources of error.  Furthermore, the excess risk is affected by how similar the population and empirical risks are, in the large directions. Ridge regularization can substantially reduce excess risk due to SGD in cases where population risk has many small eigenvalues. In summary, either by sending $\gamma \to 0,$ working in the interpolation regime, or otherwise in a regime $\Psi_\infty$ is small, \emph{the excess risk incurred by running SGD is minimal}.

\subsection{Implicit conditioning of SGD}
\label{sec:implicit_conditioning}

In contrast, the algorithmic advantages of SGD are substantial.  To simplify the discussion, we consider only the case of constant learning rate $\gamma$.  In this case, the kernel in \eqref{eqa:VLoss} and \eqref{eqa:PLoss} simplifies to a convolution kernel, which has a much simpler theory.
To characterize the rates, we define 
\(
\lambda_{\min}
\)
as the smallest non-zero eigenvalue of $\nabla^2\mathscr{L}$.  Then for generic initial conditions, (in particular almost surely if $\XX_0$ is nonzero isotropic), GF has the following convergence rate
\[
\lim_{t \to \infty} \biggl(
\mathscr{L}(\gf_{\gamma t})
-
\mathscr{L}(\gf_\infty)
\biggr)^{1/t} = 
\begin{cases}
e^{-\gamma(\lambda_{\min}(\nabla^2 \mathscr{L}) + \delta)}, & \text{if } \delta > 0, \\
e^{-2\gamma \lambda_{\min}(\nabla^2 \mathscr{L})}, & \text{otherwise}.
\end{cases}
\]
Here we use the notation that $\lambda_{\min}(\HH)$ and $\lambda_{\max}(\HH)$ are the smallest and largest eigenvalues of the matrix $\HH$. The rate of convergence of $\Psi_t$ to $\Psi_\infty$ can be no faster than the underlying GF, given by the rate above.  On the other hand, for larger $\gamma$ the Volterra term in \eqref{eqa:VLoss} can frustrate the convergence.  The \emph{Malthusian exponent} of the convolution Volterra equation is given by
{\small \begin{equation}\label{eq:Malthusian}
    \lambda_* \!= \!
    \inf\biggl\{
    x : 
    1
    = \! \!\int_0^\infty \! \!\! \! \!e^{ xt}K(t; \nabla^2\mathscr{L})\,\dif t 
    \defas \gamma^2 \!\!
    \int_0^\infty \! \!\! \!\!
    e^{ xt}
    \tr
    \bigl(
    \bigl(\nabla^2\mathscr{L}\bigr)^2
    \exp\bigl( -2\gamma(\nabla^2\mathscr{L}+ \delta \II_d )t\bigr)
    \bigr)\,\dif t 
    \biggr\}.
\end{equation}}
As $\nabla^2\mathscr{L}$ is finite dimensional, we have that $\lambda_* \leq 2\gamma(\lambda_{\min}(\nabla^2 \mathscr{L}) + \delta),$ owing to the divergence of the integral as $x$ approaches this value from below.  Note that in principal the Malthusian exponent can be negative, in which case SGD is \emph{divergent}. The Malthusian exponent gives the effective rate of convergence of constant learning rate SGD.  Define
\begin{equation}\label{eq:ratefunction}
\Xi(\gamma)
\defas
\begin{cases}
\min\{
\gamma(\lambda_{\min}(\nabla^2 \mathscr{L})+\delta),
\lambda_*(\gamma)
\}
& \text{ if } \delta > 0, \\
\lambda_*(\gamma)
& \text{ if } \delta = 0.
\end{cases}
\end{equation}

\begin{theorem}[SGD convergence rates, average-case]\label{thm:sgdrates}
Then the rates of convergence of both the empirical and population risk are controlled by this parameter
\[
\lim_{t \to \infty} \bigl(
\Psi_t
-
\Psi_\infty
\bigr)^{1/t} = e^{-\Xi(\gamma)}
=
\lim_{t \to \infty} \bigl(
\Omega_t
-
\Omega_\infty
\bigr)^{1/t}.
\]
Furthermore, when $\gamma = n(\tr(\AA^T\AA))^{-1}$, we have the rate guarantee $\Xi(\gamma) \geq \tfrac{\lambda_{\min}(\nabla^2 \mathscr{L}) +\delta}{2}.$
\end{theorem}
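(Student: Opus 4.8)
The plan is to specialize to constant learning rate, where \eqref{eqa:VLoss} becomes a scalar renewal (convolution Volterra) equation and the statement reduces to classical renewal theory with exponential tilting. With $\gamma$ constant, $\Gamma(t)=\gamma t$, so the kernel of \eqref{eqa:V} depends only on $t-s$; call it $K(u;\PP)$. Then $\Psi_t=\mathscr{L}(\gf_{\gamma t})+\int_0^t K(t-s;\nabla^2\mathscr{L})\,\Psi_s\,\dif s$, while $\Omega_t=\mathcal{R}(\gf_{\gamma t})+\int_0^t K(t-s;\nabla^2\mathcal{R})\,\Psi_s\,\dif s$ is an explicit functional of the already-solved $\Psi$. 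Diagonalizing $\nabla^2\mathscr{L}=\AA^T\AA$ gives $\kappa:=\int_0^\infty K(u;\nabla^2\mathscr{L})\,\dif u=\tfrac{\gamma}{2n}\tr\{(\nabla^2\mathscr{L})^2(\nabla^2\mathscr{L}+\delta\II)^{-1}\}$, which is $<1$ exactly in the regime of Theorem~\ref{thm:eventualrisk}; hence the equation has unique bounded limit $\Psi_\infty=\mathscr{L}(\gf_\infty)/(1-\kappa)$, and subtracting the fixed-point identity, $\psi_t:=\Psi_t-\Psi_\infty$ solves $\psi_t=g_t+\int_0^t K(t-s;\nabla^2\mathscr{L})\,\psi_s\,\dif s$ with forcing $g_t=\big(\mathscr{L}(\gf_{\gamma t})-\mathscr{L}(\gf_\infty)\big)-\Psi_\infty\int_t^\infty K(u;\nabla^2\mathscr{L})\,\dif u$.

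Next I would extract the exponent. Both parts of $g_t$ are exponentially small: $\int_t^\infty K(u;\nabla^2\mathscr{L})\,\dif u$ at rate $2\gamma(\lambda_{\min}(\nabla^2\mathscr{L})+\delta)$, and $\mathscr{L}(\gf_{\gamma t})-\mathscr{L}(\gf_\infty)$ at the gradient-flow rate recalled in the excerpt ($2\gamma\lambda_{\min}(\nabla^2\mathscr{L})$ if $\delta=0$, the linear term vanishing by stationarity of $\gf_\infty$ for $\mathscr{L}$; $\gamma(\lambda_{\min}(\nabla^2\mathscr{L})+\delta)$ if $\delta>0$, for generic $\xx_0$). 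Thus $g_t$ decays at a rate $\rho_{\rm GF}\le 2\gamma(\lambda_{\min}(\nabla^2\mathscr{L})+\delta)$. Meanwhile $\hat K(z):=\int_0^\infty e^{zt}K(t;\nabla^2\mathscr{L})\,\dif t=\tfrac{\gamma^2}{n}\sum_{\mu>0}\mu^2\big(2\gamma(\mu+\delta)-z\big)^{-1}$ (sum over nonzero eigenvalues $\mu$ of $\nabla^2\mathscr{L}$ with multiplicity) is real-analytic and increasing on $(-\infty,2\gamma(\lambda_{\min}+\delta))$, equals $\kappa<1$ at $0$, and blows up at the right endpoint, so it crosses $1$ at the unique Malthusian exponent $\lambda_*$ of \eqref{eq:Malthusian}, and $1-\hat K$ has no zero with $\Re z<\lambda_*$ since $|\hat K(z)|\le\hat K(\Re z)$. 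Multiplying the centered equation by $e^{\lambda_* t}$ turns $K(\cdot;\nabla^2\mathscr{L})$ into a probability density (mass $\hat K(\lambda_*)=1$): the key renewal theorem applies when $\rho_{\rm GF}>\lambda_*$ and gives $\psi_t\sim c\,e^{-\lambda_* t}$; when $\rho_{\rm GF}<\lambda_*$, $\hat g/(1-\hat K)$ has its rightmost singularity at $\rho_{\rm GF}$ and a Bromwich shift gives $\psi_t\sim c\,e^{-\rho_{\rm GF}t}$; at equality a polynomial prefactor appears but the exponent is unchanged. In every case the exponent is $\min\{\rho_{\rm GF},\lambda_*\}=\Xi(\gamma)$ of \eqref{eq:ratefunction} (using $\rho_{\rm GF}=\gamma(\lambda_{\min}+\delta)$ for $\delta>0$, and $\rho_{\rm GF}=2\gamma\lambda_{\min}\ge\lambda_*$ for $\delta=0$), and $c\ne0$ for generic $\xx_0$; hence $\lim_{t\to\infty}(\Psi_t-\Psi_\infty)^{1/t}=e^{-\Xi(\gamma)}$. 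An identical argument for $\Omega_t-\Omega_\infty$, with $\psi$ now entering through $K(\cdot;\nabla^2\mathcal{R})$, yields the matching population-risk statement.

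For the rate guarantee, set $\gamma=n/\tr(\AA^T\AA)$; under the normalization $\tr\nabla^2\mathscr{L}=n$ this is $\gamma=1$, and in general $\gamma\ge\tfrac12$ since $\tr(\AA^T\AA)\le 2n$. By monotonicity of $\hat K$ it suffices to check $\hat K\big(\tfrac{\lambda_{\min}+\delta}{2}\big)\le 1$, which forces $\lambda_*\ge\tfrac{\lambda_{\min}+\delta}{2}$; combined with $\gamma(\lambda_{\min}+\delta)\ge\tfrac12(\lambda_{\min}+\delta)$ this gives $\Xi(\gamma)\ge\tfrac{\lambda_{\min}(\nabla^2\mathscr{L})+\delta}{2}$. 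The bound on $\hat K$ follows from $\lambda_{\min}\le\mu$: then $2\gamma(\mu+\delta)-\tfrac{\lambda_{\min}+\delta}{2}\ge(\mu+\delta)(2\gamma-\tfrac12)>0$, so $\hat K\big(\tfrac{\lambda_{\min}+\delta}{2}\big)\le\tfrac{\gamma^2}{n(2\gamma-1/2)}\sum_{\mu>0}\tfrac{\mu^2}{\mu+\delta}\le\tfrac{\gamma^2\tr(\AA^T\AA)}{n(2\gamma-1/2)}=\tfrac{n}{2n-\tr(\AA^T\AA)/2}\le 1$, the last step being $\tr(\AA^T\AA)\le 2n$.

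The main obstacle is the renewal/Tauberian step: converting the analytic structure of the Laplace transform into a genuine one-term asymptotic with a provably nonzero leading coefficient, for a strictly defective convolution kernel driven by exponentially small, possibly sign-changing forcing. This is precisely where the ``generic initial conditions'' hypothesis enters --- in the gradient-flow-dominated branch one needs the slowest eigenmode of $\gf$ to overlap $\xx_0$ so that $c\ne 0$ --- and it is also where one must check that the gradient-flow contribution to $\Omega_t$ does not produce a decay faster, or spuriously slower, than $e^{-\Xi(\gamma)t}$. The remaining ingredients --- the convolution reduction, the spectral evaluations of $\kappa$ and $\hat K$, and the inequality in the rate guarantee --- are routine.
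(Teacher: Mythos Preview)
The paper does not actually prove this theorem: in Appendix~\ref{sec:convergence_rate} it restates the result (as Theorem~\ref{thm:sgdrates_1}) and simply cites \cite[Theorem~1.2]{paquetteSGD2021}. So there is no in-paper argument to compare against; your proposal supplies the details the paper delegates to that reference.

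Your approach is the natural one and is, in outline, what the cited work does: reduce to a convolution Volterra equation at constant step size, center at $\Psi_\infty$, identify the Malthusian exponent $\lambda_*$ via the Laplace transform of the kernel, tilt by $e^{\lambda_* t}$ to apply the key renewal theorem when the tilted forcing is integrable, and fall back on a Bromwich/Tauberian argument when the gradient-flow decay dominates. You correctly isolate the one genuinely delicate step --- turning the Laplace-transform structure into an honest one-term asymptotic with a nonzero constant, and tracking where the ``generic $\xx_0$'' hypothesis is used to rule out cancellation in the slowest eigenmode.

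One small point on the rate guarantee: your phrase ``in general $\gamma\ge\tfrac12$ since $\tr(\AA^T\AA)\le 2n$'' is not quite right as stated. The inequality $\tr(\AA^T\AA)\le 2n$ is not a general fact; it holds here because the paper works under Assumption~\ref{ass:train}, which normalizes $\tr\nabla^2\mathscr{L}=\tr(\AA^T\AA)=n$, giving $\gamma=1$ exactly. With that normalization your chain of inequalities goes through cleanly (indeed $\hat K\bigl(\tfrac{\lambda_{\min}+\delta}{2}\bigr)\le \tfrac{2}{3}$). Note also that the appendix version of the bound, $\Xi(\gamma)\ge \lambda_{\min} n/(2\tr(\AA^T\AA))$, is stated without $\delta$ and without the trace normalization; the main-text version you are proving implicitly uses Assumption~\ref{ass:train}, so you should invoke it explicitly rather than appealing to an unproven bound on $\tr(\AA^T\AA)$.
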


\begin{figure}[t]
  \begin{minipage}[c]{0.57\textwidth}
    \includegraphics[width=\textwidth]{./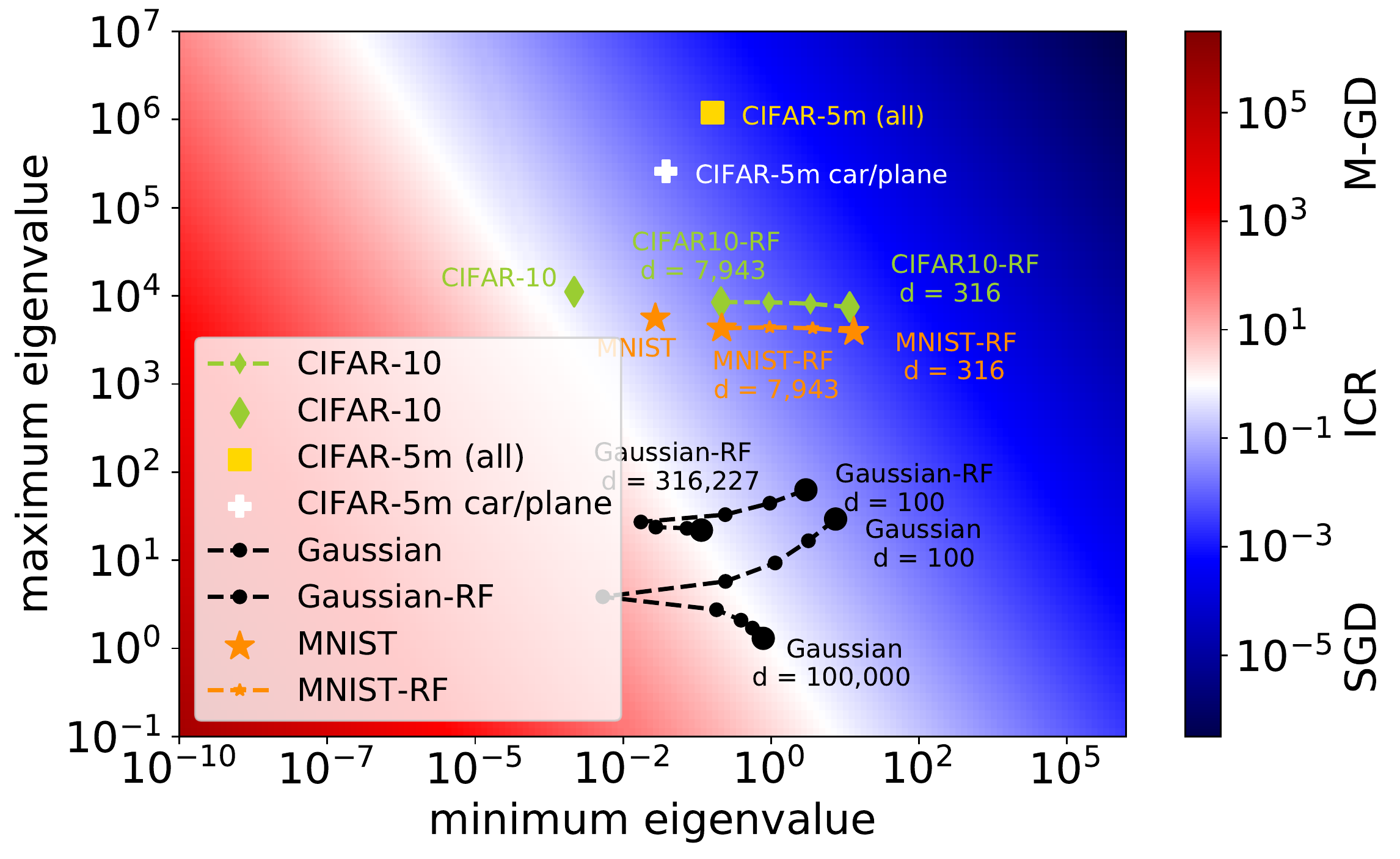}
  \end{minipage}\hfill
  \begin{minipage}[c]{0.41\textwidth}
    \caption{ICR as function of largest and smallest eigenvalues of $\AA$ with trace normalized to be $1$; blue (smaller, SGD favored) and red (larger, full batched M-GD favored). Points indicate ICR for image datasets (MNIST, CIFAR-10, CIFAR-5m) as well as their images under ReLu random feature maps of various dimensions (lines). Gaussian random features ($n=2000$, $n_0= 100$, various $d$) and Gaussian data ($n = 2000$, various $d$) shows ICR for over- and under-parameterized models. Natural datasets tend to favor SGD.} \label{fig:ICR}
  \end{minipage}
\end{figure}

The major difference between SGD and full batch methods such as momentum gradient descent (M-GD; see Appendix \ref{sec:momentum} for definitions) is that they have different sensitivities to the Hessian spectrum of the empirical risk $\mathscr{L}(\xx)=\frac12\|\AA\xx-\bb\|^2$.  Define the condition numbers
\[
\kappa \defas
\frac
{\lambda_{\max}(\nabla^2\mathscr{L}) + \delta}
{\lambda_{\min}(\nabla^2\mathscr{L}) + \delta}
\quad
\text{and}
\quad
\overline{\kappa}
\defas
\frac
{\tfrac{1}{n}\tr(\nabla^2\mathscr{L})}
{\lambda_{\min}(\nabla^2\mathscr{L}) + \delta}.
\]
The first of these is the classical condition number of the ridge problem, while the second is the averaged condition number that regulates the behavior of SGD in the high-dimensional limit.
M-GD has been long established to have a rate of convergence, with proper tuning, controlled by the square root of the condition number \citep{Polyak1962Some}, which is known to be optimal amongst first order algorithms.
\begin{theorem}[Convergence rates for M-GD]\label{thm:mgdrates}
For isotropic random initialization $\xx_0$ or noisy $\bb$, $\delta > 0$, and strictly convex population risk $\mathcal{R}$ 
\[
\bigl(\mathscr{L}(\mgd_k)-\mathscr{L}(x_*)\bigr)^{1/k}
\Asto[k] 
\biggl(
\frac{\sqrt{\kappa}-1}
{\sqrt{\kappa}+1}
\biggr)
\quad\text{and}
\quad
\bigl(\mathcal{R}(\mgd_k)-\mathcal{R}(x_*)\bigr)^{1/k}
\Asto[k] 
\biggl(
\frac{\sqrt{\kappa}-1}
{\sqrt{\kappa}+1}
\biggr).
\]
\end{theorem}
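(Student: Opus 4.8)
The plan is to analyze momentum gradient descent \eqref{eq:gd} as a linear recursion and diagonalize it in the eigenbasis of $\nabla^2\mathscr{L}=\AA^T\AA$. With constant learning rate $\gamma$ and momentum $\Delta$ properly tuned (the Polyak heavy-ball choices $\gamma = 4/(\sqrt{\lambda_{\max}+\delta}+\sqrt{\lambda_{\min}+\delta})^2$ and $\Delta = ((\sqrt\kappa-1)/(\sqrt\kappa+1))^2$), the error $\ee_k \defas \mgd_k - x_*$ satisfies $\ee_{k+1} = (\II_d - \gamma(\nabla^2\mathscr{L}+\delta\II_d))\ee_k + \Delta(\ee_k - \ee_{k-1})$ with no residual forcing term because $x_*$ is the exact ridge minimizer. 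Writing this as a $2d$-dimensional first-order recursion $(\ee_{k+1},\ee_k) = \MM(\ee_k,\ee_{k-1})$, the matrix $\MM$ block-diagonalizes over the eigenvalues $\mu$ of $\nabla^2\mathscr{L}$ into $2\times2$ blocks whose spectral radius is exactly $\sqrt{\Delta} = (\sqrt\kappa-1)/(\sqrt\kappa+1)$ for every $\mu+\delta \in [\lambda_{\min}+\delta,\lambda_{\max}+\delta]$ at the optimal tuning (the standard heavy-ball computation: the block eigenvalues are complex conjugates of modulus $\sqrt\Delta$ throughout the spectral interval).

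Next I would convert this spectral-radius statement into the claimed almost-sure root asymptotics. Since $\mathscr{L}(\mgd_k)-\mathscr{L}(x_*) = \tfrac12 \ee_k^T \nabla^2\mathscr{L}\, \ee_k$ and similarly $\mathcal{R}(\mgd_k)-\mathcal{R}(x_*) = \tfrac12 \ee_k^T \TT \ee_k + (\nabla\mathcal{R}(x_*))^T\ee_k$, both quantities are controlled by $\|\ee_k\|$, which decays like $(\sqrt\Delta)^k$ up to polynomial-in-$k$ factors coming from the Jordan structure of the critical $2\times2$ blocks. The $1/k$-th root kills these polynomial factors, giving the upper bound $\limsup_k(\cdots)^{1/k}\le (\sqrt\kappa-1)/(\sqrt\kappa+1)$ deterministically. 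For the matching lower bound one needs that the initial error $\ee_0 = \xx_0 - x_*$ (and $\ee_{-1}$, set equal to $\ee_0$) has nonnegligible overlap with the slowest-decaying mode; this is where the hypotheses enter. For isotropic random $\xx_0$, with probability one the projection of $\ee_0$ onto any fixed eigendirection of $\nabla^2\mathscr{L}$ is nonzero, so $\|\ee_k\|\ge c(\sqrt\Delta)^k/\mathrm{poly}(k)$; the "noisy $\bb$'' alternative plays the same role for the $\mathcal{R}$-statement because then $x_*$ has a generic (a.s.\ nonzero) component in every eigendirection, and strict convexity of $\mathcal{R}$ ($\TT\succ0$) ensures $\ee_k^T\TT\ee_k$ does not accidentally vanish faster than $\|\ee_k\|^2$.

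A technical point to handle carefully: at the exact optimal tuning the $2\times2$ block at a generic eigenvalue is diagonalizable with complex eigenvalues of modulus $\sqrt\Delta$, so $\|\ee_k\|$ is genuinely of order $(\sqrt\Delta)^k$ (oscillating), and the $1/k$-th root converges without even needing to discard polynomial factors for those modes — it is only at the two endpoints $\lambda_{\min},\lambda_{\max}$ that the block becomes a Jordan block with a factor of $k$. Since there are finitely many eigenvalues, only finitely many such factors appear, all harmless under the root. The main obstacle, and the only place where real care is needed, is the lower bound: one must verify that the combination of the random-initialization (or noisy-target) hypothesis with strict convexity of $\mathcal{R}$ genuinely forces the relevant quadratic forms to track $(\sqrt\Delta)^{2k}$ from below, i.e.\ that no cancellation in $\ee_k^T\nabla^2\mathscr{L}\,\ee_k$ or $\ee_k^T\TT\ee_k + (\nabla\mathcal{R}(x_*))^T\ee_k$ destroys the rate on a positive-probability event; a short argument isolating the dominant eigendirection(s) and using that a nondegenerate Gaussian (or noise-induced) component is a.s.\ nonzero there closes this. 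Everything else is the classical heavy-ball spectral analysis of \citet{Polyak1962Some} specialized to the ridge-regularized least-squares Hessian.
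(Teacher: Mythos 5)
Your proposal is correct and takes essentially the same route as the paper: the appendix proves this theorem via the classical Polyak heavy-ball spectral analysis, expressed there through residual (Chebyshev) polynomials $P_k$ evaluated on the spectrum of $\AA^T\AA+\delta\II_d$ rather than through your $2\times 2$ companion-matrix blocks, but the two computations are identical and yield the rate $\sqrt{m}=(\sqrt\kappa-1)/(\sqrt\kappa+1)$ at the optimal tuning. Your explicit handling of the lower bound --- that isotropic initialization or noisy $\bb$ gives a.s.\ nonzero overlap with the dominant eigendirections, and strict convexity of $\mathcal{R}$ prevents the quadratic form from vanishing faster than $\|\ee_k\|^2$ --- is the part the paper leaves implicit under ``generic initial conditions'' (deferring to prior work), and you have it right.
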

\noindent See Appendix \ref{sec:momentum} for elaboration. 

In light of Theorems \ref{thm:sgdrates} and \ref{thm:mgdrates}, we can define the \emph{implicit-conditioning ratio} as
\[
\operatorname{ICR}
\defas
\frac{\overline{\kappa}}{\sqrt{\kappa}}
\approx
\log
\biggl(
\frac{\sqrt{\kappa}-1}
{\sqrt{\kappa}+1}
\biggr)
\overline{\kappa},
\]
which measures the efficiency of SGD over M-GD in that SGD with constant learning rate $n/\tr(\nabla^2\mathscr{L})$ trains in an ICR-multiple of the number of epochs that M-GD requires (lower is better for SGD).

Problems favor SGD when there are large outlier eigenvalues, a common feature of Hessian spectra in practice~\citep{sagun2016eigenvalues,sagun2017empirical,abdel2007adaptive}. Indeed, if the largest eigenvalues are on the same order as the \emph{unnormalized trace}, individual SGD iterates are as effective as full-batch gradient.  In contrast, when the Hessian spectrum is tightly packed, which is less common in practice but can occur after some preprocessing techniques or e.g. for uncorrelated Gaussian samples, then M-GD is favored. See Fig.~\ref{fig:ICR}.

\section{Streaming SGD} \label{sec:streaming_main}
In this section we introduce constant learning rate ($\gamma(t) \equiv \gamma$) streaming SGD. Let
$(\aa_k, b_k)_{k=1}^\infty$ be iid samples from a $\R^d \times \R$-dimensional distribution $\mathcal{D}$. 
Streaming SGD using data from $\mathcal{D}$, which we denote $\mathcal{D}$-SGD, is the algorithm 
\begin{equation} \begin{aligned} \label{eq:opsgd}
    \ss_{k+1} = \ss_k
    -\gamma \aa_k (\aa_k \cdot \ss_k - b_k) 
    \quad\text{for}\quad k \in \{0,1,2,\dots\}.
\end{aligned} \end{equation}
This naturally describes one-pass SGD, in which data points are used only once. If $\mathcal{R}$ is the expected risk (i.e. population risk), and if $\mathcal{R}$ is given by $\frac{1}{2} \EE (\aa \cdot \xx - b)^2$ with $(\aa, b) \sim \mathcal{D}$, then $\mathcal{D}$-SGD is directly solving the population risk minimization. This is an idealized situation as one does not have access to infinite data in practice.

\begin{figure}
    \centering
    \includegraphics[scale = 0.35]{./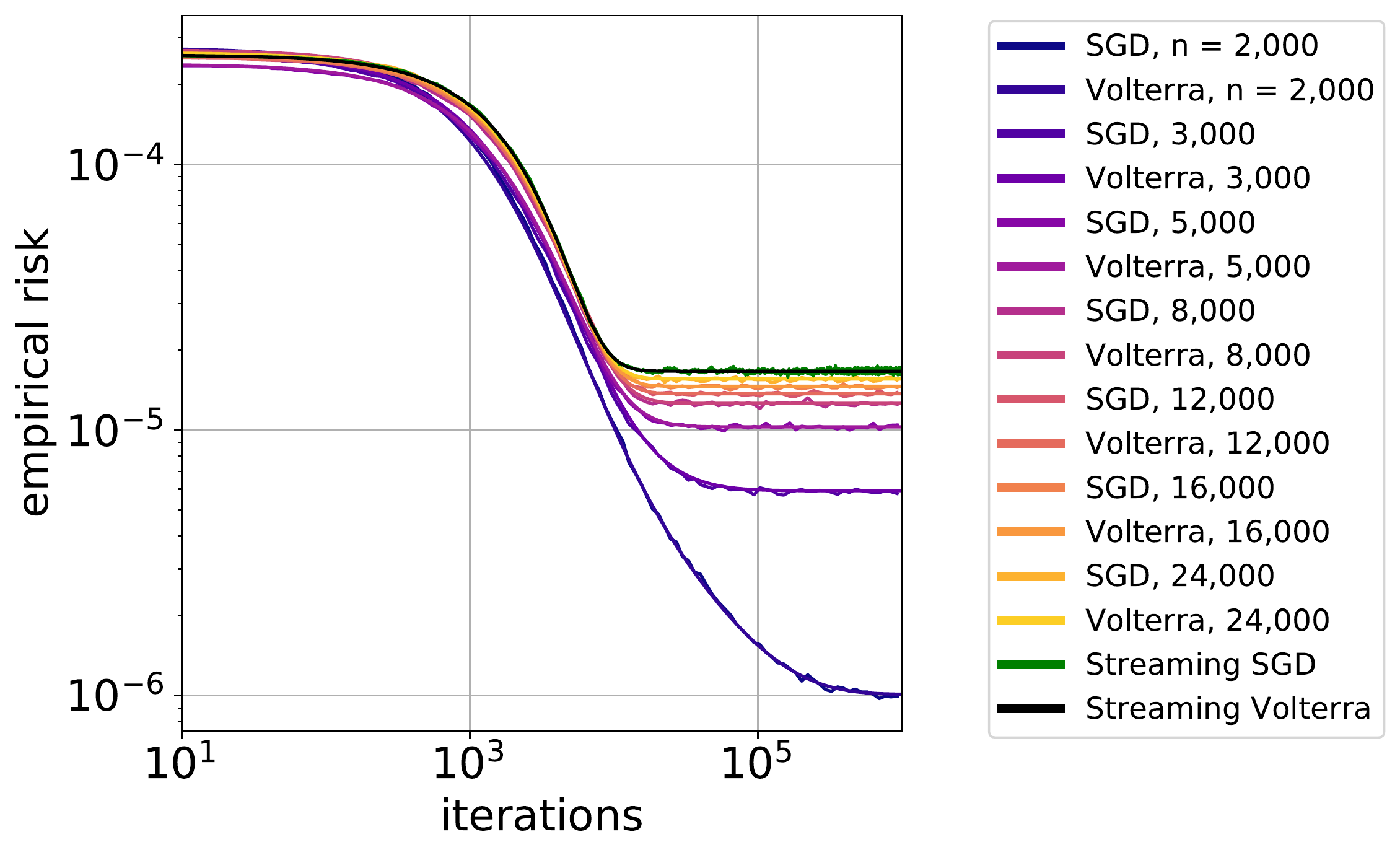}
     \includegraphics[scale =0.35]{./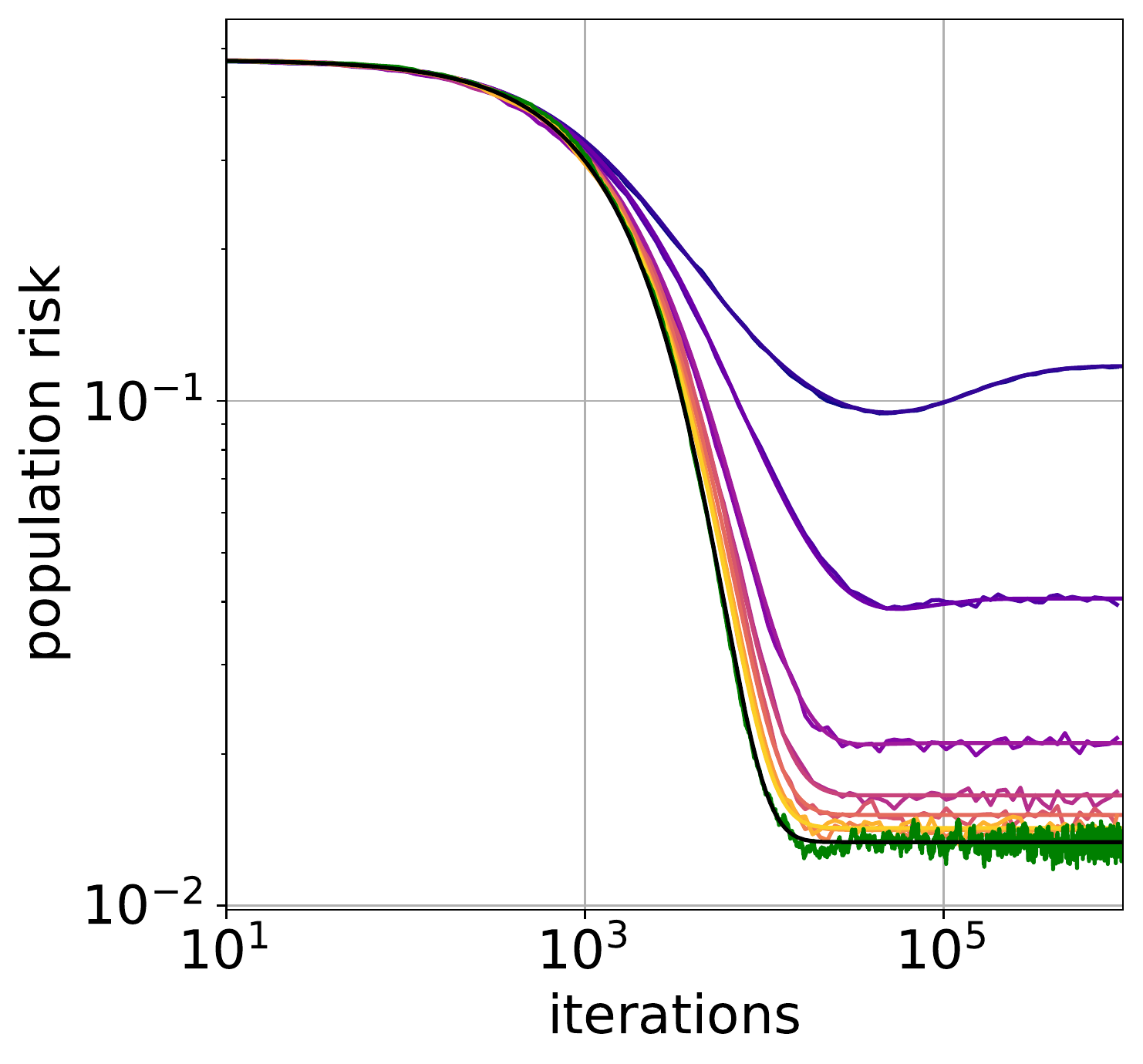}
    \caption{\textbf{Single runs of SGD vs.\ HSGD (Volterra) for a simple Gaussian linear regression problem}, with increasing number of samples $n$ and $d=2000$.  Empirical risk (left) increases monotonically with $n$ to a limit while population risk (right) decreases monotonically in $n$.  Streaming corresponds to $n=\infty$.  Covariance of Gaussian samples is $\II_d$, with target given by $b=a \cdot \beta + \eta Z$ for $\eta=0.2$ and $\beta$ a unit vector and $Z\sim N(0,1)$.  For consistency across sample sizes, time is measured in iterations.}
    \label{fig:gaussianstreaming}
\end{figure}

\paragraph{Deterministic behavior of streaming risks and comparison to HSGD.}

$\mathcal{D}$-SGD can encompass multi-pass SGD by letting $(\aa_k, b_k)_{k=1}^n$ be the first $n$ iid samples from $\mathcal{D}$ and considering  $\widehat{\mathcal{D}}$-SGD for
\begin{equation}\label{eq_dhat}
\widehat{\mathcal{D}}_n
\defas
\frac{1}{n}\sum_{k=1}^n \delta_{(\aa_k, b_k)}.
\end{equation}
This means that a sample from $\widehat{\mathcal{D}}_n$ (\emph{conditionally} on the dataset) is distributed like $(\aa_{I}, b_{I})$, where $I$ is an uniformly random choice of index.  

To enable a comparison of streaming SGD to multi-pass SGD, we suppose for some $n$ the matrix $[\AA \, |\, \bb]$ is $n \times (d+1)$ matrix whose rows are iid samples from $\mathcal{D}$. We construct the empirical risk $\mathcal{L}$ from these $n$ samples as in \eqref{eq:lsq}. We then define the streaming loss
\[
\mathscr{S}(\xx) \defas 
\tfrac{1}{2}\Exp (\aa \cdot \xx - b)^2
=
\tfrac{1}{2n} \Exp \| \AA \xx - \bb\|^2
=
\tfrac{1}{n}
\Exp \mathscr{L}(\xx),
\quad\text{where}\quad (\aa,b) \sim \mathcal{D}.
\]
The associated homogenized SGD representation, which we call homogenized $\mathcal{D}$-SGD, is
\begin{equation}\label{eqa:VSLD}
\dif \YY_t \defas
-\gamma(\nabla {\mathscr{S}}(\YY_t)) \dif t
+ \gamma\sqrt{2\mathscr{S}(\YY_t) (\nabla^2 \mathscr{S}) }\dif \BB_t
\quad\text{for}\quad t \geq 0.
\end{equation}

This naturally leads to Volterra dynamics in which $\nabla^2 \mathscr{L}$ is replaced by $\nabla^2 \mathscr{S}$ in Eqs.~\eqref{eqa:VLoss}-\eqref{eqa:V} whose solution we denote with $\Psi_t^s$ and $\Omega_t^s$ (see Appendix \ref{sec:sld_concentration}).

We prove a weak equivalence between HSGD and homogenized $\mathcal{D}$-SGD in the following theorem. 
\begin{theorem}\label{thm:wkstreaming} Suppose $\XX_0 = \YY_0$. After rescaling time, 
\[\lim_{n \to \infty} \XX_{t/n} = \YY_t \quad \text{uniformly on compact sets of time a.s.}\]
Furthermore, GF converges, $\gf_{\gamma t/n} \to \sgf_{\gamma t}$, and the Volterra equations converge, $\Psi_{t/n} \to \Psi_t^s$ and $\Omega_{t/n} \to \Omega_t^s$, uniformly on compact sets of time.
\end{theorem}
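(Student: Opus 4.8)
The plan is to treat the streaming limit $n\to\infty$ as a diffusive rescaling of the multi-pass HSGD equation \eqref{eqF:HSQD}, and show that, after setting $t \mapsto t/n$, the drift and diffusion coefficients converge pathwise to those of homogenized $\mathcal{D}$-SGD \eqref{eqa:VSLD}. First I would change variables in \eqref{eqF:HSQD}: write $\widetilde{\XX}_t \defas \XX_{t/n}$ and use the scaling property of Brownian motion, $\BB_{t/n} \law \tfrac{1}{\sqrt n}\widetilde{\BB}_t$ for a new standard Brownian motion $\widetilde{\BB}$, to obtain
\[
\dif \widetilde{\XX}_t = -\tfrac{\gamma}{n}\nabla\mathscr{L}(\widetilde{\XX}_t)\dif t + \tfrac{\gamma}{n}\sqrt{\tfrac{2}{n}\mathscr{L}(\widetilde{\XX}_t)\nabla^2\mathscr{L}(\widetilde{\XX}_t)}\cdot\sqrt n\,\dif\widetilde{\BB}_t.
\]
Now substitute $\nabla\mathscr{L} = n\,\nabla\mathscr{S}$, $\nabla^2\mathscr{L} = n\,\nabla^2\mathscr{S}$, and $\mathscr{L} = n\,\mathscr{S}$ (the identities defining $\mathscr{S}$ in the excerpt, interpreted at the level of the empirical objects built from the $n$ samples). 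The drift becomes $-\gamma\nabla\mathscr{S}(\widetilde{\XX}_t)$, and the diffusion coefficient becomes $\gamma\sqrt{2\mathscr{S}(\widetilde{\XX}_t)\nabla^2\mathscr{S}(\widetilde{\XX}_t)}$ exactly, so that the rescaled multi-pass HSGD \emph{is the same SDE as} homogenized $\mathcal{D}$-SGD driven by the empirical operators $\nabla^2\mathscr{S}_n$ built from the first $n$ samples. Then the only remaining content is the law-of-large-numbers statement that these empirical operators converge to their population counterparts as $n\to\infty$: $\nabla^2\mathscr{S}_n \to \nabla^2\mathscr{S}$, $\nabla\mathscr{S}_n(0)\to\nabla\mathscr{S}(0)$, $\mathscr{S}_n(0)\to\mathscr{S}(0)$ a.s., which is standard given the iid-row assumption and the boundedness of second moments implicit in Assumption~\ref{ass:risk}.

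Second I would invoke a continuity (stability) estimate for SDEs: if the coefficients of a family of SDEs converge (a.s., locally uniformly, with a common Lipschitz modulus coming from $\|\nabla^2\mathscr{S}\|_{op}\le C$ and from $\mathscr{L}$ being quadratic), and the initial data agree ($\XX_0 = \YY_0$), then the solutions converge a.s. uniformly on compact time intervals. Concretely I would use a Grönwall argument applied to $\Exp\sup_{s\le t}\|\widetilde{\XX}_s - \YY_s\|^2$ (or its a.s.\ version via a localization/stopping-time argument to handle the mild super-linearity of $\sqrt{\mathscr{S}(\xx)\nabla^2\mathscr{S}}$ as a function of $\xx$), together with the Burkholder–Davis–Gundy inequality for the martingale part; the square-root nonlinearity is $\tfrac12$-Hölder in $\mathscr{S}(\xx)$ but $\mathscr{S}(\xx)$ itself is a nice quadratic, so on the (a.s.\ finite) event that the paths stay in a ball of radius $R$, the diffusion coefficient is genuinely Lipschitz, and standard strong-uniqueness/continuity machinery applies. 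This yields $\lim_n \XX_{t/n} = \YY_t$ a.s., uniformly on compacts.

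Third, for the deterministic pieces: $\gf_{\gamma t/n}\to\sgf_{\gamma t}$ follows from the identical rescaling applied to the ODE \eqref{eqn:GF} — rescaling time by $n$ turns $-\nabla\mathscr{L} = -n\nabla\mathscr{S}$ into $-\gamma\nabla\mathscr{S}$ — plus continuous dependence of ODE solutions on the (converging) vector field, which is again Grönwall. The convergence of the Volterra solutions $\Psi_{t/n}\to\Psi^s_t$ and $\Omega_{t/n}\to\Omega^s_t$ I would get by inspecting the kernel \eqref{eqa:V}: under $t\mapsto t/n$, one has $\Gamma(t/n) = \tfrac{\gamma t}{n}$ for constant learning rate, so $\Gamma(t/n)-\Gamma(s/n) = \tfrac{\gamma}{n}(t-s)$, and $K(t/n,s/n;\PP) = \tfrac{\gamma^2}{n}\tr\!\big(n\nabla^2\mathscr{S}\cdot\PP'\cdot e^{-2n(\nabla^2\mathscr{S}+\delta/n)(t-s)\gamma/n}\big)$ where the extra factors of $n$ combine so that the rescaled kernel converges to the streaming kernel $\gamma^2\tr(\nabla^2\mathscr{S}\,\PP''\,e^{-2\gamma(\nabla^2\mathscr{S})(t-s)})$ built from $\nabla^2\mathscr{S}$; after carefully tracking how $\delta$ and the normalizations rescale, the source terms $\mathscr{L}(\gf_{\Gamma(t/n)})\to\mathscr{S}(\sgf_{\gamma t})$ and $\mathcal{R}(\gf_{\Gamma(t/n)})\to\mathcal{R}(\sgf_{\gamma t})$ converge by the previous step, and a Grönwall/fixed-point argument for the (linear) Volterra equation propagates the convergence of data and kernel to convergence of solutions on compacts.

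The main obstacle I expect is \emph{book-keeping the normalizations and the $\delta$-scaling correctly} so that all the factors of $n$ in the drift, diffusion, ridge term, and Volterra kernel cancel to leave exactly the streaming objects — the substance is a clean diffusive rescaling plus a law of large numbers, but getting the constants to line up (in particular whether the ridge parameter in the streaming limit is $\delta$, $\delta/n$, or $0$, and how $\tr\nabla^2\mathscr{L}=n$ interacts with $\tr\nabla^2\mathscr{S}=O(1)$) is where the care is needed. The secondary technical point is making the SDE-stability argument genuinely a.s.\ rather than in-expectation, which requires a localization/stopping-time step to control the super-linear-looking diffusion coefficient $\gamma\sqrt{2\mathscr{S}(\YY_t)\nabla^2\mathscr{S}}$; since $\mathscr{S}$ is quadratic this is routine once the paths are confined to a ball, but it must be stated.
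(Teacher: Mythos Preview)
Your approach is correct and matches the paper's, which in fact gives essentially no proof beyond the single sentence ``This is an immediate consequence of the law of large numbers due to which $\tfrac{1}{n}\mathscr{L}$ converges almost surely to $\mathscr{S}$,'' explicitly leaving a deeper mathematical investigation as an open question. Your rescaling computation---showing that $\widetilde{\XX}_t=\XX_{t/n}$ solves the streaming SDE with the empirical operators $\mathscr{S}_n=\tfrac{1}{n}\mathscr{L}$, then invoking LLN plus SDE/ODE/Volterra stability---is exactly the natural way to flesh out that one-line sketch, and your identification of the $\delta$-scaling bookkeeping (note the streaming kernel $K^{\text{s}}$ in the paper has no ridge term, consistent with $\delta/n\to 0$) and the localization needed for the square-root diffusion are the right caveats; you have supplied considerably more detail than the paper itself does.
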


We expect this to hold in much greater generality as suggested by Figures~\ref{fig:CIFAR_5M_streaming} and \ref{fig:gaussianstreaming}. This is an immediate consequence of the law of large numbers due to which $\tfrac{1}{n} \mathcal{L}$ converges almost surely to $\mathcal{S}$; the applications in which we are typically interested would take time large as a function of $n$ (the numerical results are extremely strong, see Figures \ref{fig:gaussianstreaming} and \ref{fig:CIFAR_5M_streaming}), and we leave a deeper mathematical investigation of this point as an open question.  We also note that for $t$ very large with $n$, there is likely another behavior that takes hold.  For example, when $t = n$, observe that $\XX_{t/n}$ will have used approximately $ (1-1/e)n$ samples, whereas $\YY_n$ will have used $n$, and hence we expect a breakdown in the connection. Other works also examined the expected risk of streaming including \citep{bordelon2022learning,ziyin2022strength} but not as the limit of the dynamics of multi-pass SGD as we have done.


\paragraph{Features of generalization.}

While there are connections between streaming and multi-pass SGD, certain behavior is only accessible in the multi-pass setting. For example, early stopping can be a useful ingredient in avoiding overfitting when learning overparameterized models. However, late-time overfitting is only observable with multi-pass SGD and does not occur for streaming SGD (see the nonmonotonicity in population risk in Fig.~\ref{fig:gaussianstreaming}). 

Similarly, comparing streaming and multi-pass SGD has been suggested as a method for analyzing generalization. The bootstrap risk was introduced in \citep{nakkiran2021bootstrap} as an intepretable component in a decomposition of the population risk. It is defined as the excess risk of SGD for ERM $\mathscr{L}$ with $n$ iid samples from $\mathcal{D}$ when compared to $\mathcal{D}$-SGD, i.e.
\begin{equation}\label{eq:Boot}
    \underbrace{
    \Omega_{t/n}
    -\Omega_t^{\text{s}}
    }_{\text{bootstrap risk}} 
    =
    \underbrace{
    \mathcal{R}\bigl( 
    \gf_{\gamma t/n}\bigr)
    -
    \mathcal{R}\bigl( 
   \sgf_{\gamma t }\bigr)
    }_{\text{model risk}} 
    +
    \underbrace{
    \int_0^{t/n}
    K(t/n,u; \nabla^2 \mathcal{R}) 
    \Psi_u
    \dif u
    -
    \int_0^t 
    K^{\text{s}}(t,u; \nabla^2 \mathcal{R}) 
    \Psi^{\text{s}}_u
    \dif u
    }_{\text{SGD bootstrap risk}}.
\end{equation}
Thus the Volterra equations can be used to give an exact expression for the bootstrap risk. In particular, we can predict the iteration at which the bootstrap risk becomes large as the streaming and multi-pass risks bifurcate. This allows for quantitative prediction of the stopping time in Claim 1 of \cite{nakkiran2021bootstrap}, a central conjecture of their paper.

On taking time to infinity, we can further evaluate the SGD bootstrap risk.  
\begin{theorem}\label{thm:bootstraprisk}
If $\gamma \le \min\{2 ( \frac{1}{n} \tr(\AA^T\AA))^{-1}, 2 (\tr \nabla^2 \mathscr{S})^{-1}\}$, then the limiting bootstrap risk is given by
\[
\lim_{t \to \infty}
\Omega_{t/n}
-\Omega_t^{\text{s}}
=
\mathcal{R}\bigl( 
\gf_{\infty}\bigr)
-
\mathcal{R}\bigl( 
\sgf_{\infty}\bigr)
+
\frac{ \gamma \Psi_\infty }{2n}
\times
\tr
\biggl\{
\frac{
(\nabla^2 \mathcal{R})
(\nabla^2 \mathscr{L})
}
{
\nabla^2 \mathscr{L} + \delta \II_d
}
\biggr\}
\]
Here $\Psi_\infty$ is the limiting training losses given by Theorem \ref{thm:eventualrisk}, i.e.
\[
\begin{aligned}
\Psi_\infty 
&=
\mathscr{L}\bigl( 
\gf_{\infty}\bigr)
\times
\biggl(
1
-
\frac{\gamma}{2n}
\tr
\biggl\{
\frac{
(\nabla^2 \mathscr{L})^2
}
{
\nabla^2 \mathscr{L} + \delta \II_d
}
\biggr\}
\biggr)^{-1}.
\end{aligned}
\]
\end{theorem}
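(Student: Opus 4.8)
The plan is to evaluate the limit $t \to \infty$ of the bootstrap-risk decomposition~\eqref{eq:Boot} term by term, reducing everything to the time-infinity analysis of the multi-pass Volterra dynamics (Theorem~\ref{thm:eventualrisk}) and an analogous statement for the streaming Volterra dynamics. The key observation is that under the stated step-size bound $\gamma \le \min\{2(\tfrac1n\tr(\AA^T\AA))^{-1}, 2(\tr\nabla^2\mathscr{S})^{-1}\}$, \emph{both} the multi-pass and streaming constant-learning-rate Volterra equations are in the convergent regime covered by Theorem~\ref{thm:eventualrisk}, so that $\Psi_u \to \Psi_\infty$ and $\Psi^{\text{s}}_u \to \Psi^{\text{s}}_\infty$ with $\Psi^{\text{s}}_\infty$ given by the same closed form but with $\nabla^2\mathscr{L}$ replaced by $\nabla^2\mathscr{S}$ (and, since streaming solves population-risk minimization directly, with $\mathscr{L}(\gf_\infty)$ replaced by $\mathscr{S}(\sgf_\infty)$).

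The steps, in order: (i) For the ``model risk'' term $\mathcal{R}(\gf_{\gamma t/n}) - \mathcal{R}(\sgf_{\gamma t})$, take $t \to \infty$ directly: $\gf_{\gamma t/n} \to \gf_\infty$ and $\sgf_{\gamma t} \to \sgf_\infty$ (both gradient flows converge since their Hessians are positive semidefinite and the relevant limits exist by the same argument used for $\xx_*$ in Lemma~\ref{lemma:impossibility}), giving $\mathcal{R}(\gf_\infty) - \mathcal{R}(\sgf_\infty)$. (ii) For the multi-pass ``SGD bootstrap'' integral $\int_0^{t/n} K(t/n,u;\nabla^2\mathcal{R})\Psi_u\,\dif u$, apply Theorem~\ref{thm:eventualrisk} verbatim: its limit is $\tfrac{\gamma\Psi_\infty}{2n}\tr\{(\nabla^2\mathcal{R})(\nabla^2\mathscr{L})(\nabla^2\mathscr{L}+\delta\II_d)^{-1}\}$. (iii) For the streaming integral $\int_0^t K^{\text{s}}(t,u;\nabla^2\mathcal{R})\Psi^{\text{s}}_u\,\dif u$, run the same computation with $\nabla^2\mathscr{L}\mapsto\nabla^2\mathscr{S}$ and $\delta = 0$ (streaming SGD in~\eqref{eq:opsgd} carries no ridge term), and use that the population risk $\mathcal{R}$ of streaming equals $\mathscr{S}$ itself when $\mathcal{D}$ is the data-generating distribution, so that $\nabla^2\mathcal{R} = \nabla^2\mathscr{S}$ and the trace factor collapses: $\tr\{(\nabla^2\mathscr{S})(\nabla^2\mathscr{S})(\nabla^2\mathscr{S})^{-1}\} = \tr(\nabla^2\mathscr{S})$. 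Combined with $\Psi^{\text{s}}_\infty = \mathscr{S}(\sgf_\infty)(1 - \tfrac{\gamma}{2n}\tr\{\ldots\})^{-1}$ evaluated for streaming, one checks that the streaming excess-risk term is exactly the excess risk of homogenized $\mathcal{D}$-SGD over $\sgf$ already accounted for inside $\Omega^{\text{s}}_\infty$, so subtracting $\Omega^{\text{s}}_\infty$ in~\eqref{eq:Boot} removes the streaming Volterra contribution and leaves only the two terms in the claimed formula. (iv) Assemble: the limiting bootstrap risk is $\mathcal{R}(\gf_\infty) - \mathcal{R}(\sgf_\infty) + \tfrac{\gamma\Psi_\infty}{2n}\tr\{(\nabla^2\mathcal{R})(\nabla^2\mathscr{L})(\nabla^2\mathscr{L}+\delta\II_d)^{-1}\}$, and substitute the closed form for $\Psi_\infty$ from Theorem~\ref{thm:eventualrisk}.

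The main obstacle I anticipate is step (iii): one must be careful that the streaming excess-risk integral really does cancel against the corresponding piece inside $\Omega^{\text{s}}_t$, and this requires checking that the streaming Volterra equation~\eqref{eqa:VSLD} is genuinely in the convergent regime \emph{and} that its time-infinity limit is governed by the streaming analogue of Theorem~\ref{thm:eventualrisk} — which in turn hinges on the two-sided step-size constraint (the condition on $\tr\nabla^2\mathscr{S}$ being exactly what keeps the streaming Malthusian exponent positive). A secondary subtlety is justifying the interchange of the $t\to\infty$ limit with the Volterra integrals: since the convolution kernel $K^{\text{s}}(t,u;\cdot)$ is integrable and $\Psi^{\text{s}}_u$ is bounded and convergent, dominated convergence applies, but one should state the uniform-integrability bound explicitly. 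Everything else is bookkeeping and direct substitution of the closed-form limits already provided by Theorem~\ref{thm:eventualrisk}.
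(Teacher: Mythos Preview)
Your overall strategy matches the paper's: the paper proves Theorem~\ref{thm:bootstraprisk} by applying the general constant-learning-rate Volterra limit (Theorem~\ref{thm:sgdlimit}) once to the multi-pass dynamics (with $\mathscr{M}=\tfrac{1}{n}\nabla^2\mathscr{L}$, $\mathscr{A}=0$) and once to the streaming dynamics (with $\mathscr{M}=\nabla^2\mathscr{S}$, $\mathscr{A}=0$, $\delta=0$), then subtracting. Your steps (i), (ii), and (iv) are fine and amount to exactly that.

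Step (iii), however, is circular. You argue that the streaming integral ``is exactly the excess risk of homogenized $\mathcal{D}$-SGD over $\sgf$ already accounted for inside $\Omega^{\text{s}}_\infty$, so subtracting $\Omega^{\text{s}}_\infty$ in~\eqref{eq:Boot} removes the streaming Volterra contribution.'' But \eqref{eq:Boot} \emph{is} the expansion of $\Omega_{t/n}-\Omega^{\text{s}}_t$: the streaming integral appears there exactly once, with a minus sign, and there is nothing further against which it could cancel. The correct move is simply to evaluate its $t\to\infty$ limit via the streaming instance of Theorem~\ref{thm:sgdlimit}, which gives
\[
\int_0^t K^{\text{s}}(t,u;\nabla^2\mathcal{R})\,\Psi^{\text{s}}_u\,\dif u
\;\longrightarrow\;
\frac{\gamma}{2}\,\Psi^{\text{s}}_\infty\,\tr\bigl\{(\nabla^2\mathcal{R})\,\Pi_{\nabla^2\mathscr{S}}\bigr\},
\qquad
\Psi^{\text{s}}_\infty=\frac{\mathscr{S}(\sgf_\infty)}{1-\tfrac{\gamma}{2}\tr\nabla^2\mathscr{S}}\,.
\]
Your side assumption $\nabla^2\mathcal{R}=\nabla^2\mathscr{S}$ collapses the trace to $\tr\nabla^2\mathscr{S}$ but does not make the term vanish; it vanishes only when $\mathscr{S}(\sgf_\infty)=0$ (realizable streaming). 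The stated formula in Theorem~\ref{thm:bootstraprisk} omits this streaming contribution, and the paper's own proof (a one-line reference to Theorem~\ref{thm:sgdlimit}) does not address it either --- but whatever the resolution, the ``already accounted for'' cancellation you propose is not it. Carry the streaming limit through honestly and let the fourth term stand.
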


\section{Conclusion.}

Using a specific type of SLD (called HSGD) that matches the second-order correlations in the noise of SGD, we demonstrated that their empirical and population risks match in the high-dimensional limit. Moreover, the risks of HSGD behavior deterministically, as described by a Volterra equation. With this connection, we investigated the benefits of SGD on a convex objective. While there is no statistical benefit to generalization from the noise of SGD, in overparameterized, interpolating settings little is lost compared to GD. Moreover, when computational restrictions are imposed, SGD can be radically faster than GD because of its dependence on a different condition number of the Hessian. We characterized this speed up using the ICR, which when calculated for datasets common in deep learning clearly favors SGD. This should highlight the difficulty in studying implicit regularization for SGD empirically: any experiment necessarily has a finite computational budget and may find lower population risks with SGD simply via its improved conditioning. Finally, we demonstrated limitations in using streaming SGD alone as a tool for studying generalization.

As future work, a proper theoretical connection between streaming and multi-pass SGD remains to be made.  A major outstanding problem (both theoretically and empirically) is extending the analysis above to non-quadratic losses, both train and test, and especially to other high-dimensional problems not in the kernel regime. Finally, data augmentation can naturally be considered by randomly augmenting each sample from $\widehat{\mathcal{D}}_n$ in Eq.~\eqref{eq_dhat}.

\bibliographystyle{plainnat}
\bibliography{references}

\newpage

\appendix

\begin{center}
\LARGE{Implicit Regularization or Implicit Conditioning?\\ Exact Risk Trajectories of SGD in High Dimensions}\\
\vspace{0.5em}\Large{Supplementary material \vspace{0.5em}}

\end{center}
The appendix is organized into five sections as follows:
\begin{enumerate}
\item Appendix~\ref{sec:quasi_random} expands upon the assumptions/setting around \eqref{eq:rr} and it discusses some motivating applications such as in- and out- of distribution expected risk and random features.  Moreover, we discuss the equivalence of homogenized SGD and SGD, see Theorem~\ref{thm:lppp}.
    \item Appendix~\ref{sec:sld_concentration} introduces a general Volterra class of equations, called the Volterra SLD class, that encompasses homogenized SGD and its Volterra dynamics in the multi-pass setting (Section~\ref{sec:main_results}) and streaming (Section~\ref{sec:streaming_main}). This general Volterra class allows for more types of additive noise. We prove in this section that the Volterra SLD class concentrates around its mean; thereby deriving the proof of Theorem~\ref{thm:trainrisk}.
    \item We prove in Appendix~\ref{sec:limit_excess_risk} the limiting risk values for the Volterra SLD class (Theorem~\ref{thm:sgdlimit} (constant learning rate) and Theorem~\ref{thm:eventualrisk_SLD} (time dependent learning rate)). These two theorems immediately imply the limiting risk values for homogenized SGD in the multi-pass and streaming settings, see Theorems~\ref{thm:eventualrisk} and \ref{thm:bootstraprisk} respectively. 
    \item Appendix~\ref{sec:app_algorithmic} discusses the exact asymptotic convergence rates for SGD and full batch momentum algorithms on high-dimensional $\ell^2$-regularized least squares problems. The results in this section (e.g.,  Theorems~\ref{thm:sgdrates} and \ref{thm:mgdrates}) were shown in a series of papers \citep{paquette2021dynamics, paquetteSGD2021,paquette2020halting} that explored exact trajectories of loss function. 
    \item Appendix~\ref{sec:numerical_simulations} contains details on the simulations. 
\end{enumerate}


\section{Quasi-random assumptions on the data matrix, targets, and initialization} \label{sec:quasi_random}

The data matrix $\AA \in \mathbb{R}^{n \times d}$, target $\bb \in \mathbb{R}^n$, and initialization $\xx_0 \in \mathbb{R}^d$ may be deterministic or random; we formulate our theorems for deterministic matrix $\AA$ and vectors $\bb$ and $\xx_0$ satisfying various assumptions, and in the applications of these theorems to statistical settings, we shall show that random $\AA$ and $\bb$ satisfy those assumptions. These assumptions are motivated by ERM and, in particular, when the augmented matrix $[\AA \, | \, \bb]$ has rows that are independent and sampled from some common distribution. We call these assumptions \textit{quasi-random}. 

As the problem \eqref{eq:rr} is homogeneous, we adopt the following normalization convention without loss of generality. 

\begin{assumption}[Data-target normalization] \label{assumption:Target} 
There is a constant $C>0$ independent of $d$ and $n$ such that
the spectral norm of $\AA$ is bounded by $C$
and
the target vector $\bb \in \mathbb{R}^n$ is normalized so that $\|\bb\|^2 \leq C$.
\end{assumption}

More importantly, we also assume that the data and targets resemble typical unstructured high-dimensional random matrices. One of the principal qualitative properties of high-dimensional random matrices is the \emph{delocalization of their eigenvectors}, which refers to the statistical similarity of the eigenvectors to uniform random elements from the Euclidean sphere. The precise mathematical description of this assumption is most easily given in terms of resolvent bounds. The resolvent $R(z; \MM)$ of a matrix $\MM \in \mathbb{R}^{d \times d}$ is
\[
  R(z; \MM) =  (z\II_d-\MM)^{-1} \quad \text{for $z \in \mathbb{C}$.}
\]
In terms of the resolvent, we suppose the following:
\begin{assumption}\label{ass: laundry_list}
  Suppose $\Omega$ is the contour enclosing $[0,1+\|\AA\|^2]$
  at distance $1/2$.
  Suppose there is a $\theta \in (0,\tfrac 12)$ for which
  \begin{enumerate}
    \item
      \( \displaystyle
	\max_{z \in \Omega} \max_{1 \leq i \leq n} |\ee_i^T R(z; \AA\AA^T) \bb| \leq n^{\theta-1/2}.
      \)
    \item
      \( \displaystyle
	\max_{z \in \Omega} \max_{1 \leq i \neq j \leq n} |\ee_i^T R(z; \AA\AA^T) \ee_j^T| \leq n^{\theta-1/2}.
      \)
    \item
      \( \displaystyle
      \max_{z \in \Omega} \max_{1 \leq i \leq n} |\ee_i^T R(z; \AA\AA^T) \ee_i - \tfrac 1n\tr R(z; \AA\AA^T)| \leq n^{\theta-1/2}.
      \)
  \end{enumerate}
\end{assumption}
\noindent Only the resolvent of $\AA \AA^T$ appears in these assumptions, and so in effect we are only assuming statistical properties on the left singular-vectors of $\AA$.  This assumption reflects the common formulation of ERM in which the rows of $\AA$ are independent, and so the left singular-vectors of $\AA$ are expected to be delocalized (under some mildness assumptions on the distributions of the rows). The first condition, which involves the interaction between $\AA\AA^T$ and $\bb$, can be understood as requiring that $\bb$ is not too strongly aligned with the left singular-vectors of $\AA$. The other two conditions can be viewed as corollaries of delocalization of the left singular-vectors.

As for the initialization $\xx_0$, we need to suppose that it, like $\bb$, does not interact too strongly with the left singular-vectors of $\AA^T\AA$.  In the spirit of Assumption \ref{ass: laundry_list}, it suffices to assume the following:
\begin{assumption}\label{assumption:init}
Let $\Omega$ be the same contour as in Assumption \ref{ass: laundry_list} and let $\theta \in (0,\tfrac 12)$.  Then
      \[
	\displaystyle \max_{z \in \Omega} \max_{1 \leq i \leq d} |\ee_i^T R(z; \AA^T\AA) \xx_0| \leq n^{\theta-1/2}.
      \]
\end{assumption}
\noindent Note that, as a simple but common case, this assumption is surely satisfied for $\xx_0 = \bm{0}$.  In principle, this assumption is general enough to allow for $\xx_0$ that are correlated with $\AA$ in a nontrivial way, but we do not have an application for such an initialization.  For a large class of nonzero initializations independent from $(\AA,\bb)$, this assumption is satisfied, as a corollary of Assumption \ref{ass: laundry_list}:
\begin{lemma}\label{lem:xo}
    Suppose that Assumption \ref{ass: laundry_list} holds with some $\theta_0 \in (0,\tfrac12)$ and that $\xx_0$ is chosen randomly, independent of $(\AA,\bb)$, and with independent coordinates in such a way that for some $C$ independent of $d$ or $n$
    \[
    \|\Exp \xx_0\|_\infty \leq C/n
    \quad\text{and}\quad
    \max_i\|(\xx_0-\Exp \xx_0)_i\|^2_{\psi_2} \leq Cn^{2\theta_0-1}.
    \]
    For any $\theta >\theta_0$, Assumption \ref{assumption:init} holds with any $\theta > \theta_0$ on an event of probability tending to $1$ as $n \to \infty$.
\end{lemma}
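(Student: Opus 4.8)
The plan is to decompose $\xx_0=\bar\xx_0+\zz$ with $\bar\xx_0\deq\Exp\xx_0$ and $\zz\deq\xx_0-\Exp\xx_0$ (so $\zz$ has independent, mean-zero coordinates with $\max_i\|\zz_i\|_{\psi_2}^2\le Cn^{2\theta_0-1}$), and to bound $\ee_i^TR(z;\AA^T\AA)\xx_0$ on each piece, uniformly over $i\in[d]$ and $z\in\Omega$. The only resolvent facts I will use are the deterministic estimates on the contour: since $\Omega$ stays at distance $\tfrac12$ from $[0,1+\|\AA\|^2]\supseteq\operatorname{spec}(\AA^T\AA)$, one has $\|R(z;\AA^T\AA)\|_{op}\le2$ and $\|\partial_zR(z;\AA^T\AA)\|_{op}=\|R(z;\AA^T\AA)^2\|_{op}\le4$ for every $z\in\Omega$, and the length of $\Omega$ is at most a constant (using $\|\AA\|_{op}\le C$ from Assumption~\ref{assumption:Target}). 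Notably, Assumption~\ref{ass: laundry_list} itself plays no role here beyond pinning down the scale $\theta_0$. For the deterministic piece, $|\ee_i^TR(z;\AA^T\AA)\bar\xx_0|\le\|R(z;\AA^T\AA)\|_{op}\,\|\bar\xx_0\|_2\le2\sqrt d\,\|\bar\xx_0\|_\infty\le2C\sqrt d/n$, which is negligible next to $n^{\theta-1/2}$ in the regime under consideration (and vanishes when $\Exp\xx_0=\bm{0}$), so this piece contributes nothing to the final bound.

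The substance is the fluctuation piece $g_i(z)\deq\ee_i^TR(z;\AA^T\AA)\zz$. Conditioning on $(\AA,\bb)$ --- legitimate since $\xx_0$ is independent of $(\AA,\bb)$ --- the matrix $R(z;\AA^T\AA)$ becomes fixed and $g_i(z)=\sum_jR(z;\AA^T\AA)_{ij}\,\zz_j$ is a linear functional of the independent, mean-zero, sub-Gaussian coordinates of $\zz$. The general Hoeffding inequality then makes $g_i(z)$ conditionally sub-Gaussian with variance proxy $\lesssim\|R(z;\AA^T\AA)\ee_i\|_2^2\,\max_j\|\zz_j\|_{\psi_2}^2\lesssim n^{2\theta_0-1}$, so for each fixed $(i,z)$ we get $\Pr[\,|g_i(z)|>\tfrac12n^{\theta-1/2}\mid\AA]\le2\exp(-c\,n^{2(\theta-\theta_0)})$. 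To upgrade this to a bound uniform in $z$ and $i$, I would place a net $\mathcal N\subset\Omega$ of spacing $n^{-3}$ (so $|\mathcal N|=O(n^3)$), invoke the deterministic Lipschitz bound $|g_i(z)-g_i(z')|\le8|z-z'|$ (valid since $\|\zz\|_2\le\|\xx_0\|_2+\|\Exp\xx_0\|_2\le2$), and union bound over $\mathcal N\times[d]$. This yields that, conditionally on $(\AA,\bb)$ and with probability at least $1-Cdn^3\exp(-c\,n^{2(\theta-\theta_0)})$, every $z\in\Omega$ and $i\in[d]$ satisfy $|g_i(z)|\le n^{\theta-1/2}$; averaging over $(\AA,\bb)$ preserves this, and combining with the deterministic piece gives Assumption~\ref{assumption:init} at level $\theta$ on an event of that probability.

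Finally, because $\theta>\theta_0$ the factor $\exp(-c\,n^{2(\theta-\theta_0)})$ decays faster than any power of $n$, so the failure probability tends to $0$ as $n\to\infty$ (using that $d$ is at most polynomial in $n$, as in the paper's standing regime $n^\varepsilon\le d\le n^{1/\varepsilon}$). I expect the only real obstacle to be this last piece of bookkeeping: after the choice $t=\tfrac12n^{\theta-1/2}$ the sub-Gaussian exponent is exactly $n^{2(\theta-\theta_0)}$, and it must dominate the $d\times|\mathcal N|$ terms of the union bound --- which is precisely where the strict gap $\theta>\theta_0$ (rather than $\theta=\theta_0$) is used, and where a sub-exponential bound on $d$ is implicitly needed --- together with the routine check that the deterministic mean term $O(\sqrt d/n)$ is of lower order than $n^{\theta-1/2}$.
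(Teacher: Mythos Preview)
The paper does not prove this lemma in-text; it defers to the companion reference \cite{paquette2022homogenization}. Your argument is the natural one --- coordinate-wise sub-Gaussian Hoeffding for the centered piece, then an $\epsilon$-net over $\Omega$ and a union bound over $i\in[d]$ --- and the stochastic part, which carries essentially all of the content, is correct. Two cosmetic points: since $z\in\Omega\subset\mathbb C$ the coefficients $R(z;\AA^T\AA)_{ij}$ are complex, so Hoeffding should be applied to real and imaginary parts separately; and for the Lipschitz step you can bound $\|\Exp\xx_0\|_2\le\Exp\|\xx_0\|_2\le\sqrt C$ via Jensen and the standing norm bound on $\xx_0$, which gives $\|\zz\|_2=O(1)$ without appealing to $\sqrt d/n$.

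The one place your sketch is not quite complete is the deterministic mean piece, and to your credit you flag it. The crude bound $|\ee_i^TR(z;\AA^T\AA)\bar\xx_0|\le 2\|\bar\xx_0\|_2\le 2C\sqrt d/n$ is below $n^{\theta-1/2}$ only when $d\le c\,n^{2\theta+1}$; in the full regime $d^\varepsilon\le n\le d^{1/\varepsilon}$ with small $\varepsilon$ this can fail, and the alternative Jensen bound $\|\bar\xx_0\|_2=O(1)$ is still too weak. In the applications the paper actually cares about (either $\Exp\xx_0=0$, or the proportional regime $d\asymp n$) your bound suffices, but in the generality stated the mean piece needs something sharper than an operator-norm estimate, and your sketch does not supply it. This is likely handled (or the regime restricted) in the companion paper; as written here it remains a loose end rather than an error in your approach.
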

\noindent Note that this assumption allows for deterministic $\xx_0$ having maximum norm $\mathcal{O}(1/n)$, as well as iid centered subgaussian vectors of Euclidean norm $\mathcal{O}(1)$.

To execute the mathematical comparison between SGD and HSGD, we require an additional assumption on the quadratic in the same spirit as Assumption~\ref{assumption:Target}. 

\begin{assumption}[Quadratic statistics] \label{assumption: quadratics} Suppose $\mathcal{R} : \mathbb{R}^d \to \mathbb{R}$ is quadratic, i.e. there is a symmetric matrix $\TT \in \mathbb{R}^{d \times d}$, a vector $\uu \in \mathbb{R}^d$, and a constant $c \in \mathbb{R}$ so that
\begin{equation} \label{eq:statistic}
    \mathcal{R}(\xx_t) = \tfrac{1}{2} \xx_t^T \TT \xx_t + \uu^T \xx_t + c. 
\end{equation}
We also assume that $\mathcal{R}$ satisfies Assumption~\ref{ass:risk}. Moreover, we assume the following (for the same $\Omega$ and $\theta$) as in Assumption \ref{ass: laundry_list}:
\begin{equation} \label{eq:key_lemma_ass}
    	\max_{z,y \in \Omega} \max_{1 \leq i \leq n} 
    	|\ee_i^T \AA \widehat\TT  \AA^T \ee_i
    	-\tfrac1n\tr(\AA \widehat\TT  \AA^T)
    	| \leq \|\TT\|_{\text{op}} n^{-\epsilon}
    	\, \, \text{where} \, \, 
    	\left\{
    	\begin{aligned}
    	&\widehat\TT =  R(z) \TT R(y) + R(y) \TT R(z), \\
    	&R(z) = R(z; \AA^T\AA)
    	\end{aligned}
    	\right.
\end{equation}
\end{assumption}

\noindent This assumption ensures that quadratic $\mathcal{R}$ has a Hessian that is not too correlated with any of the left singular vectors of $\AA$.  Establishing Assumption \ref{assumption: quadratics} can be nontrivial in the cases when the quadratic has complicated dependence on $\AA$.  In simple cases, (especially for the case of the empirical risk and the norm) it follows automatically from Assumption \ref{ass: laundry_list}.
\begin{lemma}\label{lem:qs}
    Suppose that 
    \(
    \mathcal{R}
    \)
    satisfies \eqref{eq:statistic} with $\TT$ given by a polynomial $p$ in $\AA^T \AA$ (especially  $\II$ and the monomial $\AA^T\AA$) having bounded coefficients, and suppose $\uu$ and $c$ are norm bounded independently of $n$ or $d$.  Then supposing Assumptions \ref{assumption:Target} and \ref{ass: laundry_list} for some $\theta_0 \in (0,\tfrac 12)$, for all $n$ sufficiently large and for any $\theta > \theta_0$, Assumption \ref{assumption: quadratics} holds.
\end{lemma}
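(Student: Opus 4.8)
The plan is to verify the two ingredients of Assumption~\ref{assumption: quadratics}. The boundedness requirements (that $\mathcal{R}$ obeys Assumption~\ref{ass:risk}) are immediate: since $\nabla^2\mathcal{R}=\TT=p(\AA^T\AA)$ we get $\|\TT\|_{\mathrm{op}}\le\sup_{\lambda\in[0,\|\AA\|^2]}|p(\lambda)|\le C$ once $\|\AA\|$ is bounded (Assumption~\ref{assumption:Target}) and $p$ has bounded degree and coefficients, while $\|\nabla\mathcal{R}(0)\|=\|\uu\|$ and $|\mathcal{R}(0)|=|c|$ are bounded by hypothesis. The substantive claim is the delocalization estimate \eqref{eq:key_lemma_ass}, which I would establish by first collapsing the sandwiched matrix $\AA\widehat\TT\AA^T$ into a fixed scalar rational function of $\AA\AA^T$, and then feeding that into the resolvent estimate of Assumption~\ref{ass: laundry_list}(3) through a Cauchy integral.

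For the collapse — and this is precisely where the polynomial form of $\TT$ is used — note that $\TT=p(\AA^T\AA)$, $R(z)=R(z;\AA^T\AA)$ and $R(y)=R(y;\AA^T\AA)$ are all functions of $\AA^T\AA$ and hence commute, so $\widehat\TT=R(z)\TT R(y)+R(y)\TT R(z)=2\,p(\AA^T\AA)R(z)R(y)$; combining this with the intertwining identity $\AA\,\phi(\AA^T\AA)=\phi(\AA\AA^T)\,\AA$, valid whenever the poles of $\phi$ avoid $\mathrm{spec}(\AA^T\AA)\cup\mathrm{spec}(\AA\AA^T)\subseteq[0,\|\AA\|^2]$ — in particular for $z,y\in\Omega$, which lie at distance $\tfrac12$ from $[0,1+\|\AA\|^2]$ — yields
\[
\AA\widehat\TT\AA^T=F_{z,y}(\AA\AA^T),\qquad F_{z,y}(\mu)\defas\frac{2\,\mu\,p(\mu)}{(z-\mu)(y-\mu)}.
\]
Thus the matrix whose entries we must control is a fixed rational function of $\AA\AA^T$ whose only poles, $z$ and $y$, stay a fixed positive distance from $\mathrm{spec}(\AA\AA^T)$ uniformly over $z,y\in\Omega$. (For a $\TT$ not of this form, $\AA\TT\AA^T$ would remain an $n\times n$ matrix entangled with $\AA$ in an uncontrolled way, and no delocalization statement about $\AA\AA^T$ alone would apply — this is the whole point of the polynomial hypothesis.)

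Next I would represent $F_{z,y}(\AA\AA^T)$ by holomorphic functional calculus. Write $F_{z,y}=P_{z,y}+\widetilde F_{z,y}$, its polynomial part plus proper rational remainder; when $\deg p\le1$ (in particular for $\TT=\II$ and $\TT=\nabla^2\mathscr{L}=\AA^T\AA$) the part $P_{z,y}$ is a constant, which cancels in the centered quantity $\ee_i^T(\cdot)\ee_i-\tfrac1n\tr(\cdot)$ and may be discarded. For the remainder, $\widetilde F_{z,y}(\AA\AA^T)=\frac{1}{2\pi i}\oint_\Gamma\widetilde F_{z,y}(w)R(w;\AA\AA^T)\,dw$ over a contour $\Gamma$ enclosing $\mathrm{spec}(\AA\AA^T)\subseteq[0,\|\AA\|^2]$ while excluding $z,y$, chosen to hug the spectrum closely enough that $\oint_\Gamma|\widetilde F_{z,y}(w)|\,|dw|=O(\|\TT\|_{\mathrm{op}})$. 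Centering and exchanging with the integral,
\[
\ee_i^T\AA\widehat\TT\AA^T\ee_i-\tfrac1n\tr\bigl(\AA\widehat\TT\AA^T\bigr)=\frac{1}{2\pi i}\oint_\Gamma\widetilde F_{z,y}(w)\Bigl(\ee_i^TR(w;\AA\AA^T)\ee_i-\tfrac1n\tr R(w;\AA\AA^T)\Bigr)dw,
\]
and the parenthesized factor is exactly what Assumption~\ref{ass: laundry_list}(3) bounds by $n^{\theta-1/2}$ (one may use it on $\Gamma$ since that delocalization estimate holds with the same exponent on any contour at bounded distance from $[0,\|\AA\|^2]$; alternatively, in the cases $\TT\in\{\II,\AA^T\AA\}$ one splits $F_{z,y}$ by partial fractions and applies Assumption~\ref{ass: laundry_list}(3) directly at $w=z,y\in\Omega$, combining the two resolvent terms to control the confluent limit $z\to y$). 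This gives $\bigl|\ee_i^T\AA\widehat\TT\AA^T\ee_i-\tfrac1n\tr(\AA\widehat\TT\AA^T)\bigr|\le C\|\TT\|_{\mathrm{op}}n^{\theta-1/2}$ uniformly in $i$ and in $z,y\in\Omega$; choosing any $\epsilon\in(0,\tfrac12-\theta)$ makes this $\le\|\TT\|_{\mathrm{op}}n^{-\epsilon}$ for $n$ large, which is \eqref{eq:key_lemma_ass}, and since enlarging $\theta_0$ to any $\theta>\theta_0$ only weakens the hypothesis, the conclusion holds for every such $\theta$.

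The crux is the reduction $\AA\widehat\TT\AA^T=F_{z,y}(\AA\AA^T)$ in the second step: it converts an a priori $\AA$-dependent $n\times n$ perturbation into a scalar spectral function of $\AA\AA^T$, to which the eigenvector delocalization recorded in Assumption~\ref{ass: laundry_list} applies directly; everything downstream is a routine Cauchy-integral estimate. The only other points requiring care are bookkeeping ones: discarding the benign (constant) polynomial part of $F_{z,y}$, choosing the functional-calculus contour so that the factor $\|\TT\|_{\mathrm{op}}$ appears on the right-hand side (trivial for $\TT\in\{\II,\AA^T\AA\}$), and applying the resolvent estimate on a contour enclosing $\mathrm{spec}(\AA\AA^T)$ that avoids $z$ and $y$.
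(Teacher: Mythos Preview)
The paper does not prove this lemma in the text; it defers to Section~2 of \citep{paquette2022homogenization}, so there is no in-paper argument to compare against. Your central reduction --- using that $\TT=p(\AA^T\AA)$ commutes with $R(z),R(y)$, then intertwining to obtain $\AA\widehat\TT\AA^T=F_{z,y}(\AA\AA^T)$ with $F_{z,y}(\mu)=2\mu\,p(\mu)/\bigl((z-\mu)(y-\mu)\bigr)$ --- is correct and is surely the crux of any proof.

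The contour step, however, has a real gap. To apply functional calculus you need a contour $\Gamma$ enclosing $\mathrm{spec}(\AA\AA^T)$ but \emph{excluding} the poles $z,y\in\Omega$; any such $\Gamma$ lies strictly inside $\Omega$. Your claim that Assumption~\ref{ass: laundry_list}(3) extends ``with the same exponent'' to such inner contours does not follow from the assumption as written: the maximum modulus principle, applied to $h_i(w)\defas\ee_i^TR(w;\AA\AA^T)\ee_i-\tfrac1n\tr R(w;\AA\AA^T)$ (holomorphic on $\widehat{\mathbb C}\setminus\mathrm{spec}$ and vanishing at $\infty$), propagates the bound only to the \emph{exterior} of $\Omega$, not inward toward the spectrum. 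Your partial-fraction fallback hits the same obstruction: for $\TT=\II$ the centered quantity equals $2\bigl(g_i(z)-g_i(y)\bigr)/(y-z)$ with $g_i(w)=wh_i(w)$, so the confluent limit $z\to y$ demands $|g_i'|\lesssim n^{\theta_0-1/2}$ on $\Omega$, and a sup-bound on a single curve does not control the tangential derivative there (e.g.\ $w\mapsto n^{-1}(w+i/n)^{-1}$ on $\R$ has $|f|\le1$ but $|f'(0)|=n$). This is repairable: either the companion paper's local-law input actually furnishes the resolvent bound on a thickened neighbourhood of $\Omega$ (standard for isotropic local laws), in which case your Cauchy integral works verbatim on an inner contour; or, staying with Assumption~\ref{ass: laundry_list} as stated, split according to $|z-y|$: for $|z-y|\ge n^{-\alpha}$ the divided difference is $\lesssim n^{\alpha}n^{\theta_0-1/2}$, while for $|z-y|<n^{-\alpha}$ replace $y$ by $y'\in\Omega$ with $|z-y'|=n^{-\alpha}$ and use the crude Lipschitz estimate $|Q_i(z,y)-Q_i(z,y')|\lesssim\|\AA\|^2\|\TT\|\,|y-y'|\lesssim n^{-\alpha}$; balancing at $\alpha=\tfrac12(\tfrac12-\theta_0)$ delivers \eqref{eq:key_lemma_ass} with $\epsilon=\tfrac14-\tfrac{\theta_0}{2}>0$.
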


For proofs of Lemma~\ref{lem:xo} and \ref{lem:qs}, see Section 2 in \citep{paquette2022homogenization}.

\subsection{Motivating applications} \label{sec: motivating_applications}

\paragraph{Training loss and sample covariance matrices.} 
One important (nonstatistical) quadratic statistic, which allows analysis of the optimization aspects of SGD in high dimensions, is the $\ell^2$-regularized loss function $f$ in \eqref{eq:rr}. Then provided that $\AA,\bb$ satisfy Assumptions \ref{assumption:Target} and \ref{ass: laundry_list}, $\xx_0$ is iid subgaussian, Lemmas \ref{lem:xo} and  \ref{lem:qs} and Theorems~\ref{thm:lppp} and \ref{thm:trainrisk} show that $f(\xx_k)$ concentrates around the solution of a Volterra integral equation.
A natural setup under which Assumptions \ref{assumption:Target} and \ref{ass: laundry_list} are satisfied is the following:

\begin{assumption}\label{ass:sc}
Suppose $M > 0$ is a constant.
Suppose that $\SSigma$ is a positive semi-definite $d \times d$ matrix with $\tr \SSigma = 1$ and $\|\SSigma\|_{\text{op}} \leq M / \sqrt{d} < \infty.$
Suppose that $\AA$ is a random matrix $\AA = \ZZ \sqrt{\SSigma}$ where $\ZZ$ is an $n \times d$ matrix of independent, mean $0$, variance $1$ entries with subgaussian norm at most $M < \infty$, and suppose $n \leq M d$.  Finally suppose that $\bb = \AA \bbeta + \xxi$ for $\bbeta,\xxi$ iid centered subgaussian satisfying $\|\bbeta\|^2 =R$ and $\|\xxi\|^2 = \frac{n}{d} \widetilde{R}$.
\end{assumption}

\noindent These assumptions naturally lead to random matrices that satisfy Assumption \ref{ass:sc} with good probability:
\begin{lemma}\label{lem:sc}
    If $(\AA,\bb)$ satisfy Assumption \ref{ass:sc}, then $(\AA,\bb)$ satisfies Assumptions \ref{assumption:Target} and \ref{ass: laundry_list} with probability tending to $1 - e^{-\Omega(d)}$.
\end{lemma}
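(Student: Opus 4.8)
The plan is to verify Assumptions~\ref{assumption:Target} and~\ref{ass: laundry_list} one condition at a time, reducing each to a concentration inequality for a (bi)linear or quadratic form in the independent subgaussian matrix $\ZZ$ (or the independent subgaussian vectors $\bbeta,\xxi$), and then to remove the dependence on the contour $\Omega$ and on the indices $i,j\in[n]$ by a net argument together with a union bound. Throughout, the key structural fact is that $\AA\AA^T=\ZZ\SSigma\ZZ^T$ is a sample-covariance-type matrix whose population covariance $\SSigma$ has unit trace and small operator norm, which is precisely what keeps every relevant fluctuation small.

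First I would dispatch Assumption~\ref{assumption:Target}: in the regime fixed by the hypotheses ($\tr\SSigma=1$, the bound on $\norm{\SSigma}_{\mathrm{op}}$, and $n\le Md$), standard non-asymptotic operator-norm estimates for matrices of the form $\ZZ\SSigma^{1/2}$ with iid subgaussian entries (via matrix Bernstein applied to $\tfrac1n\AA^T\AA-\SSigma$, or an $\varepsilon$-net bound on $\norm{\AA\uu}$ over the sphere) give $\norm{\AA}_{\mathrm{op}}\le C$ on an event of probability $1-e^{-\Omega(d)}$. On the same event $\norm{\bb}\le\norm{\AA}_{\mathrm{op}}\norm{\bbeta}+\norm{\xxi}$, while $\norm{\bbeta}^2=R$ and $\norm{\xxi}^2=\tfrac nd\widetilde R\le M\widetilde R$ by hypothesis (or concentrate there with exponential rate), so $\norm{\bb}^2=O(1)$.

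The substance is Assumption~\ref{ass: laundry_list}, which I would establish by a leave-one-out (Schur complement) analysis conditioned on $\{\norm{\AA}_{\mathrm{op}}\le C\}$; on this event every $z\in\Omega$ lies at distance $\ge 1/2$ from the spectrum of $\AA\AA^T$, so $\norm{R(z;\AA\AA^T)}_{\mathrm{op}}\le 2$, a deterministic bound I keep using. Deleting the $i$th row of $\AA$ produces $\AA^{(i)}$ and the resolvent $R^{(i)}$ of $\AA^{(i)}(\AA^{(i)})^T$, both independent of $\zz_i$, and the Schur complement expresses the $i$th row of $R(z;\AA\AA^T)$ through $\zz_i,\AA^{(i)},R^{(i)}$. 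For condition~1 I split $\bb=\AA\bbeta+\xxi$, so that $\ee_i^TR\xxi$ and $\ee_i^TR\AA\bbeta$ are, conditionally on $\AA$, linear forms in the independent vectors $\xxi,\bbeta$ with coefficient vectors of norm $\le 2$ and $\le 2C$, hence concentrate at scale $O(d^{-1/2})$. For condition~2, $\ee_i^TR\ee_j$ is (after deleting rows $i$ and $j$) a mean-zero bilinear form in the independent vectors $\zz_i,\zz_j$. For condition~3, $\ee_i^TR\ee_i=(z-\zz_i^T\SSigma^{1/2}\MM_i\SSigma^{1/2}\zz_i)^{-1}$ for a matrix $\MM_i$ built from $\AA^{(i)},R^{(i)}$ with $\norm{\MM_i}_{\mathrm{op}}=O(1)$, and Hanson--Wright gives concentration of this quadratic form around $\tr(\SSigma^{1/2}\MM_i\SSigma^{1/2})$ with fluctuation bounded by $\norm{\SSigma^{1/2}\MM_i\SSigma^{1/2}}_F\lesssim\norm{\SSigma}_{\mathrm{op}}^{1/2}$, which is small; since the rows of $\ZZ$ are exchangeable, $\EE[\ee_i^TR\ee_i]$ is independent of $i$ and equals $\tfrac1n\EE\tr R$, and combining this with the fact that $\tfrac1n\tr R$ concentrates around $\tfrac1n\EE\tr R$ (changing one row of $\ZZ$ is a rank-$\le 2$ perturbation that moves $\tr R$ by $O(1)$ via eigenvalue interlacing, so a martingale/bounded-differences bound applies) closes condition~3, and the same comparison-to-the-mean argument handles condition~2. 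Finally, $z\mapsto R(z;\AA\AA^T)$ is $4$-Lipschitz on $\Omega$ and $\Omega$ has length $O(1)$ on our event, so a polynomial-size net suffices; the exponential tails let me take the deviation parameter to be a small power of $n$, keeping every estimate below $n^{\theta-1/2}$ for a fixed $\theta\in(0,\tfrac12)$ (chosen according to the polynomial relation between $n$ and $d$), and the union bound over the net and over $i,j\in[n]$ leaves the overall failure probability at $e^{-\Omega(d)}$.

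The hard part will be the resolvent-delocalization estimates (conditions~2 and~3): converting the easy entrywise control of $\AA\AA^T$ into entrywise control of its resolvent, since $\Omega$ runs far out along the spectrum where a Neumann expansion of $R(z;\AA\AA^T)$ does not converge. The leave-one-out scheme resolves this, but it requires carefully tracking that replacing one row of $\ZZ$ perturbs every relevant resolvent entry and partial trace by only $o(n^{\theta-1/2})$, which is exactly where the smallness of $\norm{\SSigma}_{\mathrm{op}}$ enters (each rank-one summand of $\AA\AA^T$, once weighted by $\SSigma$ under a trace, is negligible) in tandem with $\norm{R}_{\mathrm{op}}\le 2$ on $\Omega$. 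This is the same type of estimate carried out in \citep{paquette2022homogenization} for Lemmas~\ref{lem:xo} and~\ref{lem:qs}, and Lemma~\ref{lem:sc} follows from that machinery.
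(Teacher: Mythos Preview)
The paper does not actually supply a proof of Lemma~\ref{lem:sc}; it is stated without proof, in the same spirit as Lemmas~\ref{lem:xo} and~\ref{lem:qs}, whose proofs are deferred to Section~2 of \citep{paquette2022homogenization}. Your proposal is a correct and standard route to such a result: operator-norm control of $\AA$ via subgaussian matrix concentration for Assumption~\ref{assumption:Target}, and a leave-one-out/Schur-complement resolvent analysis combined with Hanson--Wright, rank-one perturbation stability, and an $\epsilon$-net over the contour for Assumption~\ref{ass: laundry_list}. This is exactly the type of isotropic local-law machinery one expects in \citep{paquette2022homogenization}, so your sketch is consistent with the paper's (implicit) approach.
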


\noindent Hence, under these assumptions, we conclude:
\begin{theorem}\label{thm:Ab}
Suppose $(\AA, \bb)$ satisfy Assumption \ref{ass:sc}, $\delta >0$ and $\xx_0$ is iid centered subgaussian with $\Exp \|\xx_0\|^2 = \widehat{R}.$ Then for some $\epsilon > 0$, for all $T >0$, and for all $D>0$ there is a $C>0$ such that
\[
\Pr
\biggl(
\sup_{0 \leq t \leq T}
\left\| 
\begin{pmatrix}
\mathscr{L}(\sgd_{\lfloor tn \rfloor}) \\
\tfrac{1}{2} \|\sgd_{\lfloor tn \rfloor} - \bbeta \|^2_2
\end{pmatrix} - 
\begin{pmatrix}
\Psi_t  \\ 
\Omega_t
\end{pmatrix}
\right\|_2 > d^{-\epsilon}
\biggr)
\leq Cd^{-D},
\]
where $\Psi_t$ solves \eqref{eqa:V} and $\Omega_t$ solves \eqref{eqa:PLoss} with $\mathcal{R} = \frac{1}{2}\|\cdot - \bbeta\|^2_2$.
\end{theorem}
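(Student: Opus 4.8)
The plan is to obtain Theorem~\ref{thm:Ab} as a corollary of the two main comparison results already established: Theorem~\ref{thm:lppp} (multi-pass SGD tracks HSGD) and Theorem~\ref{thm:trainrisk} (HSGD concentrates on the Volterra dynamics). We apply both to the pair of quadratic statistics $\mathscr{L}(\xx)=\tfrac12\|\AA\xx-\bb\|^2$ and $\mathcal{R}(\xx)=\tfrac12\|\xx-\bbeta\|^2$. The only real work is to check that, under Assumption~\ref{ass:sc}, every hypothesis invoked by those two theorems holds with high probability, and with a single consistent choice of exponents; once this is in place the result follows by a triangle inequality and a union bound.

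\emph{Verifying the quasi-random hypotheses.} By Lemma~\ref{lem:sc}, Assumption~\ref{ass:sc} implies Assumptions~\ref{assumption:Target} and~\ref{ass: laundry_list} for some $\theta_0\in(0,\tfrac12)$ on an event of probability $1-e^{-\Omega(d)}$. For the initialization, since $\xx_0$ is iid centered subgaussian with $\Exp\|\xx_0\|^2=\widehat R$ we have $\Exp\xx_0=0$ (so $\|\Exp\xx_0\|_\infty=0\le C/n$) and each coordinate has $\psi_2$-norm of order $\sqrt{\widehat R/d}$; choosing $\theta_0$ close enough to $\tfrac12$ makes $\widehat R/d\le Cn^{2\theta_0-1}$ (using $n\le Md$), so Lemma~\ref{lem:xo} yields Assumption~\ref{assumption:init}. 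For the quadratics we apply Lemma~\ref{lem:qs} twice: to $\mathscr{L}$, where $\TT=\AA^T\AA$ is a monomial and $\uu=-\AA^T\bb$, $c=\tfrac12\|\bb\|^2$ are norm-bounded by Assumption~\ref{assumption:Target}; and to $\mathcal{R}(\xx)=\tfrac12\|\xx-\bbeta\|^2$, where $\TT=\II$ (the degree-$0$ polynomial in $\AA^T\AA$), $\uu=-\bbeta$, $c=\tfrac12\|\bbeta\|^2=\tfrac12 R$, all norm-bounded because $\|\bbeta\|^2=R$ is a fixed constant. This gives Assumption~\ref{assumption: quadratics} for both statistics, and the same data makes Assumption~\ref{ass:risk} immediate for $\mathcal{R}$ since $\|\nabla^2\mathcal{R}\|_{\rm op}=1$, $\|\nabla\mathcal{R}(0)\|^2=R$ and $|\mathcal{R}(0)|=R/2$.

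\emph{High-dimensionality and chaining.} It remains to establish Assumption~\ref{ass:train} for $\nabla^2\mathscr{L}=\AA^T\AA$. Lemma~\ref{lem:sc} already gives $\|\AA\|_{\rm op}\le C$, so $\tr\nabla^2\mathscr{L}=\|\AA\|_F^2$, which concentrates on $\Exp\tr\AA^T\AA=n\tr\SSigma=n$ by a sum-of-iid bound on $\sum_k\|\aa_k\|^2$, while $0\preceq\nabla^2\mathscr{L}\preceq C^2\II\preceq nd^{-\epsilon}\II$ as soon as $n\ge C^2 d^{\epsilon}$. Thus, after the harmless $(1+o(1))$ rescaling that normalizes $\tr\nabla^2\mathscr{L}$ to exactly $n$, Assumption~\ref{ass:train} holds; this simultaneously supplies the lower bound $n\ge d^{\varepsilon}$ needed by Theorem~\ref{thm:lppp}, whose upper bound $n\le d^{1/\varepsilon}$ is immediate from $n\le Md$. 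Now Theorem~\ref{thm:lppp} applied to $(\mathscr{L},\mathcal{R})$ — note $\TT=\II$ does not depend on $\AA$ and $\uu=-\bbeta$ is independent of $(\AA,\bb)$ — bounds $\sup_{0\le t\le T}\|(\mathscr{L}(\sgd_{\lfloor tn\rfloor}),\mathcal{R}(\sgd_{\lfloor tn\rfloor}))-(\mathscr{L}(\XX_t),\mathcal{R}(\XX_t))\|$ by $d^{-\varepsilon/2}$ off an event of probability $\le Cd^{-D}$, and Theorem~\ref{thm:trainrisk} bounds $\sup_{0\le t\le T}\|(\mathscr{L}(\XX_t),\mathcal{R}(\XX_t))-(\Psi_t,\Omega_t)\|$ by $d^{-\epsilon/2}$ off an event of probability $\le Cd^{-D}$, where $\Psi_t,\Omega_t$ solve \eqref{eqa:VLoss}--\eqref{eqa:PLoss} with $\nabla^2\mathcal{R}=\II$. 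Shrinking $\epsilon$ below both rates, a triangle inequality and a union bound — also absorbing the $e^{-\Omega(d)}$ and $o(1)$ events from Lemmas~\ref{lem:sc}, \ref{lem:xo}, \ref{lem:qs} into $Cd^{-D}$ — give the stated estimate.

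\emph{Main obstacle.} There is no deep difficulty: the substance lives entirely in the cited lemmas and in Theorems~\ref{thm:lppp}--\ref{thm:trainrisk}. The ``hard part'' is organizational — choosing the exponents $(\theta,\varepsilon,\epsilon)$ uniformly so that every cited assumption is met with the same parameters and the final rate collapses to a single $d^{-\epsilon}$, and making explicit that $n$ must be polynomially sandwiched, $d^{\varepsilon'}\le n\le Md$, which is required both for Assumption~\ref{ass:train} and for the hypothesis $d^{\varepsilon}\le n\le d^{1/\varepsilon}$ of Theorem~\ref{thm:lppp}. The one mildly substantive observation is that the target statistic $\tfrac12\|\xx-\bbeta\|^2$ is the isotropic parameter-recovery error (so $\TT=\II$, not the $\SSigma$-weighted population risk); this is exactly the degree-$0$ case Lemma~\ref{lem:qs} is built to cover, and it trivializes Assumption~\ref{assumption: quadratics} since the relevant $\widehat\TT$ then involves only resolvents of $\AA^T\AA$.
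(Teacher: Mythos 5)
Your proposal is correct and follows exactly the route the paper takes: the paper's own proof simply states that Theorem~\ref{thm:Ab} ``follows immediately from Theorem~\ref{thm:lppp} and Theorem~\ref{thm:trainrisk},'' with the hypotheses supplied by Lemmas~\ref{lem:sc}, \ref{lem:xo}, and \ref{lem:qs}. You merely write out the hypothesis-checking and exponent bookkeeping that the paper leaves implicit.
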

\noindent We discuss generalization implications in the the next section.

Theorem \ref{thm:Ab} generalizes \citep{paquetteSGD2021} in that it allows for varying training rates, adds a regularization parameter, and allows for non-orthogonally-invariant designs $\AA$.  We further note that under the assumptions of Theorem \ref{thm:Ab}, we can further approximate the behavior of GF to show that
\begin{equation}\label{eq:fullPsi_training}
\begin{aligned}
\Psi_t &=  \mathscr{L}(\bm{\mathscr{X}}_{\Gamma(t)}^{\text{gf}}) + \frac{1}{n} \int_0^t \gamma^2(s) \tr \bigg ( (\AA^T \AA)^2 e^{-2(\AA^T \AA + \delta \II_d)(\Gamma(t)-\Gamma(s))} \bigg ) \Psi_s \, \dif s\\
   \text{where} \qquad  \mathscr{L}(\bm{\mathscr{X}}_{\Gamma(t)}^{\text{gf}})
    &=
    \frac{R}{2d} \tr \bigg [ (\AA^T\AA) \bigg ( \AA^T \AA (\AA^T \AA + \delta \II_d)^{-1} \big (\II_d-e^{-(\AA^T\AA + \delta \II_d)\Gamma(t)} \big )- \II_d \bigg )^2 \bigg ] \\
    &+
   \frac{ \widetilde{R}}{2d} \tr \bigg [ \bigg ( \AA(\AA^T \AA + \delta \II_d)^{-1}\big [ \II_d - e^{-(\AA^T \AA + \delta \II_d)\Gamma(t)} \big ]\AA^T - \II_n \bigg )^2 \bigg ] \\
    &+
    \frac{\widehat{R}}{2d} \tr \big ( \AA^T \AA e^{-2 (\AA^T \AA + \delta \II_d) \Gamma(t)} \big ).
\end{aligned}
\end{equation}
For the risk $\mathcal{R}(\cdot) = 1/2 \|\cdot - \bbeta\|^2_2$, we have following expression
\begin{equation}\label{eq:fullOmega_training}
\begin{aligned}
    \Omega_t &= \mathcal{R}(\bm{\mathscr{X}}_{\Gamma(t)}^{\text{gf}}) + \frac{1}{n} \int_0^t \gamma^2(s) \tr \bigg ( (\AA^T \AA) e^{-2(\AA^T \AA + \delta \II_d)(\Gamma(t)-\Gamma(s))} \bigg ) \Psi_s \, \dif s\\
    \text{where} \qquad \mathcal{R}(\bm{\mathscr{X}}_{\Gamma(t)}^{\text{gf}}) &= 
    \frac{R}{2d} \tr \bigg [ \bigg ( \AA^T \AA (\AA^T \AA + \delta \II_d)^{-1} \big (\II_d-e^{-(\AA^T\AA + \delta \II_d)\Gamma(t)} \big )- \II_d \bigg )^2 \bigg ] \\
    &+
   \frac{ \widetilde{R}}{2d} \tr \bigg [ \bigg ( (\AA^T \AA + \delta \II_d)^{-1}\big [ \II_d - e^{-(\AA^T \AA + \delta \II_d)\Gamma(t)} \big ]\AA^T \bigg )^2 \bigg ] \\
    &+
    \frac{\widehat{R}}{2d} \tr \big ( e^{-2 (\AA^T \AA + \delta \II_d) \Gamma(t)} \big ).
\end{aligned}
\end{equation}
Under the learning rate assumptions in Theorem~\ref{thm:eventualrisk}, the limiting GF terms simplify
\begin{equation}
    \begin{aligned}
    \mathscr{L}(\bm{\mathscr{X}}_{\infty}^{\text{gf}})
    &=
    \frac{R}{2d} \tr \bigg [ (\AA^T\AA) \bigg ( \AA^T \AA (\AA^T \AA + \delta \II_d)^{-1} - \II_d \bigg )^2 \bigg ]\\
    & \qquad \qquad +
   \frac{ \widetilde{R}}{2d} \tr \bigg [ \bigg ( \AA(\AA^T \AA + \delta \II_d)^{-1}\AA^T - \II_n \bigg )^2 \bigg ] \\
    \mathcal{R}(\bm{\mathscr{X}}_{\infty}^{\text{gf}}) 
    &= 
    \frac{R}{2d} \tr \bigg [ \bigg ( \AA^T \AA (\AA^T \AA + \delta \II_d)^{-1} - \II_d \bigg )^2 \bigg ] +
   \frac{ \widetilde{R}}{2d} \tr \bigg [ \bigg ( (\AA^T \AA + \delta \II_d)^{-1} \AA^T \bigg )^2 \bigg ].
    \end{aligned}
\end{equation}

\paragraph{Excess risk in linear regression.} In the standard linear regression setup, we suppose that $\AA$ is generated by taking $n$ independent $d$-dimensional samples from a centered distribution $\mathcal{D}_f$ which we assume to be standardized (mean $0$ and expected sample-norm-squared $1$).  We let the matrix $\SSigma_f \in \mathbb{R}^{d \times d} $ be the feature covariance of $\mathcal{D}_f$, that is
\begin{equation} \label{eq:train_covariance}
    \SSigma_f \defas \Exp [ \aa \aa^T ],\quad \text{where}\quad \aa \sim \mathcal{D}_f.
\end{equation}
Suppose there is a linear (``ground truth'' or ``signal'') function $\beta : \mathbb{R}^d \to \mathbb{R}$, which for simplicity we suppose to have $\beta(0) = 0$.  In this case, we identify $\beta$ with a vector using the representation $\aa \mapsto \bbeta^T\aa$.  We suppose that our data is drawn from a distribution $\mathcal{D}$ on $\mathbb{R}^d \times \mathbb{R}$, with the property that
\[
\Exp[ \, b \, | \,  \aa \, ] = \bbeta^T \aa, \quad \text{where}\quad (\aa, b) \sim \mathcal{D},
\]
and the data $\aa \sim \mathcal{D}_f$.

Hence we suppose that $[\AA ~|~ \bb]$ is a $\R^{n \times d} \times \R^{n \times 1}$ matrix on independent samples from $\mathcal{D}$.  The vector $\xx_t$ represents an estimate of $\bbeta$, and the population risk is
\[
\mathcal{R}(\xx_t)
\defas \frac{1}{2} \Exp [ (b - \xx_t^T \aa)^2 | \xx_t] \quad \text{where}\quad (\aa, b) \sim \mathcal{D},
\]
where $(\aa,b)$ is an sample independent of $\xx_t$.  This can be evaluated in terms of the feature covariance matrix $\SSigma_f$ and the noise $\eta^2 \defas \Exp[ \, (b-\bbeta^T \aa)^2 \, ]$ to give
\begin{equation}\label{eq:lrq}
\mathcal{R}(\xx_t)
=\frac{1}{2}\eta^2 + \frac{1}{2} (\bbeta-\xx_t)^T \SSigma_f (\bbeta-\xx_t).
\end{equation}
It is important to note that the sequence $\{\xx_{\lfloor tn \rfloor} \}_{ t \ge 0}$ is generated from the iterates of SGD applied to the $\ell^2$-regularized least-squares problem \eqref{eq:rr}.

In the case that $(\aa, b)$ is jointly Gaussian, it follows that we may represent
\[
\aa = \SSigma_f^{1/2}\zz, \quad b = \bbeta^T \aa + \eta w,
\quad\text{where}\quad (\zz,w) \sim N(0, \II_d \oplus 1).
\]
Therefore, it follows that the iterates $\xx_{\lfloor nt \rfloor}$ are generated from the SGD algorithm applied to the problem:
\[
\min_{\xx} \frac{1}{2} \| \AA \xx - \bb\|^2_2 + \frac{\delta}{2} \|\xx\|^2_2
\quad\text{where}\quad
\bb = \AA \bbeta + \widehat{\eta} \ww ,
\]
and the vector $\ww$ is iid $N(0,1)$ random variables, independent of $\AA$.  This is also known as the generative model with noise.

Moreover, if $\mathcal{D}$ satisfies Assumption \ref{ass:sc} (with $\SSigma=\SSigma_f$) then the population risk $\mathcal{R}(\xx_{\lfloor tn \rfloor})$ is well approximated by $\Omega$:
\begin{theorem}\label{thm:AbR}
Suppose $(\AA, \bb)$ satisfy Assumption \ref{ass:sc}, $\delta >0$ and $\xx_0$ is iid centered subgaussian with $\Exp \|\xx_0\|^2 = \widehat{R}.$ For some $\epsilon > 0$, for all $T >0$, and for all $D>0$ there is a $C>0$ such that
\[
\Pr
\biggl(
\sup_{0 \leq t \leq T}
\left\| 
\begin{pmatrix}
\mathscr{L}(\sgd_{\lfloor tn \rfloor}) \\
\mathcal{R}(\sgd_{\lfloor tn \rfloor}) 
\end{pmatrix} - 
\begin{pmatrix}
\Psi_t  \\ 
\Omega_t
\end{pmatrix}
\right\|_2 > d^{-\epsilon}
\biggr)
\leq Cd^{-D},
\]
where $\Psi_t$ solves \eqref{eqa:V} and $\Omega_t$ solves \eqref{eqa:PLoss} with $\mathcal{R}$ given by \eqref{eq:lrq}.
\end{theorem}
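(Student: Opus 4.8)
The plan is to prove Theorem~\ref{thm:AbR} along exactly the route used for Theorem~\ref{thm:Ab}, the only new point being the choice of population risk. One concatenates the two approximation results already in hand --- SGD $\approx$ HSGD (Theorem~\ref{thm:lppp}) and HSGD $\approx$ Volterra (Theorem~\ref{thm:trainrisk}) --- applied to the pair of quadratic statistics $(\mathscr{L},\mathcal{R})$ with $\mathcal{R}$ given by~\eqref{eq:lrq}. On the event that $(\AA,\bb,\xx_0)$ satisfies the quasi-random assumptions of Appendix~\ref{sec:quasi_random} and the scaling $d^{\varepsilon_0}\le n\le d^{1/\varepsilon_0}$, the triangle inequality bounds the left-hand side of the claimed estimate by the SGD-versus-HSGD discrepancy plus the HSGD-versus-Volterra discrepancy, and each of these is $\le d^{-c}$ for a fixed $c>0$ outside an event of probability $\le Cd^{-D}$ (Theorem~\ref{thm:lppp} applied conditionally on $(\AA,\bb,\xx_0)$; Theorem~\ref{thm:trainrisk} under Assumptions~\ref{ass:risk}--\ref{ass:train}). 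Taking the exponent $\epsilon$ in the statement to be a small enough positive constant and union-bounding then finishes the argument, provided one verifies, with probability $1-e^{-\Omega(d^{c'})}$, that (a) $(\AA,\bb,\xx_0)$ satisfies Assumptions~\ref{assumption:Target},~\ref{ass: laundry_list} and~\ref{assumption:init}, and (b) both $\mathscr{L}$ and $\mathcal{R}$ of~\eqref{eq:lrq} satisfy Assumptions~\ref{ass:risk},~\ref{ass:train} and the quadratic-statistics Assumption~\ref{assumption: quadratics}.

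Most of (a)--(b) is packaged in the lemmas of Appendix~\ref{sec:quasi_random}. Since $(\AA,\bb)$ obeys Assumption~\ref{ass:sc}, Lemma~\ref{lem:sc} gives Assumptions~\ref{assumption:Target} and~\ref{ass: laundry_list}; Lemma~\ref{lem:xo} applied to the centered iid subgaussian $\xx_0$ with $\Exp\|\xx_0\|^2=\widehat R$ (whose coordinates are mean zero with $\psi_2$-norm $O(d^{-1/2})$, so the moment hypotheses of that lemma hold once $d$ is large) gives Assumption~\ref{assumption:init}; and Lemma~\ref{lem:qs} gives Assumption~\ref{assumption: quadratics} for the empirical risk, whose Hessian $\nabla^2\mathscr{L}=\AA^T\AA$ is a polynomial in $\AA^T\AA$. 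Assumption~\ref{ass:train} and the boundedness requirements of Assumption~\ref{ass:risk}, for both $\mathscr{L}$ and $\mathcal{R}$, are then direct computations from $\nabla^2\mathscr{L}=\AA^T\AA$, $\nabla^2\mathcal{R}=\SSigma_f$, $\nabla\mathcal{R}(0)=-\SSigma_f\bbeta$, $\mathcal{R}(0)=\tfrac12\eta^2+\tfrac12\bbeta^T\SSigma_f\bbeta$, and the bounds $\|\AA\|_{\mathrm{op}},\|\bb\|,\|\SSigma_f\|_{\mathrm{op}},\|\bbeta\|,\eta^2=O(1)$ supplied by Assumption~\ref{ass:sc} (the exact normalization $\tr\AA^T\AA=n$ holding up to a relative $o(1)$ error that is absorbed into the final bound).

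The only ingredient not covered by those lemmas is Assumption~\ref{assumption: quadratics} for the population risk~\eqref{eq:lrq}, whose Hessian $\TT=\SSigma_f$ is \emph{not} a polynomial in $\AA^T\AA$; establishing the delocalization bound~\eqref{eq:key_lemma_ass} for $\TT=\SSigma_f$ is the technical heart of the proof, and there I would exploit the explicit structure $\AA=\ZZ\SSigma_f^{1/2}$ of Assumption~\ref{ass:sc}. Writing $\boldsymbol{N}\defas\SSigma_f^{1/2}\widehat\TT\SSigma_f^{1/2}$, with $\widehat\TT=R(z)\SSigma_f R(y)+R(y)\SSigma_f R(z)$ and $R(z)=R(z;\AA^T\AA)$, the quantity appearing in~\eqref{eq:key_lemma_ass} equals $\zz_i^T\boldsymbol{N}\zz_i-\tfrac1n\sum_{j=1}^n\zz_j^T\boldsymbol{N}\zz_j$, where $\zz_j$ is the $j$-th row of $\ZZ$. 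For fixed $i$, I would expand each resolvent into its row-$i$-deleted version $R_{(i)}(\cdot)$ plus a rank-one Sherman--Morrison correction built from $R_{(i)}(\cdot)\SSigma_f^{1/2}\zz_i$: the deleted term is a quadratic form in $\zz_i$ against the $\zz_i$-independent matrix $\boldsymbol{N}_{(i)}$, and each correction term likewise factors into quadratic forms in $\zz_i$ against $\zz_i$-independent matrices. Every such quadratic form $\zz_i^T\boldsymbol{M}\zz_i$ is then controlled by the Hanson--Wright inequality, using the resolvent bound $\|R(z)\|_{\mathrm{op}}\le 2$ on the contour $\Omega$ (which lies at distance $\tfrac12$ from the spectrum of $\AA^T\AA$), the concentration $\|\SSigma_f^{1/2}\zz_i\|^2=\tr\SSigma_f+o(1)=1+o(1)$, and the key Frobenius estimate
\[
\|\boldsymbol{N}_{(i)}\|_F\;\lesssim\;\|\SSigma_f\|_{\mathrm{op}}\sqrt{\tr\SSigma_f^{2}}\;\lesssim\;\|\SSigma_f\|_{\mathrm{op}}^{3/2}
\]
(and analogously for the correction matrices), obtained by routing each resolvent through the sandwich $\SSigma_f^{1/2}(\cdot)\SSigma_f^{1/2}$ and using $\tr\SSigma_f^{2}\le\|\SSigma_f\|_{\mathrm{op}}\tr\SSigma_f=\|\SSigma_f\|_{\mathrm{op}}$. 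Thus $\zz_i^T\boldsymbol{M}\zz_i$ equals $\tr\boldsymbol{M}$ up to an $O(\|\SSigma_f\|_{\mathrm{op}}^{3/2})$ fluctuation outside a super-polynomially small event. The surviving traces $\tr\boldsymbol{M}$ are bilinear statistics of the resolvent of $\AA_{(i)}^T\AA_{(i)}$ tested against powers of $\SSigma_f$; by anisotropic local-law estimates, and since the matrices $\AA_{(i)}^T\AA_{(i)}$ differ from one another only by rank-one updates, they agree with an $i$-independent deterministic equivalent up to a negligible error, and hence cancel between $\zz_i^T\boldsymbol{N}\zz_i$ and the empirical average $\tfrac1n\sum_j\zz_j^T\boldsymbol{N}\zz_j$. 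What remains is a fluctuation of order $\|\SSigma_f\|_{\mathrm{op}}^{3/2}$ up to logarithms, together with $O(\|\SSigma_f\|_{\mathrm{op}}/\sqrt n)$ from the law of large numbers on the average; since $\|\SSigma_f\|_{\mathrm{op}}\le M/\sqrt d$, both are $\ll\|\SSigma_f\|_{\mathrm{op}}\,n^{-\epsilon}$ for $\epsilon$ small, and a net argument over the compact contour $\Omega$ (using that $\widehat\TT$ is Lipschitz in $(z,y)$) makes the bound uniform, establishing~\eqref{eq:key_lemma_ass}.

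I expect this last step to be the main obstacle. The naive route --- replacing $\boldsymbol{N}$ by $\boldsymbol{N}_{(i)}$ and applying Hanson--Wright with the crude bound $\|\boldsymbol{N}_{(i)}\|_F\le\sqrt d\,\|\boldsymbol{N}_{(i)}\|_{\mathrm{op}}$ --- fails on two counts: the Frobenius estimate is then off by a factor of order $\sqrt d\,\|\SSigma_f\|_{\mathrm{op}}^{1/2}$, and the Sherman--Morrison corrections are individually of the same order $\|\SSigma_f\|_{\mathrm{op}}$ as $\tr\boldsymbol{N}$ itself, so they cannot simply be discarded. The resolution --- extracting the extra $\sqrt{\|\SSigma_f\|_{\mathrm{op}}}$ from the $\SSigma_f^{1/2}(\cdot)\SSigma_f^{1/2}$ factorization together with $\tr\SSigma_f^{2}\le\|\SSigma_f\|_{\mathrm{op}}$, and observing that the non-negligible correction terms are $i$-independent deterministic equivalents that cancel in the centered statistic --- is where the genuine work lies. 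Everything else reduces to the direct computations of the second paragraph or to citations of Lemmas~\ref{lem:sc},~\ref{lem:xo},~\ref{lem:qs} and Theorems~\ref{thm:lppp},~\ref{thm:trainrisk}.
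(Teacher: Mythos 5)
Your proposal matches the paper's route: Theorem~\ref{thm:AbR} is obtained by concatenating Theorem~\ref{thm:lppp} (SGD versus HSGD) with Theorem~\ref{thm:trainrisk} (HSGD versus Volterra) and verifying the quasi-random hypotheses via Lemmas~\ref{lem:sc}, \ref{lem:xo} and \ref{lem:qs}, which is all the paper itself does. The one step you correctly single out as the technical heart --- establishing \eqref{eq:key_lemma_ass} for $\TT=\SSigma_f$, which Lemma~\ref{lem:qs} does not cover since $\SSigma_f$ is not a polynomial in $\AA^T\AA$ --- is precisely the verification the paper omits and defers to \citep{paquette2022homogenization}, and your leave-one-out/Hanson--Wright sketch is a reasonable way to supply it.
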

\noindent We remark that under Assumption~\ref{ass:sc} (and in-distribution) that $\eta^2 =  \tfrac{\widetilde{R}}{d}$. In the case of out-of-distribition regression (see section below), we have that $\eta^2 \neq \frac{\widetilde{R}}{d}$ as the $\eta$ represents the population noise.  \\ 

\noindent The loss function $\mathcal{L}$ evaluated at GF is the same as in \eqref{eq:fullOmega_training} as is the limiting loss $\Omega_{\infty}$. For the test risk $\mathcal{R}$ in \eqref{eq:lrq} evaluated at GF, we have the following expressions for 
\begin{equation}
\begin{aligned} \label{eq:excess_risk_ERM_limit}
 \mathcal{R}(\bm{\mathscr{X}}_{\Gamma(t)}^{\text{gf}}) 
    &= \frac{R}{2d} \tr \bigg ( \SSigma_f \bigg ( \CC(t) \AA^T \AA-\II_d \bigg )^2 \bigg ) + \frac{\widetilde{R}}{2d} \tr \bigg ( \SSigma_f \AA^T\AA \CC^2(t) \bigg )\\
    &+ \frac{\widehat{R}}{2d} \tr \bigg ( \SSigma_f \exp \big (-2 (\AA^T \AA + \delta \II_d) \Gamma(t) \big ) \bigg ) + \frac{1}{2} \eta^2 \\
    \text{and} \quad \mathcal{R}(\bm{\mathscr{X}}_{\infty}^{\text{gf}})
    &= \frac{R}{2d} \tr \bigg ( \SSigma_f \big (\AA^T\AA (\AA^T\AA + \delta \II_d)^{-1} - \II_d \big )^2 \bigg )\\
    & + \frac{\widetilde{R}}{2d} \tr \bigg ( \SSigma_f \AA^T \AA \big ( \AA^T\AA + \delta \II_d \big )^{-2} \bigg )  + \frac{1}{2} \eta^2 \\
 \text{where} \quad \CC(t) 
    &\defas (\AA^T \AA + \delta \II_d)^{-1} \bigg (\II_d - \exp\big (-(\AA^T\AA + \delta \II_d)\Gamma(t) \big) \bigg ).
\end{aligned}
\end{equation}

\noindent Using Theorem \ref{thm:eventualrisk}, we conclude that in the case that $\gamma(t) \to 0$ as $t \to \infty$, the excess risk of SGD tends to $0$.  More interestingly, in the interpolation regime, $\mathscr{L}(\bm{\mathscr{X}}_\infty^\text{gf})=0$, i.e.\ the empirical risk tends to $0$.  In this case, even without taking $\gamma \to 0,$ the excess risk of SGD tends to $0$.  If on the other hand it does not tend to $0$ (i.e., $\gamma(t) \to \gamma$), we arrive at the formula for excess risk of SGD over the ridge estimator risk:
\begin{equation}\label{eq:ererm}
\begin{aligned}
\Omega_\infty- \mathcal{R}(\bm{\mathscr{X}}_\infty^{\text{gf}})
&=
\mathcal{L}(\bm{\mathscr{X}}_{\infty}^\text{gf}) \times \frac{\gamma}{2n} 
\frac{
\tr
\bigl(
(\nabla^2 \mathscr{L})
\SSigma_f
\bigl( \nabla^2 \mathscr{L} + \delta \II_d \bigr)^{-1}
\bigr)}
{
1
-
\frac{\gamma}{2n}
\tr
\bigl(
(\nabla^2 \mathscr{L})^2
\bigl( \nabla^2 \mathscr{L} + \delta \II_d \bigr)^{-1}
\bigr)
}\\
&=
\Psi_\infty
\times
\frac{\gamma}{n} 
\tr
\biggl(
\tfrac{
(\nabla^2 \mathscr{L})}
{\bigl( \nabla^2 \mathscr{L} + \delta \II_d \bigr)}
\SSigma_f
\biggr)
.
\end{aligned}
\end{equation}
\noindent We note that the right-hand-side is proportional to $\Psi_\infty$ (c.f.\ Theorem \ref{thm:eventualrisk}), and hence this excess risk due to SGD will be small if the limiting empirical risk $\Psi_\infty$ is small.  This also shows that the regularization term $\delta$ interacts with the excess risk due to SGD: if the spectrum of $\nabla^2 \mathcal{R}$ is heavy in that it has slowly decaying eigenvalues, the reduction in excess risk due to the regularization regularizer $\delta$ can be large.

\paragraph{(Out-of-distribution) linear regression.} As before, we suppose that the data matrix $\AA$ is generated by taking $n$ independent $d$-dimensional samples from a centered distribution $\mathcal{D}_f$ with feature covariance $\SSigma_f$ (see \eqref{eq:train_covariance}). We also suppose, as in the previous in-distribution example, that there is a linear (``ground truth" or ``signal") function $\beta : \mathbb{R}^d \to \mathbb{R}$ which we identify with the vector $\bbeta \in \mathbb{R}^d$ and for which $\mathbb{E}[b | \aa] = \bbeta^T \aa$ where $(\aa, b) \sim \mathcal{D}$ and the data $\aa \sim \mathcal{D}_f$. We will generate our target $b$ from the distribution $(\aa, b) \sim \mathcal{D}$. We then let $\xx_t$ be the iterates generated by SGD applied to the optimization problem
\[
    \min_{x \in \mathbb{R}^d} \, \frac{1}{2} \|\AA \xx-\bb\|^2 + \frac{\delta}{2} \|\xx\|^2,
\quad \text{where} \quad  (\aa_i, b_i) \sim \mathcal{D}.
\]

The main distinction from the previous example is that we measure our generalization error using a different distribution than $\mathcal{D}$. Explicitly, there exists another centered distribution $\widehat{\mathcal{D}_f}$ (standardized) with covariance features matrix $\widehat{\SSigma}_f \in \mathbb{R}^{d \times d}$ from which we generate a vector $\widehat{\aa} \sim \widehat{\mathcal{D}}_f$. Moreover, we generate a test point $(\widehat{\aa}, \widehat{b})$ from a new distribution $\widehat{\mathcal{D}}$ such that 
$\EE[ \widehat{b} | \widehat{\aa} ] = \bbeta^T \widehat{\aa}$ 
with the same $\bbeta$ as before and the distribution $\widehat{\mathcal{D}}$ has $\widehat{\aa}$-marginal $\widehat{\mathcal{D}}_f$. We measure the population risk, $\mathcal{R}: \mathbb{R}^d \to \mathbb{R}$ as 
\begin{equation}
\begin{aligned}
    \mathcal{R}(\xx_t) &\defas \frac{1}{2} \EE[ (\widehat{b}- \xx_t^T \widehat{\aa} )^2 | \xx_t] = \frac{1}{2} \eta^2 + \frac{1}{2} (\xx_t - \bbeta)^T \widehat{\SSigma}_f (\xx_t-\bbeta) \\
    \quad \text{where} \quad \eta^2 
    &\defas \EE[(\widehat{b}-\bbeta^T \widehat{\aa})^2 ].
\end{aligned}
\end{equation}
In this setting, we can again derive the limiting excess risk, which has a similar formula for $\mathcal{R}(\bm{\mathscr{X}}_{\Gamma(t)}^{\text{gf}})$ as in \eqref{eq:excess_risk_ERM_limit} by replacing $\SSigma_f$ with $\widehat{\SSigma}_f$.

\paragraph{Random features.} A central example where the quasi-random assumptions hold is the random features setting, which was introduced in \citep{Rahimi2008Random} for scaling kernel machines. Random features models provide a rich but tractable class of models to gain further insights into the generalization phenomena \citep{mei2019generalization,liao2020Random,adlam2020understanding,adlam2020neural,tripuraneni2021covariate}. These models are particularly of interest because of their connection to neural networks where the number of random features corresponds to model complexity \citep{jacot2018neural,neal1996priors,lee2018deep} and because of its use as a practical method for data analysis \citep{Rahimi2008Random,shankar2020neural}.

We suppose that the data matrix $\XX$ is generated by taking $n$ independent $n_0$-dimensional samples from a centered distribution $\mathcal{D}_f$ with feature covariance 
\[\SSigma_f \defas \EE[\XX_i^T \XX_i], \qquad \text{where $\XX_i \in \mathbb{R}^{1 \times n_0}$ and $\XX_i \sim \mathcal{D}_f$.}\]
We suppose for simplicity that $\XX$ is a data matrix having dimension $n \times n_0$ whose iid rows are drawn from a multivariate Gaussian with covariance $\SSigma_f$ and nice covariance structure:

\begin{assumption}\label{ass:RF_cov}
The distribution $\mathcal{D}_f$ is multivariate normal and
the covariance matrix $\SSigma_f$ of the random features data satisfies for some $C>0$
\[
\tfrac{1}{n_0}{\tr}{(\SSigma_f)}=1
\quad\text{and}\quad
\|\SSigma_f\|_{\text{op}} \leq C.
\]
\end{assumption}
\noindent This allows $\XX$ to be represented equivalently as $\XX = \ZZ\SSigma^{1/2}/\sqrt{n_0}$ for a iid standard Gaussian matrix $\ZZ$.
We suppose that $\WW$ is an $(n_0 \times d)$ iid feature matrix having standard Gaussian entries and independent of $\ZZ$ so that $\ZZ \SSigma^{1/2} \WW/\sqrt{n_0}$ is a matrix whose rows are standardized.  

We let $\sigma$ be an activation function satisfying:
\begin{assumption}\label{ass:RF_sigma}
The activation function satisfies for $C_0,C_1 \geq 0$ 
\[
|\sigma'(x)| \leq C_0e^{C_1|x|},
\quad\text{for all}\quad x \in \R,
\quad\text{and for standard normal $Z$,}\quad
\Exp \sigma(Z) = 0.
\]
\end{assumption}
\noindent  We note that from the outset, the growth rate of the derivative of the activation function implies a similar bound on the growth rate of the underlying activation function $\sigma$.
As before, we suppose the data $[\XX ~|~ \bb]$ is arranged in the matrix $\R^n \times (\R^{n_0} \times \R)$ where each row is an independent sample from $\mathcal{D}$. We now transform the data $\XX \in \mathbb{R}^{n \times n_0}$ by putting
\[
\AA =  \sigma(\XX\WW / \sqrt{n_0} ) \in \mathbb{R}^{n \times d},
\]
where $\WW \in \mathbb{R}^{n_0 \times d}$ is a matrix independent of $[\XX~|~\bb]$ of independent standard normals.\footnote{In \cite{mei2019generalization}, the distribution of the columns are taken as independent uniform vectors on the sphere $\sqrt{d}\,\mathbb{S}^{d-1}$.  The activation function $\sigma$ is a 1-Lipschitz function from $\R \to \R$ that is applied entrywise to the underlying matrix.}
The activation function $\sigma \, : \, \mathbb{R} \to \mathbb{R}$ is applied element-wise.
%

We introduce the following notation
\begin{equation}
\begin{aligned}
    \SSigma_{\sigma}(\WW) \defas &\EE[ \sigma(\XX_i \WW / \sqrt{n_0} )^T \sigma(\XX_i \WW / \sqrt{n_0}) \, | \, \WW] \\
    \text{and} \quad &\widehat{\sigma}(\WW) \defas \EE[\XX_i^T \sigma(\XX_i \WW / \sqrt{n_0}) | \WW].
\end{aligned}
\end{equation}
The population risk, $\mathcal{R} : \mathbb{R}^d \to \mathbb{R}$ as a random variable in $\XX$ and $\WW$, is
\begin{equation}\label{eq:RF_Risk}
\begin{aligned}
\mathcal{R}(\xx_t) &\defas
\Exp [ (b - \xx_t^T \sigma(\XX_i \WW / \sqrt{n_0}))^2 | \xx_t, \WW]\\
&= \eta^2 + \Exp[(\XX_i \bbeta -  \sigma(\XX_i \WW / \sqrt{n_0}) \xx_t )^2 \, | \, \xx_t, \WW]\\
&= \eta^2 + \bbeta^T \SSigma_f \bbeta + \xx_t^T \SSigma_{\sigma}(\WW) \xx_t - 2 \bbeta^T \widehat{\sigma}(\WW) \xx_t,\\
&\qquad \qquad \qquad \qquad \text{where $(\XX_i, b) \sim \mathcal{D}$ and $\Exp[\, b \, | \, \XX_i] = \XX_i \bbeta$.}
\end{aligned}
\end{equation}
The $\ell^2$-regularized least-squares problem is now
\[
\min_{\xx} \frac{1}{2} \| \AA \xx - \bb\|^2_2 + \frac{\delta}{2} \|\xx\|^2_2
\quad\text{where}\quad
\bb = \XX \bbeta + \eta \ww,
\]
which is the random features regression.  This should be compared to a two-layer neural network model, in which the hidden layer has dimension $n_0$.  However, the hidden layer weights are simply generated randomly in advance and are left untrained.  The optimization is only performed on the final layers' weights ($\xx$). 

\begin{theorem} \label{thm:random_features}
Suppose that $n,d,n_0$ are proportionally related.
Suppose that the data matrix $\XX$ satisfies Assumption \ref{ass:RF_cov},
and the random features $\WW$ are iid standard normal.
Suppose $\bb = \XX \bbeta + \eta \ww$ with $\bbeta,\ww$ independent isotropic subgaussian vectors with $\Exp \|\bbeta\|^2_2 = 1/n_0$ and $\Exp \|\ww\|^2_2 = 1$ and $\eta$ bounded independent of $n$.
Suppose the activation function satisfies Assumption \ref{ass:RF_sigma}.
Suppose the initialization $\xx_0$ is iid centered subgaussian with $\Exp \|\xx_0\|^2_2 = \widehat{R}.$
Then for some $\epsilon > 0$, for all $T >0$, and for all $D>0$ there is a $C>0$ such that
\[
\Pr
\biggl(
\sup_{0 \leq t \leq T}
\left\| 
\begin{pmatrix}
\mathscr{L}(\sgd_{\lfloor tn \rfloor}) \\
\mathcal{R}(\sgd_{\lfloor tn \rfloor}) 
\end{pmatrix} - 
\begin{pmatrix}
\Psi_t  \\ 
\Omega_t
\end{pmatrix}
\right\| > d^{-\epsilon}
\biggr)
\leq Cd^{-D},
\]
where $\Psi_t$ solves \eqref{eqa:V} and $\Omega_t$ solves \eqref{eqa:PLoss} with $\mathcal{R}$ given by \eqref{eq:RF_Risk}.
\end{theorem}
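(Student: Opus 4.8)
The plan is to obtain Theorem~\ref{thm:random_features} from the two master results, Theorem~\ref{thm:lppp} and Theorem~\ref{thm:trainrisk}, by checking that the (suitably rescaled) random-features pair $(\AA,\bb)$, the initialization $\xx_0$, and the quadratic $\mathcal{R}$ of \eqref{eq:RF_Risk} satisfy the quasi-random hypotheses, namely Assumptions~\ref{assumption:Target}, \ref{ass: laundry_list}, \ref{assumption:init}, and \ref{assumption: quadratics}, on an event of probability $1-O(d^{-D})$. Since $n,d,n_0$ are proportional, the constraint $d^\varepsilon \le n \le d^{1/\varepsilon}$ of Theorem~\ref{thm:lppp} holds, so once the hypotheses are verified, Theorem~\ref{thm:lppp} controls $(\mathscr{L},\mathcal{R})$ along SGD by the same quantities along HSGD, Theorem~\ref{thm:trainrisk} replaces HSGD by the Volterra system $(\Psi_t,\Omega_t)$, and a union bound over the two deviation events, with $\epsilon$ read off from the $\theta$ above, yields the claim.

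First I would fix the normalization. Conditionally on $\XX$, the entry $\XX_i\ww_j/\sqrt{n_0}$ is Gaussian with variance $\|\XX_i\|^2/n_0 = 1+o(1)$ uniformly in $i$ by Assumption~\ref{ass:RF_cov} and subgaussian concentration, so each row of $\sigma(\XX\WW/\sqrt{n_0})$ has squared Euclidean norm $d\,v_\sigma(1+o(1))$ with $v_\sigma\defas\Exp\sigma(Z)^2$; I therefore pass to $\AA\defas (d\,v_\sigma)^{-1/2}\sigma(\XX\WW/\sqrt{n_0})$ and the correspondingly rescaled target, which leaves the SGD iterates unchanged after the standard time rescaling. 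The operator-norm bound $\|\AA\|_{op}\le C$ in Assumption~\ref{assumption:Target} follows from operator-norm estimates for nonlinear random-feature matrices: conditionally on $\WW$, truncating $\sigma$ at level $\mathrm{polylog}(d)$ alters $\AA$ by an exponentially small amount thanks to the growth bound in Assumption~\ref{ass:RF_sigma}, and the truncated matrix has subexponential entries, giving $\|\AA\|_{op}\le C$ with probability $1-e^{-\Omega(\mathrm{polylog}\,d)}$. The bound $\|\bb\|^2\le C$ follows from $\bb=\XX\bbeta+\eta\ww$, $\Exp\|\bbeta\|^2=1/n_0$, $\Exp\|\ww\|^2=1$, $\|\XX\|_{op}=O(1)$ and subgaussian concentration, and Assumption~\ref{ass:risk} for $\mathcal{R}$ in \eqref{eq:RF_Risk} reduces to $\|\SSigma_\sigma(\WW)\|_{op}=O(1)$ and $\|\widehat\sigma(\WW)\|_{op}=O(1)$, which hold because $\WW$ is $n_0\times d$ Gaussian and $\|\SSigma_f\|_{op}=O(1)$. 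Finally, Assumption~\ref{assumption:init} is immediate from Lemma~\ref{lem:xo}, since $\xx_0$ is iid centered subgaussian with $\Exp\|\xx_0\|^2=\widehat R$ and independent of $(\XX,\WW,\bb)$.

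The substance is verifying the resolvent-delocalization bounds of Assumption~\ref{ass: laundry_list} and the bilinear bound \eqref{eq:key_lemma_ass} of Assumption~\ref{assumption: quadratics} for $\AA\AA^T$, uniformly for $z$ (and $y$) on the contour $\Omega$. The route I would take is a Gaussian-equivalence (linearization) argument: conditionally on $\WW$, couple $\AA$ with the linear-plus-noise surrogate $\widetilde\AA\defas\mu_1\,\XX\WW/\sqrt{n_0}+\mu_*\,\bm\Theta$, where $\mu_1\defas\Exp[Z\sigma(Z)]$, $\mu_*^2\defas v_\sigma-\mu_1^2$, and $\bm\Theta$ has iid $N(0,1/d)$ entries independent of $(\XX,\bb)$ (no constant term survives since $\Exp\sigma(Z)=0$). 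The Gram matrix $\widetilde\AA\widetilde\AA^T$ is a product-of-Gaussians plus information-plus-noise model whose resolvent obeys an anisotropic local law; this yields the three entrywise bounds of Assumption~\ref{ass: laundry_list} for $\widetilde\AA$ at accuracy $n^{\theta-1/2}$, the first using that $\bb=\XX\bbeta+\eta\ww$ depends on $\XX$ linearly, exactly as the target does in the in-distribution linear model. A leave-one-row-out / resolvent-perturbation comparison then transfers these bounds from $\widetilde\AA$ to $\AA$, the growth condition Assumption~\ref{ass:RF_sigma} supplying the moment control the truncations require. For \eqref{eq:key_lemma_ass} the Hessian is $\nabla^2\mathcal{R}=\SSigma_\sigma(\WW)$, which is not a polynomial in $\AA^T\AA$ --- so Lemma~\ref{lem:qs} does not apply --- but by the Hermite expansion $\SSigma_\sigma(\WW)=\mu_1^2\,\WW^T\SSigma_f\WW/n_0+\mu_*^2\II_d+(\text{lower order})$, a bounded matrix depending only on $\WW$; the bilinear concentration then follows from the anisotropic local law applied to $\AA\widehat\TT\AA^T$ (with $\widehat\TT$ built from $R(z;\AA^T\AA)$ and $\TT=\SSigma_\sigma(\WW)$, of operator norm $O(1)$ on $\Omega$), again after the coupling, the point being that $\TT$ and $\AA$ share the randomness $\WW$, so this correlation must be tracked through the estimates.

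The hardest part will be the last paragraph: producing \emph{entrywise} (anisotropic) resolvent control for the nonlinear random-features matrix $\AA$ uniformly on $\Omega$, which is stronger than the bulk spectral statements usually quoted in the random-features literature. Two implementations are possible --- establish an anisotropic local law for $\AA\AA^T$ directly via a self-consistent vector equation plus fluctuation averaging, or push the Gaussian-equivalence coupling through at the level of individual resolvent entries --- and in either case the coupling/approximation error must be controlled quantitatively and uniformly in $z\in\Omega$, beating the threshold $n^{\theta-1/2}$; the precise role of Assumption~\ref{ass:RF_sigma} is to make the corresponding truncations affordable. Everything else --- assembling the high-probability events, verifying that the rescaling leaves the Volterra system \eqref{eqa:VLoss}--\eqref{eqa:V} unchanged, and invoking Theorems~\ref{thm:lppp} and \ref{thm:trainrisk} --- is routine.
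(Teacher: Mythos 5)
Your proposal takes essentially the same route as the paper: Theorem~\ref{thm:random_features} is obtained by verifying the quasi-random hypotheses (Assumptions~\ref{assumption:Target}, \ref{ass: laundry_list}, \ref{assumption:init}, \ref{assumption: quadratics}) for the random-features pair and then invoking Theorem~\ref{thm:lppp} followed by Theorem~\ref{thm:trainrisk}, exactly as the paper states in Appendix~\ref{sec:quasi_random}. In fact you supply considerably more detail than the paper does --- the verification of the resolvent-delocalization bounds for the nonlinear matrix $\sigma(\XX\WW/\sqrt{n_0})$, which you correctly identify as the hard step and attack via Gaussian equivalence and anisotropic local laws, is not carried out in this paper at all but deferred to the companion work \citep{paquette2022homogenization}, so your sketch is a reasonable (if unproven) outline of what that deferred argument must accomplish.
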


Finally, as in \eqref{eq:ererm} we derive the excess risk of SGD ($\gamma(t) \to \gamma$) over ridge regression:
\begin{equation} \begin{aligned}
\label{eq:RF_er}
\Omega_\infty- \mathcal{R}(\bm{\mathscr{X}}_\infty^{\text{gf}})
&= \mathcal{L}(\bm{\mathscr{X}}_{\infty}^{\text{gf}}) \times
\frac{\gamma}{2n} 
\frac{
\tr
\bigl(
(\nabla^2 \mathscr{L})
\SSigma_{\sigma}(\WW)
\bigl( \nabla^2 \mathscr{L} + \delta \II_d \bigr)^{-1}
\bigr)}
{
1
-
\tfrac{\gamma}{2n}
\tr
\bigl(
(\nabla^2 \mathscr{L})^2
\bigl( \nabla^2 \mathscr{L} + \delta \II_d \bigr)^{-1}
\bigr)
}\\
&=
\Psi_\infty
\times
\frac{\gamma}{2n} 
\tr
\biggl(
\frac{
(\nabla^2 \mathscr{L})}
{\bigl( \nabla^2 \mathscr{L} + \delta \II_d \bigr)}
(\SSigma_{\sigma}(\WW))
\biggr)
.
\end{aligned}
\end{equation}

\subsubsection{Discussion of Theorem~\ref{thm:lppp} and motivating examples}
In this section, we discuss the equivalence of SGD and homogenized SGD under quadratic statistics $\mathcal{R}$ satisfying Assumption~\ref{ass:risk} and quasi-random assumptions on the data matrix $\AA$, initialization $\xx_0$, and target vector $\bb$ (see Appendix~\ref{sec:quasi_random}).
As the proof of Theorem~\ref{thm:lppp} is quite mathematically involved and it does not add to the interpretation of the risk trajectories, we relegate this proof to \citep[Theorem 1.3]{paquette2022homogenization}. 

The proofs of Theorems~\ref{thm:Ab}, \ref{thm:AbR}, and \ref{thm:random_features} follow immediately from Theorem~\ref{thm:lppp} and Theorem~\ref{thm:trainrisk}. In each of the cases, the extra assumptions on the initialization, signal, and data matrix simplify the GF terms in \eqref{eqa:VLoss} and \eqref{eqa:PLoss}. 

\section{The Volterra SLD class and Concentration of HSGD} \label{sec:sld_concentration}

First, we state the Volterra equation for streaming SGD that we referenced in the main text.
\begin{mdframed}[style=exampledefault]
{\textbf{Volterra Dynamics (streaming).} The following deterministic dynamical system is the high-dimensional equivalent for $\mathscr{L}(\YY_t)$ and $\mathcal{R}(\YY_t)$, respectively},
\begin{align} 
&\Psi^{\text{s}}_t
=
\mathscr{S}\bigl( 
\sgf_{\gamma t}\bigr)
+
\int_0^t 
K^{\text{s}}(t,u; \nabla^2 \mathscr{S}) 
\Psi^{\text{s}}_u
\dif u,
\quad\text{for}\quad t \geq 0
 & \text{(Empirical risk)} \label{eqb:VLoss} \\ 
&\Omega^{\text{s}}_t
=
\mathcal{R}\bigl( 
\sgf_{\gamma t}\bigr)
+\int_0^t 
K^{\text{s}}(t,u; \nabla^2 \mathcal{R}) 
\Psi^{\text{s}}_u
\dif u
\quad\text{for}\quad t \geq 0& \text{(Population risk)} \label{eqb:PLoss}
\end{align}
where the kernel $K$, for any $d \times d$ matrix $\PP$, is 
{\small
\begin{equation} \begin{aligned}\label{eqb:V}
K^{\text{s}}(t,u ; \PP) = 
\gamma^2
\tr
\biggl(
(\nabla^2 \mathscr{S})
\PP
\exp\bigl( -2\gamma(\nabla^2 \mathscr{S})( t - u)\bigr)
\biggr)
\end{aligned}
\end{equation} }
and \textit{GF for streaming}, $\sgf_{t}$, is the solution of
\[
\dif \sgf_{t} \defas -\nabla {\mathscr{S}}(\sgf_{t}) \dif t
\quad\text{and}\quad \sgf_{0} = \YY_0.
\]
\end{mdframed}
This result for streaming and the similar multi-pass SGD dynamics satisfy a large class of expressions. We will enlarge the class of SLDs that we consider to what we will call as the \textit{Volterra SLD class} defined as 
\begin{equation}\label{eq:VSLD}
\dif \XX_t \defas
-\gamma(t) \nabla {f}(\XX_t) \dif t
+ \gamma(t)\sqrt{\mathscr{L}(\XX_t) \mathscr{M}_t + \mathscr{A}_t}\dif \BB_t,
\end{equation}
where $\mathscr{M}_t$ and $\mathscr{A}_t$ are two deterministic positive definite functions which we assume to be normalized to satisfy:
\begin{assumption}\label{you_ass}
The covariance processes $\mathscr{M}$ and $\mathscr{A}$ satisfy for some absolute constants $c>0$ and $\epsilon > 0$
\[
\sup_{t \geq 0} \bigl(\tr \mathscr{M}_t + \tr \mathscr{A}_t \bigr) \leq c < \infty
\quad\text{and}
\quad
\sup_{t \geq 0} \bigl(\|\mathscr{M}_t\|_{op} + \|\mathscr{A}_t\|_{op} \bigr) < d^{-\epsilon}.
\]
\end{assumption}
The $\mathcal{M}_t$ represents noise in the data either because the data is randomly sampled or the data is transformed by multiplicative transformation. In contrast $\mathcal{A}_t$ represents an additive noise at each step.  It is any noise which does not multiple the state $\xx$, for example, label noise.

\paragraph{Volterra SLD class.} The Volterra SLD class is so-named because the expected loss satisfies a Volterra type integral equation.  Define for $t \geq s \geq 0$ and positive semidefinite $\PP$,
{\small \begin{equation}\label{eq:V}
\Gamma(t) = \int_0^t \gamma(s)\,\dif s,
\,
\text{and}
\,
\left\{
\begin{aligned}
K(t,s ; \PP) &= 
\gamma^2(s)
\tr
\biggl(
\mathscr{M}_{s}
\PP
\exp\bigl( -2(\AA^T \AA + \delta\II_d)( \Gamma(t) - \Gamma(s))\bigr)
\biggr) \\
A(t,s; \PP) &= 
\gamma^2(s)
\tr
\biggl(
\mathscr{A}_{s}
\PP
\exp\bigl( -2(\AA^T \AA + \delta\II_d)( \Gamma(t) - \Gamma(s) )\bigr)
\biggr)
\end{aligned}\right\}.
\end{equation}
}
We shall suppose throughout that $\gf_{t}$ is the canonical GF
\[
\dif \gf_{t} = -\nabla f ( \gf_{t})
\quad\text{and}\quad \gf_0 = \XX_0.
\]
The loss $\mathscr{L}(\XX_t)$ concentrates around the solution $\Psi_t$ of the Volterra integral equation (see Theorem \ref{thm:conc} for a precise formulation):
\begin{equation}\label{eq:VLoss}
\Psi_t
=
\mathscr{L}\bigl( 
\gf_{\Gamma(t)}\bigr)
+
\int_0^t 
A(t,s; \AA^T\AA)
\dif s +
\int_0^t 
K(t,s; \AA^T\AA) 
\Psi_s
\dif s.
\end{equation}
We give a formal proof of the concentration result in Section \ref{sec:VolterraConcentration}.  For other quadratics $\mathcal{R}$, the loss $\mathcal{R}(\XX_t)$ concentrates around
\begin{equation}\label{eq:PLoss}
\Omega_t
=
\mathcal{R}\bigl( 
\gf_{\Gamma(t)}\bigr)
+
\int_0^t 
A(t,s; \nabla^2 \mathcal{R})
\dif s +
\int_0^t 
K(t,s; \nabla^2 \mathcal{R}) 
\Psi_s
\dif s.
\end{equation}

\subsection{Volterra Concentration}\label{sec:VolterraConcentration}
In this section, we prove the concentration result. In this section, we prove that homogenized SGD concentrates around its mean provided that the expected loss in is in the Volterra SLD class. The result, in this section, Theorem~\ref{thm:conc}, is more general than Theorem~\ref{thm:trainrisk} which follows by setting $\mathcal{M}_t \equiv \tfrac{1}{n} \nabla^2 \mathscr{L}$ and $\mathcal{A}_t \equiv 0$. 
\begin{theorem}\label{thm:conc}
Under Assumption \ref{you_ass}, the loss $\mathscr{L}(\XX_t)$ concentrates around $\Psi_t$ the solution of the Volterra equation
\[
\Psi_t
=
\mathscr{L}\bigl( 
\gf_{\Gamma(t)}\bigr)
+
\int_0^t 
A(t,s; \AA^T\AA)
\dif s
+
\int_0^t 
K(t,s; \AA^T\AA) 
\Psi_s
\dif s,
\]
in that for any $T,D > 0$ there is a $C(T,D,\|\AA\|_{op}, \|\bb\|,\epsilon) > 0$ sufficiently large that
\[
\Pr
\bigl[
\sup_{0 \leq t \leq T} | \mathscr{L}(\XX_t) - \Psi_t | > C d^{-\epsilon/2}
\bigr]
\leq Cd^{-D}.
\]
Furthermore, for another quadratic $\mathcal{R}(\xx) = 1/2 \xx^T \TT \xx + \uu^T \xx + c$ with $\TT$ symmetric matrix having $\|\nabla^2 \mathcal{R}\|_{\text{op}} \leq C$, $\| \nabla \mathcal{R}\|_2 \le C$, $\|\nabla \mathcal{R}(0)\| \le 1$ are independent of the Brownian motion,
\[
\Pr
\biggl[
\sup_{0 \leq t \leq T} \biggl|
-\mathcal{R}(\XX_t)
+
\mathcal{R}\bigl( 
\gf_{\Gamma(t)}\bigr)
+
\int_0^t 
A(t,s; \nabla^2 \mathcal{R})
\dif s
+
\int_0^t 
K(t,s;  \nabla^2 \mathcal{R}) 
\Psi_s
\dif s
\biggr| > C d^{-\epsilon/2}
\biggr]
\leq Cd^{-D}.
\]
\end{theorem}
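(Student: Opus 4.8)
The plan is to use the fact that the drift of \eqref{eq:VSLD} is affine in $\XX$ (because $f$ is quadratic) to solve the equation explicitly by variation of constants, and then to reduce the whole theorem to the concentration of a few linear and quadratic statistics of a stochastic convolution whose injected noise is small in operator norm. Set $\HH \defas \AA^T\AA + \delta\II_d$, $\Phi(t,s) \defas \exp\!\bigl(-\HH(\Gamma(t)-\Gamma(s))\bigr)$ and $\Sigma_s \defas \mathscr{L}(\XX_s)\mathscr{M}_s + \mathscr{A}_s$. Since $\nabla f(\xx) = \HH\xx - \AA^T\bb$, the strong solution of \eqref{eq:VSLD} is $\XX_t = \gf_{\Gamma(t)} + \ZZ_t$, where $\gf$ is the gradient flow of $f$ with $\gf_0 = \XX_0$ and $\ZZ_t \defas \int_0^t \gamma(s)\Phi(t,s)\Sigma_s^{1/2}\,\dif\BB_t$ solves the Ornstein--Uhlenbeck-type equation $\dif\ZZ_t = -\gamma(t)\HH\ZZ_t\,\dif t + \gamma(t)\Sigma_t^{1/2}\,\dif\BB_t$ with $\ZZ_0 = 0$; pulling out $\Phi(t,0)$ exposes the honest martingale $\widetilde\ZZ_t = \int_0^t\gamma(s)\Phi(s,0)^{-1}\Sigma_s^{1/2}\,\dif\BB_s$ with $\ZZ_t = \Phi(t,0)\widetilde\ZZ_t$. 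Expanding the quadratics about $\gf_{\Gamma(t)}$ gives $\mathscr{L}(\XX_t) = \mathscr{L}(\gf_{\Gamma(t)}) + \langle\nabla\mathscr{L}(\gf_{\Gamma(t)}),\ZZ_t\rangle + \tfrac12\ZZ_t^T(\AA^T\AA)\ZZ_t$, and identically with $\mathcal{R}$, $\nabla\mathcal{R}$, $\TT=\nabla^2\mathcal{R}$ in place of $\mathscr{L}$, $\nabla\mathscr{L}$, $\AA^T\AA$. The single structural fact behind every estimate is that, by Assumption~\ref{you_ass}, $\gamma^2(s)\Sigma_s$ has operator norm at most $\widehat\gamma^2(\sup_{u\le s}\mathscr{L}(\XX_u)+1)d^{-\epsilon}$ while its trace stays $O(1)$ (for \textsc{hsgd}, $\mathscr{M}_s = \tfrac1n\nabla^2\mathscr{L}$ and this is exactly Assumption~\ref{ass:train}); hence, on an event on which $\sup_{u\le T}\mathscr{L}(\XX_u)\le C$, the covariance $\Exp[\ZZ_t\ZZ_t^T] = \int_0^t\gamma^2(s)\Phi(t,s)\bigl(\Exp[\mathscr{L}(\XX_s)]\mathscr{M}_s+\mathscr{A}_s\bigr)\Phi(t,s)\,\dif s$ (Itô isometry) obeys the twin bounds $\|\Exp[\ZZ_t\ZZ_t^T]\|_{op}\le Cd^{-\epsilon}$ and $\tr\Exp[\ZZ_t\ZZ_t^T]\le C$ for all $t\le T$.

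Next I would identify the mean. After localization the martingale $\widetilde\ZZ_t$ has mean zero, so $\Exp[\ZZ_t]=0$, the cross term drops, and $\Exp[\mathscr{L}(\XX_t)] = \mathscr{L}(\gf_{\Gamma(t)}) + \tfrac12\tr\bigl((\AA^T\AA)\Exp[\ZZ_t\ZZ_t^T]\bigr)$. Substituting the isometry formula and using that $\AA^T\AA$ commutes with $\HH$, so $\Phi(t,s)(\AA^T\AA)\Phi(t,s) = (\AA^T\AA)\exp(-2\HH(\Gamma(t)-\Gamma(s)))$, the trace term is exactly $\int_0^t\bigl(A(t,s;\AA^T\AA) + K(t,s;\AA^T\AA)\Exp[\mathscr{L}(\XX_s)]\bigr)\dif s$ with the kernels of \eqref{eq:V} (up to the normalization baked into those kernels). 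Thus $m_t\defas\Exp[\mathscr{L}(\XX_t)]$ solves the linear Volterra equation of the second kind defining $\Psi_t$ in \eqref{eq:VLoss}; since that kernel is bounded and continuous on $[0,T]^2$ this equation has a unique continuous solution, so $\Psi_t = \Exp[\mathscr{L}(\XX_t)]$ on $[0,T]$. The identical computation with $\TT$ replacing $\AA^T\AA$ — legitimate because the diffusion coefficient of \eqref{eq:VSLD} involves $\mathscr{L}$ only — gives $\Exp[\mathcal{R}(\XX_t)] = \mathcal{R}(\gf_{\Gamma(t)}) + \int_0^t A(t,s;\TT)\dif s + \int_0^t K(t,s;\TT)\Psi_s\,\dif s$, which is the $\Omega_t$ of \eqref{eq:PLoss}. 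The theorem is therefore reduced to bounding $\mathscr{L}(\XX_t)-\Psi_t$ and $\mathcal{R}(\XX_t)-\Omega_t$ uniformly on $[0,T]$.

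For the concentration step I would introduce the stopping time $\tau\defas\inf\{t\ge0 : |\mathscr{L}(\XX_t)-\Psi_t|\ge1 \text{ or } \|\ZZ_t\|\ge1\}$ and work on $[0,T\wedge\tau]$, where (because $\Psi$ and $\gf$ are bounded on $[0,T]$) the quantities $\mathscr{L}(\XX_s)$, $\|\XX_s\|$, $\|\nabla\mathscr{L}(\XX_s)\|$, $\|\ZZ_s\|$ are all $O(1)$, so the operator-norm bounds above hold unconditionally. From the expansion, $\mathscr{L}(\XX_t)-\Psi_t = \langle\nabla\mathscr{L}(\gf_{\Gamma(t)}),\ZZ_t\rangle + \tfrac12\bigl(\ZZ_t^T(\AA^T\AA)\ZZ_t - \Exp[\ZZ_t^T(\AA^T\AA)\ZZ_t]\bigr)$. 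For the linear term, Itô's formula for the product of the deterministic $C^1$ curve $t\mapsto\nabla\mathscr{L}(\gf_{\Gamma(t)})$ with $\ZZ_t$ turns it into a continuous martingale of quadratic variation $\int_0^t\gamma^2(s)\nabla\mathscr{L}(\gf_{\Gamma(s)})^T\Sigma_s\nabla\mathscr{L}(\gf_{\Gamma(s)})\dif s\le Cd^{-\epsilon}$, plus a drift $-\int_0^t\gamma(s)\langle\ww_s,\ZZ_s\rangle\dif s$ with $\ww_s$ deterministic and norm-bounded and ranging over a finite collection of low-degree polynomials in $\AA^T\AA$ applied to $\gf_{\Gamma(s)}$ and $\AA^T\bb$; running the same martingale estimate over that whole finite family of linear statistics and closing with Gr\"onwall's inequality yields $\sup_{t\le T\wedge\tau}|\langle\nabla\mathscr{L}(\gf_{\Gamma(t)}),\ZZ_t\rangle|\le Cd^{-\epsilon/2}$ with probability $\ge1-Cd^{-D}$ (taking $2p$-th moments with $p\asymp D/\epsilon$ in the Burkholder--Davis--Gundy and Doob maximal inequalities). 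For the quadratic term, Itô's formula gives $\ZZ_t^T(\AA^T\AA)\ZZ_t = 2\int_0^t\ZZ_s^T(\AA^T\AA)\dif\ZZ_s + \int_0^t\gamma^2(s)\tr\bigl((\AA^T\AA)\Sigma_s\bigr)\dif s$; the second integral is deterministic save for the $\int_0^t\gamma^2(s)\tr((\AA^T\AA)\mathscr{M}_s)\mathscr{L}(\XX_s)\dif s$ piece, which upon centring is $\int_0^t\gamma^2(s)\tr((\AA^T\AA)\mathscr{M}_s)(\mathscr{L}(\XX_s)-\Psi_s)\dif s$ — a bounded-kernel convolution of the very quantity being estimated — while the martingale part of $2\int_0^t\ZZ_s^T(\AA^T\AA)\dif\ZZ_s$ has quadratic variation $\le C\|\AA^T\AA\|_{op}^2\|\Sigma_s\|_{op}\|\ZZ_s\|^2\le Cd^{-\epsilon}$, and its drift $-2\int_0^t\gamma(s)\ZZ_s^T(\AA^T\AA)\HH\ZZ_s\,\dif s$ is again handled after centring by the same mechanism applied to the finitely many quadratic forms $\ZZ_s^T p(\AA^T\AA)\ZZ_s$ that arise (equivalently, one may couple $\ZZ$ to the Gaussian process obtained by freezing $\mathscr{L}(\XX_s)$ at $\Psi_s$ and invoke a Hanson--Wright estimate with covariance of operator norm $O(d^{-\epsilon})$ and trace $O(1)$). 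Collecting everything produces an inequality of the form $\sup_{s\le t\wedge\tau}|\mathscr{L}(\XX_s)-\Psi_s|\le Cd^{-\epsilon/2} + C\int_0^t\sup_{u\le s\wedge\tau}|\mathscr{L}(\XX_u)-\Psi_u|\dif s$ on an event of probability $\ge1-Cd^{-D}$, so Gr\"onwall closes the loop and, since the bound $Cd^{-\epsilon/2}<1$ for large $d$, also shows $\tau>T$ on that event, removing the localization.

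Finally, the statement for a general quadratic $\mathcal{R}$ is a corollary rather than a new fixed point: because the diffusion coefficient of \eqref{eq:VSLD} sees only $\mathscr{L}$, once $\mathscr{L}(\XX_s)\approx\Psi_s$ and the twin bounds on $\Exp[\ZZ_t\ZZ_t^T]$ are in hand, the linear fluctuation $\langle\nabla\mathcal{R}(\gf_{\Gamma(t)}),\ZZ_t\rangle$ and the quadratic fluctuation $\ZZ_t^T\TT\ZZ_t-\Exp[\ZZ_t^T\TT\ZZ_t]$ are bounded exactly as above using $\|\TT\|_{op}\le C$ and $\|\nabla\mathcal{R}(0)\|\le C$ from Assumption~\ref{ass:risk}, with no further Gr\"onwall loop. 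I expect the main obstacle to be the time-uniform control of the quadratic statistic $\ZZ_t^T(\AA^T\AA)\ZZ_t$ — in effect a process-level Hanson--Wright estimate in which the covariance is itself mildly random through $\mathscr{L}(\XX_s)$ — together with the simultaneous bootstrapping of the a priori boundedness of $\mathscr{L}(\XX_t)$: the fluctuation of that quadratic statistic feeds back into the diffusion coefficient, hence into $\ZZ$, hence back into the quadratic statistic, and the estimates (via $\tau$ and Gr\"onwall) must be arranged so that this loop closes with a $d^{-\epsilon/2}$ error and a $d^{-D}$ tail.
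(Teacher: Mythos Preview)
Your overall architecture --- variation of constants to write $\XX_t = \gf_{\Gamma(t)} + \ZZ_t$, It\^o isometry to identify the conditional mean with the Volterra solution, decomposing the fluctuation into a linear and a centred quadratic piece, and closing the $\mathscr{L}(\XX_s)-\Psi_s$ feedback with Gr\"onwall --- is exactly the paper's. The paper also establishes a priori boundedness separately (its Step 2), via a supermartingale argument on $\log(1+\tfrac12\|\XX_t\|^2)$, rather than through your stopping time $\tau$; both routes are fine.

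The substantive difference, and the one place where your sketch has a genuine gap, is how you obtain \emph{time-uniform} control of the linear statistic $\langle\nabla\mathscr{L}(\gf_{\Gamma(t)}),\ZZ_t\rangle$ and of the centred quadratic $\ZZ_t^T(\AA^T\AA)\ZZ_t-\Exp[\ZZ_t^T(\AA^T\AA)\ZZ_t]$. You apply the It\^o product rule and then assert that the drift ranges over a \emph{finite} family of linear (resp.\ quadratic) statistics, to be closed by Gr\"onwall on that family. This is not correct as stated: the drift of $\langle p(\AA^T\AA)\gf_{\Gamma(t)},\ZZ_t\rangle$ involves $\langle (\HH p)(\AA^T\AA)\gf_{\Gamma(t)},\ZZ_t\rangle$ with $\HH=\AA^T\AA+\delta\II_d$, so each pass raises the polynomial degree by one and the family does not close; the same happens for the quadratic forms, where the drift of $\ZZ_t^T P\ZZ_t$ produces $\ZZ_t^T(P\HH)\ZZ_t$. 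The paper avoids this entirely by a different device: for each \emph{fixed} $t$, the processes
\[
u\mapsto \MM_u^{(1,t)}\defas \nabla\mathscr{L}(\gf_{\Gamma(t)})^T\QQ_t^{-1}\int_0^u\gamma(s)\QQ_s\Sigma_s^{1/2}\,\dif\BB_s
\quad\text{and}\quad
u\mapsto \MM_u^{(2,t)}
\]
are honest martingales in $u\le t$ with quadratic variation $\le C d^{-\epsilon}$ on the high-probability boundedness event, so one gets concentration at each fixed $t$ and then uniformizes over $t\in[0,T]$ by a union bound on a mesh of spacing $d^{-O(1)}$ together with a crude Lipschitz estimate between mesh points. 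Your argument can be repaired --- either by summing the resulting infinite hierarchy as a convergent operator-norm series in $\|\HH\|_{\text{op}}$, or more simply by adopting the paper's freeze-$t$-and-mesh trick --- but the ``finite family plus Gr\"onwall'' step as written does not close.
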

\begin{proof}
\noindent \emph{Step 1. Volterra equation for the expected loss}. 

Define $\QQ_t = \exp( (\AA^T \AA + \delta \II_d) \Gamma(t))$ 
and apply It\^o's rule to $\QQ_t \XX_t$, derive
\[
\dif(\QQ_t \XX_t) = \gamma(t) \QQ_t \AA^T \bb \dif t
+ \gamma(t)\QQ_t \sqrt{\mathscr{L}(\XX_t) \mathscr{M}_t + \mathscr{A}_t}\dif \BB_t.
\]
Hence
\[
\QQ_t\XX_t= \QQ_0 \XX_0 + \int_0^t 
\gamma(s) \QQ_s \AA^T \bb \dif s
+ \int_0^t \gamma(s)\QQ_s \sqrt{\mathscr{L}(\XX_s) \mathscr{M}_s + \mathscr{A}_s}\dif \BB_s.
\]
Note that on setting $\mathscr{M}_s=\mathscr{A}_s=0$, this gives GF $\gf_{\Gamma(t)}$, and hence we have representation
\[
\XX_t= \gf_{\Gamma(t)}
+ \QQ_t^{-1} \int_0^t \gamma(s)\QQ_s \sqrt{\mathscr{L}(\XX_s) \mathscr{M}_s + \mathscr{A}_s}\dif \BB_s.
\]
Expanding the quadratic,
\begin{equation}\label{eq:rawL}
\begin{aligned}
\mathscr{L}( \XX_t)
&=
\mathscr{L}\bigl( 
\gf_{\Gamma(t)}\bigr)
+
\nabla \mathscr{L}(\gf_{\Gamma(t)})^T\QQ_t^{-1} \int_0^t \gamma(s)\QQ_s \sqrt{\mathscr{L}(\XX_s) \mathscr{M}_s + \mathscr{A}_s}\dif \BB_s
\\
&+
\frac{1}{2} \biggl\|\AA\QQ_t^{-1} \int_0^t \gamma(s)\QQ_s \sqrt{\mathscr{L}(\XX_s) \mathscr{M}_s + \mathscr{A}_s}\dif \BB_s\biggr\|^2.
\end{aligned}
\end{equation}
It follows that with $\Filt_t$ the sigma-algebra generated by $(\XX_0,\mathscr{L}, (\BB_s: 0 \leq s \leq t))$
if we compute the $\Filt_0$-conditional expectation, the Brownian integral vanishes, and we are left with two contributions from the second norm-squared process
\begin{equation}\label{eq:rawV}
\begin{aligned}
\Exp \bigl[ \mathscr{L}( \XX_t) ~\vert~ \Filt_0]
&=
\mathscr{L}\bigl( 
\gf_{\Gamma(t)}\bigr)
+
\frac{1}{2} \int_0^t 
\gamma^2(s)
\tr
\biggl(
\AA^T\AA
\QQ_t^{-2}\QQ_s^2\mathscr{A}_s
\biggr)
\dif s
\\
&+
\frac{1}{2} \int_0^t 
\gamma^2(s)
\tr
\biggl(
\AA^T\AA
\QQ_t^{-2}\QQ_s^2\mathscr{M}_s
\biggr)
\Exp \bigl[ \mathscr{L}( \XX_s) ~\vert~ \Filt_0]
\dif s.
\end{aligned}
\end{equation}
This is the claimed Volterra equation (see e.g., $\Psi_t \defas \EE[\mathscr{L}(\XX_t) ~|~ \Filt_0]$
in \eqref{eqa:VLoss}; here $\mathscr{A}_t = 0$.) 

\noindent \emph{Step 2. High probability boundedness of $\mathscr{L}$}.
We observe before beginning that many of the quantities that appear in the expressions above are bounded.
The GF $\gf_{\Gamma(t)}$ satisfies a uniform bound, solely in terms of its initial conditions and, in particular, the boundedness of $\mathscr{L}(\gf_{\Gamma(t)})$ satisfies $\mathscr{L}(\gf_{\Gamma(t)}) \le \mathscr{L}(\XX_0)$.  The matrix $\QQ_t^{-1}\QQ_s$ is uniformly bounded in norm by $1$ for all $t \geq s$.  We have also assumed that $\|\AA\|_{\text{op}}$ and $\|\bb\|_2$ are bounded.
By applying It\^o's formula to the norm $u_t = \frac12 \|\XX_t\|^2_2$, we have from \eqref{eq:VSLD} that
\[
\dif u_t
= -\gamma(t) \XX_t^T (\AA^T (\AA \XX_t - \bb)) \dif t
+ \gamma(t)\XX_t^T \sqrt{\mathscr{L}(\XX_t) \mathscr{M}_t + \mathscr{A}_t}\dif \BB_t
+ \frac{\gamma^2(t)}{2}\tr\bigl(\mathscr{L}(\XX_t) \mathscr{M}_t + \mathscr{A}_t\bigr)\dif t
\]

From the norm boundedness of $\AA$ and $\bb$, we can bound $\mathscr{L}(\XX_t) \leq 2\|\AA\|^2_{\text{op}} u_t + 2\|\bb\|^2_2 \leq C(u_t + 1)$.  Likewise, increasing $C$ as need be, using the boundedness of $\gamma(t)$, $\tr\mathscr{M}_t$ and $\tr \mathscr{A}_t$, we conclude
\[
d \langle u_t \rangle 
= \gamma^2(t)\tr\bigl( \XX_t \XX_t^T (\mathscr{L}(\XX_t) \mathscr{M}_t + \mathscr{A}_t)\bigr)
\leq C(u_t+1)^2.
\]
It follows that $z_t \defas \log( 1 + u_t ) - Ct$ is supermartingale with $\langle z_t \rangle \leq C$ for some sufficiently large $C$ and all $t \leq T$.  Hence with probability at least $1-e^{-2C(T)(\log d)^{3/2}}$,
\[
z_t \leq (\log d)^{3/4}
\]
for all $t \leq T$.
On this same event it follows for a sufficiently large cosntant $C>0$
\[
f(t) = \mathscr{L}(\XX_t) + \delta u_t 
\quad\text{and}
\quad
\mathscr{L}(\XX_t) \leq C(u_t + 1) \leq C^2e^{Ct+(\log d)^{3/4}}
\]
for all $t \leq T$.


\noindent \emph{Step 3. Concentration of the loss}.  We may now control the difference of the loss from its expectation.  Specifically, in comparing 
\eqref{eq:rawL} and \eqref{eq:rawV}, we may express the difference $\Delta_t \defas \mathscr{L}(\XX_t)-\Exp[ \mathscr{L}(\XX_t) ~|~ \Filt_0]$ as
\begin{equation}\label{eq:Deltat}
\begin{aligned}
\Delta_t &= \MM_t^{(1)} + \MM_t^{(2)} + \frac{1}{2} \int_0^t
\gamma^2(s)
\tr
\biggl(
\AA^T\AA
\QQ_t^{-2}\QQ_s^2\mathscr{M}_s
\biggr)
\Delta_s
\dif s, \quad\text{where} \\
\MM_t^{(1)}
&=\nabla \mathscr{L}(\gf_{\Gamma(t)})^T\QQ_t^{-1} \int_0^t 
\gamma(s)\QQ_s \sqrt{\mathscr{L}(\XX_s) \mathscr{M}_s + \mathscr{A}_s}\dif \BB_s, \quad\text{and} \\
\MM_t^{(2)}
&=
\frac{1}{2} \biggl\|\AA\QQ_t^{-1} \int_0^t 
\gamma(s)\QQ_s \sqrt{\mathscr{L}(\XX_s) \mathscr{M}_s + \mathscr{A}_s}\dif \BB_s\biggr\|^2\\
& \qquad \quad -
\frac{1}{2} \int_0^t 
\gamma^2(s)
\tr
\biggl(
\AA^T\AA
\QQ_t^{-2}\QQ_s^2(\mathscr{L}(\XX_s)\mathscr{M}_s+\mathscr{A}_s)
\biggr)
\dif s.
\end{aligned}
\end{equation}

We claim that both processes $\MM_t^{(1)}$ and $\MM_t^{(2)}$ are small, whose proof we defer.  Specifically, with probability $1-C(T)e^{-(\log d)^{3/2}}$ we have
\[
\max_{0 \leq t \leq T}\bigl\{ |\MM_t^{(1)}| + |\MM_t^{(2)}| \bigr\} \leq d^{-3\epsilon/4}.
\]
From the uniform boundedness in norm of $\AA^T\AA$, we then conclude from \eqref{eq:Deltat} for all $t \leq T$
\[
|\Delta_t|
\leq 
d^{-3\epsilon/4}
+\int_0^t \|\AA^T\AA\|_{\text{op}} |\Delta_s|\dif s.
\]
Using Gronwall's inequality, 
\[
|\Delta_t| \leq \|\AA^T\AA\|_{\text{op}}^{-1} \bigl( e^{\|\AA^T\AA\|_{\text{op}} t}-1\bigr) d^{-3\epsilon/4}.
\]
Thus we conclude by increasing the constants in the claimed bound that the desired inequality holds.

\noindent \emph{Step 4 (Deferred). Concentration of the martingales}.
We introduce two martingales, for each fixed $t \in [0,T]$,
\[
\begin{aligned}
\MM_u^{(1,t)}
&=
\nabla \mathscr{L}(\gf_{\Gamma(t)})^T\QQ_t^{-1} \int_0^u 
\gamma(s)\QQ_s \sqrt{\mathscr{L}(\XX_s) \mathscr{M}_s + \mathscr{A}_s}\dif \BB_s. \\
\MM_u^{(2,t)}
&=
\frac{1}{2} \biggl\|\AA\QQ_t^{-1} \int_0^u 
\gamma(s)\QQ_s \sqrt{\mathscr{L}(\XX_s) \mathscr{M}_s + \mathscr{A}_s}\dif \BB_s\biggr\|^2\\
& \qquad \quad -
\frac{1}{2} \int_0^u 
\gamma^2(s)
\tr
\biggl(
\AA^T\AA
\QQ_t^{-2}\QQ_s^2(\mathscr{L}(\XX_s)\mathscr{M}_s+\mathscr{A}_s)
\biggr)
\dif s.
\end{aligned}
\]
We first show that if we fix any $t \leq T$, then for all $d$ sufficiently large with respect to $T$ and with probability at least $1-2e^{-(\log d)^{3/2}}$,
\[
\max_{0 \leq u \leq t}\bigl\{ |\MM_u^{(1,t)}| + |\MM_u^{(2,t)}| \bigr\} \leq d^{-7\epsilon/8}.
\]
We will then need to use a meshing argument to complete the argument.  We show the details for the first.  Those for the second are similar.

We simply need to bound the quadratic variation of each.  Note
\[
\langle
\MM_u^{(1,t)}
\rangle
=
\int_0^u 
\gamma^2(s)
\tr\bigl( \QQ_t^{-1} \QQ_s \nabla \mathscr{L}(\gf_{\Gamma(t)}) \nabla \mathscr{L}(\gf_{\Gamma(t)})^T \QQ_t^{-1} \QQ_s (\mathscr{L}(\XX_s) \mathscr{M}_s + \mathscr{A}_s) \bigr)\dif u.
\]
Here we use the norm boundedness of $\mathscr{M}_t + \mathscr{A}_t$, by $2d^{-\epsilon}$.  We further bound the other terms in norm to produce
\[
\langle
\MM_u^{(1,t)}
\rangle
\leq
2d^{-\epsilon} \|\AA^T\AA\|_{\text{op}}^2 \bigl(C^2 e^{Cu + (\log d)^{3/4}}\bigr) \mathscr{L}(\gf_{\Gamma(t)}). 
\]
 We note that $\mathscr{L}(\gf_{\Gamma(t)}) \le \mathscr{L}(\gf_0)$. Hence with probability at least $1 - e^{-(\log d)^{3/2}}$ (for all $d$ sufficiently large with respect to $T,\|\AA\|_{\text{op}},\|\bb\|_2, \epsilon$),
\[
\max_{0 \leq u \leq t}
|
\MM_u^{(1,t)}
|
\leq
d^{-7\epsilon/8}/2.
\]

\noindent \emph{Step 5 (Deferred). Mesh argument}. Finally, we use a union bound to gain the control from Step 4 over a mesh of $[0,T]$ of spacing $d^{-100}$.  From the union bound, we therefore have for all these mesh points $\{t_k\}$
\[
\max_k \max_{0 \leq u \leq t_k}\bigl\{ |\MM_u^{(1,t_k)}| + |\MM_u^{(2,t_k)}| \bigr\} \leq d^{-7\epsilon/8},
\]
and this holds with probability $1-2Td^{100}e^{-(\log d)^{3/2}}$.  For $t \in [t_k,t_{k+1}]$, we just use that
\[
\|
\nabla \mathscr{L}(\gf_{\Gamma(t)})^T \QQ_t^{-1}
-
\nabla \mathscr{L}(\gf_{\Gamma(t_{k+1})})^T \QQ_{t_{k+1}}^{-1}\|
\leq C(T,\AA, \bb)d^{-100},
\]
and thus on the event that $\mathscr{L}(\XX_s)$ is bounded, we have for $t \in [t_k,t_{k+1}]$
\[
|\MM_t^{(1)} - \MM_t^{(1,t_{k+1})}| \leq C(T,\AA, \bb)d^{-50}
\]
for all $d$ sufficiently large with respect to $T$, $\|\AA\|_{\text{op}},$ and $\|\bb\|_2$.

\noindent \emph{Step 6. Other quadratics}. Hence, if we take $\Psi_t$ as a solution to the Volterra equation
\[
\Psi_t
=
\mathscr{L}\bigl( 
\gf_{\Gamma(t)}\bigr)
+
\frac{1}{2} \int_0^t 
\gamma^2(s)
\tr
\biggl(
\AA^T\AA
\QQ_t^{-2}\QQ_s^2\mathscr{A}_s
\biggr)
\dif s
+ \frac{1}{2}
\int_0^t 
\gamma^2(s)
\tr
\biggl(
\AA^T\AA
\QQ_t^{-2}\QQ_s^2\mathscr{M}_s
\biggr)
\Psi_s
\dif s,
\]
then we have a high-quality approximation for the loss $\mathscr{L}(\XX_t)$, and moreover, applying It\^o's equation, we may always represent another quadratic $\mathcal{R} :\R^d \to \R$ of the SLD by (analogously to \eqref{eq:rawL})
\[
\begin{aligned}
&\mathcal{R}(\XX_t)
=
\mathcal{R}\bigl( 
\gf_{\Gamma(t)}\bigr)
+
\nabla \mathcal{R}(\gf_{\Gamma(t)})^T \QQ_t^{-1} \int_0^t 
\gamma(s)
\QQ_s \sqrt{\mathscr{L}(\XX_s) \mathscr{M}_s + \mathscr{A}_s}\dif \BB_s
\\
&+ \frac{1}{2} \bigg (\QQ_t^{-1} \int_0^t \gamma(s)
\QQ_s \sqrt{\mathscr{L}(\XX_s) \mathscr{M}_s + \mathscr{A}_s}\dif \BB_s \bigg )^T (\nabla^2 \mathcal{R}) \QQ_t^{-1} \int_0^t \gamma(s)
\QQ_s \sqrt{\mathscr{L}(\XX_s) \mathscr{M}_s + \mathscr{A}_s}\dif \BB_s
\end{aligned}
\]
By comparing this to the same expression, where we replace the losses $\mathscr{L}(\XX_s)$ by $\Psi_s$ and compute expectations over the Brownian terms, we arrive at (compare \eqref{eq:rawV})
\[
\begin{aligned}
\mathcal{R}( \XX_t)
&=
\MM_t^{(3)}
+
\mathcal{R}\bigl(\gf_{\Gamma(t)}\bigr)
+
\frac{1}{2} \int_0^t 
\gamma^2(s)
\tr
\biggl(
(\nabla^2 \mathcal{R})
\QQ_t^{-2}\QQ_s^2\mathscr{A}_s
\biggr)
\dif s
\\
&+ \frac{1}{2}
\int_0^t 
\gamma^2(s)
\tr
\biggl(
(\nabla^2 \mathcal{R})
\QQ_t^{-2}\QQ_s^2\mathscr{M}_s
\biggr)
\Psi_s
\dif s.
\end{aligned}
\]
Provided the Hessian $(\nabla^2 \mathcal{R})$ and gradient $\nabla \mathcal{R}(0)$ are bounded independently of $d$ uniformly on $T$, the concentration of $\MM_t^{(3)}$ now follows exactly as in Steps 4 and 5.
\end{proof}

\section{Limiting values of the excess risk} \label{sec:limit_excess_risk}
In this section, we prove the limiting excess risk values, Theorem~\ref{thm:eventualrisk}. We will, in fact, prove a more general version of Theorem~\ref{thm:eventualrisk} which holds for a wider class, so called the Volterra SLD class, as discussed in \eqref{eq:V}. Theorems~\ref{thm:sgdlimit} (constant learning rate) and \ref{thm:eventualrisk_SLD} (time dependent learning rate) immediately imply Theorem~\ref{thm:eventualrisk} by setting $\mathscr{M}_t \equiv \frac{1}{n} \nabla^2 \mathscr{L}$ and $\mathcal{A}_t \equiv 0$. By using the Volterra SLD class, we also recover the result for streaming, Theorem~\ref{thm:bootstraprisk}.

\paragraph{Excess risk in the constant case.} Under the stronger assumptions of constant learning rate, and constant variance profile, this can be further simplified.   That is, suppose
\begin{assumption}\label{flat_ass}
Suppose that the covariance processes $\mathscr{M}$ and $\mathscr{A}$ 
are constant and satisfy for some absolute constants $c>0$ and $\epsilon > 0$
\[
(\gamma(t), \mathscr{M}_t, \mathscr{A}_t)
\equiv
(\gamma, \mathscr{M}, \mathscr{A}),
\quad\text{where}\quad
\bigl(\tr \mathscr{M} + \tr \mathscr{A} \bigr) \leq c < \infty
\quad\text{and}
\quad
\bigl(\|\mathscr{M}\|_{op} + \|\mathscr{A}\|_{op} \bigr) < d^{-\epsilon}.
\]
\end{assumption}
\noindent Under this assumption the kernels in the Volterra equation simplify to be:
\begin{equation}\label{eq:VC}
\begin{aligned}
K(t,s ; \PP) 
=
K(t-s ; \PP) 
&= 
\gamma^2
\tr
\biggl(
\mathscr{M}
\PP
\exp\bigl( -2\gamma(\AA^T \AA + \delta \II_d)(t-s)\bigr)
\biggr),\\
A(t,s; \PP) 
=
A(t-s ; \PP) 
&= 
\gamma^2
\tr
\biggl(
\mathscr{A}
\PP
\exp\bigl( -2\gamma (\AA^T \AA + \delta \II_d)( t-s )\bigr)
\biggr)
\end{aligned}
\end{equation}
The theory of convolution-type Volterra equations is substantially simpler than those of non-convolution type.  In particular, we can completely recover the rates of convergence and the limiting loss, as well as convergence guarantees (note that if the training loss of the underlying GF does not tend to $0$ and or $\mathscr{A} \neq 0$, then the loss does not tend to $0$, and so this is neighborhood convergence).
\begin{theorem}[Limit risk values, constant learning rate ]\label{thm:sgdlimit} Suppose the learning rate is constant, $\gamma(t) \equiv \gamma$. 
Under Assumption \ref{flat_ass}, the Volterra SLD is (neighborhood) convergent if and only if
\begin{equation}\label{eq:knorm}
\mathcal{I}(\gamma) \defas 
\int_0^\infty K(t; \AA^T \AA)\,\dif t
=
\frac{\gamma}{2}
\tr
\biggl(
\mathscr{M}(\AA^T \AA)
\bigl( \AA^T \AA + \delta \II_d)\bigr)^{-1}
\biggr) < 1.
\end{equation}
In the case that $\mathcal{I}(\gamma) < 1,$ $\Psi_t$ converges as $t\to\infty$ to
\begin{equation}\label{eq:Psi}
\Psi_\infty \defas
(1-\mathcal{I})^{-1}
\biggl(
\mathscr{L}(\gf_{\infty})
+
\frac{\gamma}{2}
\tr
\bigl(
\mathscr{A}(\AA^T \AA)
\bigl( \AA^T \AA + \delta \II_d)\bigr)^{-1}
\bigr)
\biggr).
\end{equation}
Likewise, the population risk $\Omega_t$ converges as $t\to\infty$ to
\begin{equation}\label{eq:Omega}
\Omega_\infty \defas
\mathcal{R}(\gf_{\infty})
+
\frac{\gamma}{2}
\tr
\bigl(
(\mathscr{A}+\mathscr{M}\Psi_\infty)(\nabla^2 \mathcal{R})
\bigl( \AA^T \AA + \delta \II_d )\bigr)^{-1}
\bigr).
\end{equation}
\end{theorem}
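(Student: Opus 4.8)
Under Assumption~\ref{flat_ass} both \eqref{eq:VLoss} and \eqref{eq:PLoss} are \emph{convolution} Volterra equations, so the plan is to treat them by the classical renewal/resolvent method. Write $K(t):=K(t;\AA^T\AA)$ and absorb the inhomogeneous part of \eqref{eq:VLoss} into $g(t):=\mathscr{L}\bigl(\gf_{\Gamma(t)}\bigr)+\int_0^t A(t-s;\AA^T\AA)\,\dif s$, so that $\Psi$ solves $\Psi=g+K\star\Psi$ with $\star$ convolution on $[0,\infty)$. Integrating the matrix exponentials gives $\|K\|_{L^1}=\int_0^\infty K(t)\,\dif t=\mathcal I(\gamma)$ and, by the same computation, $\int_0^\infty A(t;\PP)\,\dif t=\tfrac{\gamma}{2}\tr\bigl(\mathscr A\,\PP(\AA^T\AA+\delta\II_d)^{-1}\bigr)$, with the usual pseudo-inverse convention on $\ker\AA^T\AA$ when $\delta=0$. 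Hence $g$ is continuous, nonnegative and bounded ($\mathscr{L}(\gf_{\Gamma(t)})$ is controlled by its initial value and the $A$-integral by its value at $\infty$), and $g(t)\to g_\infty:=\mathscr{L}(\gf_\infty)+\tfrac{\gamma}{2}\tr\bigl(\mathscr A\,(\AA^T\AA)(\AA^T\AA+\delta\II_d)^{-1}\bigr)$, using $\gf_{\Gamma(t)}\to\gf_\infty$ as $\Gamma(t)=\gamma t\to\infty$.

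Next, suppose $\mathcal I:=\mathcal I(\gamma)<1$. The resolvent kernel $\rho:=\sum_{k\ge1}K^{\star k}$ then converges in $L^1$ with $\|\rho\|_{L^1}=\mathcal I/(1-\mathcal I)$, and the unique locally bounded solution of $\Psi=g+K\star\Psi$ (existence and uniqueness by Picard iteration on compact intervals, the kernel being continuous) is $\Psi=g+\rho\star g$. I then invoke the elementary convolution-limit fact: if $h\in L^1([0,\infty))$ and $f$ is bounded with $f(t)\to f_\infty$, then $(h\star f)(t)\to f_\infty\|h\|_{L^1}$ — split $\int_0^t=\int_0^M+\int_M^t$, noting that $f(t-\cdot)\to f_\infty$ uniformly on $[0,M]$ while $\bigl|\int_M^t h(u)f(t-u)\,\dif u\bigr|\le\|f\|_\infty\int_M^\infty|h|\to0$ as $M\to\infty$. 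With $h=\rho$, $f=g$ this yields $\Psi_t\to g_\infty\bigl(1+\|\rho\|_{L^1}\bigr)=g_\infty/(1-\mathcal I)$, which is exactly \eqref{eq:Psi}.

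For the population risk, rewrite \eqref{eq:PLoss} as $\Omega_t=\mathcal R\bigl(\gf_{\Gamma(t)}\bigr)+\int_0^t A(t-s;\nabla^2\mathcal R)\,\dif s+\bigl(K(\cdot;\nabla^2\mathcal R)\star\Psi\bigr)(t)$; the three terms converge respectively to $\mathcal R(\gf_\infty)$, to $\tfrac{\gamma}{2}\tr\bigl(\mathscr A(\nabla^2\mathcal R)(\AA^T\AA+\delta\II_d)^{-1}\bigr)$, and — by the same convolution-limit fact with $h=K(\cdot;\nabla^2\mathcal R)\in L^1$ and $f=\Psi\to\Psi_\infty$ — to $\Psi_\infty\tfrac{\gamma}{2}\tr\bigl(\mathscr M(\nabla^2\mathcal R)(\AA^T\AA+\delta\II_d)^{-1}\bigr)$, and adding them gives \eqref{eq:Omega}. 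For the converse, suppose $\mathcal I(\gamma)\ge1$. Every term of $\Psi=g+\sum_{k\ge1}K^{\star k}\star g$ is nonnegative, so in the nondegenerate case $g_\infty>0$ — which holds whenever $\mathscr A\ne0$, or $\delta>0$ and $\bb\ne0$ (then the ridge residual $\mathscr{L}(\gf_\infty)>0$) — choose $c>0$ and $T_0$ with $g\ge c$ on $[T_0,\infty)$; then $\liminf_{t\to\infty}(K^{\star k}\star g)(t)\ge c\int_0^\infty K^{\star k}=c\,\mathcal I^k\ge c$, so $\liminf_{t\to\infty}\Psi_t\ge c(N+1)$ for every $N$, i.e. $\Psi_t\to\infty$ and the SLD is not (neighborhood) convergent.

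The sufficiency direction and the closed forms for $\Psi_\infty,\Omega_\infty$ are thus a routine consequence of the convolution structure. The genuinely delicate point is necessity in the degenerate interpolation regime $g_\infty=0$ (so $\mathscr A=0$ and $\mathscr{L}(\gf_\infty)=0$, as for ridgeless overparameterized problems), where the lower bound above is vacuous and one must instead compare the decay rate of $\mathscr{L}(\gf_{\Gamma(t)})$ against the Malthusian exponent of $K$ — equivalently, analyze $\widehat\Psi(z)=\widehat g(z)/(1-\widehat K(z))$ near the root of $1-\widehat K$, where $\widehat{\cdot}$ is the Laplace transform and $\widehat K(0)=\mathcal I$. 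The $\delta=0$ zero-eigenvalue degeneracies (non-uniform exponential decay of the matrix exponentials, pseudo-inverse conventions in the trace formulas) add bookkeeping but no essential difficulty.
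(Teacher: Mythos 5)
Your argument is correct and is exactly the route the paper takes: the paper's proof of this theorem is a one-line appeal to the limiting theory of convolution (renewal) Volterra equations in Gripenberg et al.\ and Asmussen, and your resolvent series $\rho=\sum_{k\ge1}K^{\star k}$ together with the elementary convolution-limit lemma is precisely the content of that citation, just written out. The one delicate point you honestly flag --- necessity of $\mathcal I(\gamma)<1$ in the degenerate regime $g_\infty=0$ --- is likewise not addressed by the paper, so your write-up is, if anything, more complete than the original.
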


\begin{proof}[Proof of Theorem~\ref{thm:sgdlimit}] follows immediately from limiting values of renewal equations (see \citep{gripenberg1980volterra} and \citep{Asmussen}). 
\end{proof}

By setting $\mathcal{M} = \frac{1}{n} \nabla^2 \mathscr{L}$ and $\mathcal{A} \equiv 0$, we recover the multi-pass SGD setting discussed in the main portion of this paper. When the learning rate satisfies $\gamma(t) \equiv \gamma \in (0, 2(\tfrac{1}{n} \tr \big \{ \tfrac{(\AA^T\AA)^2}{\AA^T \AA + \delta \II_d} \big \})^{-1}$, it follows that $\mathcal{I}(\gamma) < 1$. Consequently, Theorem~\ref{thm:sgdlimit} proves Theorem~\ref{thm:eventualrisk} when the learning rate is constant. 

\paragraph{Excess risk when learning rate is time dependent.} In the case that the learning rate is time dependent, we prove the following result for the limiting dynamics under the expanded Volterra SLD class. Here we will still assume that the covariance processes $\mathcal{M}$ and $\mathcal{A}$ are constant.

\begin{assumption}\label{flat_ass_SLD}
Suppose that the covariance processes $\mathscr{M}$ and $\mathscr{A}$ 
are constant and satisfy for some absolute constants $c>0$ and $\epsilon > 0$
\[
(\mathscr{M}_t, \mathscr{A}_t)
\equiv
(\mathscr{M}, \mathscr{A}),
\quad\text{where}\quad
\bigl(\tr \mathscr{M} + \tr \mathscr{A} \bigr) \leq c < \infty
\quad\text{and}
\quad
\bigl(\|\mathscr{M}\|_{op} + \|\mathscr{A}\|_{op} \bigr) < d^{-\epsilon}.
\]
\end{assumption}

The time-dependent learning rate excess risk is given below. 

\begin{theorem}[Time infinity risk values for SLD class]\label{thm:eventualrisk_SLD} Suppose Assumption~\ref{flat_ass_SLD} holds for the Volterra SLD class and the integrated learning rate satisfies $\Gamma(t) \to \infty$ and $\gamma(t) \to \gamma \in [0, \infty)$. Let the limiting learning rate value $\gamma$ be chosen such that the kernel norm is less than $1$, that is, 
\begin{equation}\label{eq:knorm_SLD}
\mathcal{I}(\gamma) \defas 
\int_0^\infty K(t, s; \AA^T \AA)\,\dif t
=
\frac{\gamma}{2}
\tr
\biggl(
\mathscr{M}(\AA^T \AA)
\bigl( \AA^T \AA + \delta \II_d)\bigr)^{-1}
\biggr) < 1.
\end{equation}

Then with $\Psi_\infty$ given by the limiting empirical risk,
\begin{equation} \label{eq:limit_loss_main}
\Psi_\infty = \left ( 1- \frac{\gamma}{2} \tr \bigg \{ \frac{\mathcal{M} \AA^T\AA}{\AA^T \AA+ \delta \II_d} \bigg \} \right )^{-1} \times \left ( \mathcal{L}(\gf_{\infty}) + \frac{\gamma}{2} \tr \bigg \{ \frac{\mathcal{A} \AA^T\AA}{\AA^T \AA+ \delta \II_d} \bigg \} \right ), 
\end{equation}
the excess risk converges to
\[
\Omega_t-\mathcal{R}\bigl( 
\gf_{\Gamma(t)}\bigr)
\to 
\frac{\gamma}{2} \times
\tr
\bigg \{
\frac{(\mathscr{A}+\mathscr{M}\Psi_\infty)(\nabla^2 \mathcal{R})}{ \AA^T \AA + \delta \II_d}
\bigg \}.
\]
\end{theorem}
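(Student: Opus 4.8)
The plan is to reduce Theorem~\ref{thm:eventualrisk_SLD} to the constant-learning-rate renewal analysis behind Theorem~\ref{thm:sgdlimit} by running the clock in \emph{integrated-learning-rate time} $\tau=\Gamma(t)$, and then treating the gap between $\gamma(t)$ and its limit $\gamma$ as an $L^1$ perturbation of a defective renewal kernel. Since $\gamma(\cdot)>0$ makes $\Gamma$ strictly increasing with $\Gamma(t)\to\infty$, I would substitute $\tau=\Gamma(t)$, $u=\Gamma(s)$, $\widetilde\Psi_\tau\defas\Psi_{\Gamma^{-1}(\tau)}$, $\widetilde\gamma(u)\defas\gamma(\Gamma^{-1}(u))$ in \eqref{eq:VLoss}; the Jacobian $\dif s=\widetilde\gamma(u)^{-1}\dif u$ cancels exactly one of the two factors of $\gamma(s)$ in the kernels \eqref{eq:V}, so (with $\mathscr{M}_t\equiv\mathscr{M}$, $\mathscr{A}_t\equiv\mathscr{A}$ from Assumption~\ref{flat_ass_SLD})
\begin{align*}
\widetilde\Psi_\tau=\mathscr{L}(\gf_\tau)&+\int_0^\tau\widetilde\gamma(u)\,\tr\bigl(\mathscr{A}\,\AA^T\AA\,e^{-2(\AA^T\AA+\delta\II_d)(\tau-u)}\bigr)\dif u\\
&+\int_0^\tau\widetilde\gamma(u)\,\tr\bigl(\mathscr{M}\,\AA^T\AA\,e^{-2(\AA^T\AA+\delta\II_d)(\tau-u)}\bigr)\widetilde\Psi_u\,\dif u,
\end{align*}
with the analogous identity for $\widetilde\Omega_\tau\defas\Omega_{\Gamma^{-1}(\tau)}$ obtained from \eqref{eq:PLoss} by replacing $\AA^T\AA$ by $\nabla^2\mathcal{R}$ in the two numerators. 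Write $\bar K(v)\defas\gamma\,\tr\bigl(\mathscr{M}\,\AA^T\AA\,e^{-2(\AA^T\AA+\delta\II_d)v}\bigr)$; this is nonnegative, and $\int_0^\infty\bar K(v)\,\dif v=\tfrac\gamma2\tr\{\mathscr{M}\,\AA^T\AA/(\AA^T\AA+\delta\II_d)\}=\mathcal{I}(\gamma)<1$ by \eqref{eq:knorm_SLD}. The kernel appearing above equals $\bar K(\tau-u)+\tfrac{\widetilde\gamma(u)-\gamma}{\gamma}\bar K(\tau-u)$, and the correction has vanishing cross-section mass $\tfrac1\gamma\int_0^\tau|\widetilde\gamma(u)-\gamma|\bar K(\tau-u)\,\dif u\to0$, since $\widetilde\gamma(u)\to\gamma$ and $\bar K\in L^1$.

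Next I would identify the limit of the forcing term $\widetilde F_\tau$ (the first line plus the $\mathscr{A}$-integral). Gradient flow converges, so $\mathscr{L}(\gf_\tau)\to\mathscr{L}(\gf_\infty)$; and since $\widetilde\gamma(u)\to\gamma$ while $e^{-2(\AA^T\AA+\delta\II_d)(\tau-u)}$ supplies an exponential tail on $\mathrm{range}(\AA^T\AA)$ — the only subspace that matters, the $\AA^T\AA$ factor annihilating $\ker(\AA^T\AA)$ — dominated convergence gives the $\mathscr{A}$-integral $\to\tfrac\gamma2\tr\{\mathscr{A}\,\AA^T\AA/(\AA^T\AA+\delta\II_d)\}$. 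Hence $\widetilde F_\tau\to\widetilde F_\infty\defas\mathscr{L}(\gf_\infty)+\tfrac\gamma2\tr\{\mathscr{A}\,\AA^T\AA/(\AA^T\AA+\delta\II_d)\}$, which is precisely the second bracketed factor in \eqref{eq:limit_loss_main}.

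The main step is the renewal limit in $\tau$-time, complicated by the fact that the kernel is only \emph{asymptotically} a convolution. I would first show $\sup_\tau\widetilde\Psi_\tau<\infty$: on any finite window Gronwall against the bounded kernel gives boundedness, while on $[\tau_0,\infty)$ for $\tau_0$ large the tail integral operator has $L^\infty\to L^\infty$ norm at most $\mathcal{I}(\gamma)+o(1)<1$, which bootstraps the bound to all $\tau$. Let $\rho\defas\sum_{k\ge1}\bar K^{*k}$ denote the resolvent of $\bar K$; since $\mathcal{I}(\gamma)<1$ this is an $L^1$ function with $\|\rho\|_{L^1}=\mathcal{I}(\gamma)/(1-\mathcal{I}(\gamma))$. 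Moving the convolution part of the kernel to the resolvent side, $\widetilde\Psi=(\widetilde F+g)+\rho*(\widetilde F+g)$, where $g_\tau\defas\tfrac1\gamma\int_0^\tau(\widetilde\gamma(u)-\gamma)\bar K(\tau-u)\widetilde\Psi_u\,\dif u$ is bounded (by the boundedness just shown) and tends to $0$ (by the vanishing cross-section mass). Since $\widetilde F+g$ is bounded and converges to $\widetilde F_\infty$, the standard fact that a bounded convergent function convolved with an $L^1$ kernel converges to (limit)$\times$(total mass) yields $\widetilde\Psi_\tau\to\widetilde F_\infty\,(1+\|\rho\|_{L^1})=\widetilde F_\infty/(1-\mathcal{I}(\gamma))$, which is exactly \eqref{eq:limit_loss_main}. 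This is the step where the constant-$\gamma$ proof of Theorem~\ref{thm:sgdlimit} is extended: the non-convolution correction is absorbed as a vanishing forcing rather than requiring a new renewal theorem.

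Finally, for the population risk I would feed this limit into the $\tau$-time form of \eqref{eq:PLoss}:
\begin{align*}
\widetilde\Omega_\tau-\mathcal{R}(\gf_\tau)=&\int_0^\tau\widetilde\gamma(u)\,\tr\bigl(\mathscr{A}\nabla^2\mathcal{R}\,e^{-2(\AA^T\AA+\delta\II_d)(\tau-u)}\bigr)\dif u\\
&+\int_0^\tau\widetilde\gamma(u)\,\tr\bigl(\mathscr{M}\nabla^2\mathcal{R}\,e^{-2(\AA^T\AA+\delta\II_d)(\tau-u)}\bigr)\widetilde\Psi_u\,\dif u.
\end{align*}
The first term converges, as in the forcing-term computation (now using $\|\nabla^2\mathcal{R}\|_{op}\le C$), to $\tfrac\gamma2\tr\{\mathscr{A}\nabla^2\mathcal{R}/(\AA^T\AA+\delta\II_d)\}$; the second, with $\widetilde\gamma(u)\to\gamma$ and $\widetilde\Psi_u\to\Psi_\infty$ from the previous step, converges by dominated convergence to $\Psi_\infty\cdot\tfrac\gamma2\tr\{\mathscr{M}\nabla^2\mathcal{R}/(\AA^T\AA+\delta\II_d)\}$. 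Adding the two gives the claimed limit $\tfrac\gamma2\tr\{(\mathscr{A}+\mathscr{M}\Psi_\infty)(\nabla^2\mathcal{R})/(\AA^T\AA+\delta\II_d)\}$. I expect the genuinely delicate points to be the uniform-in-$\tau$ boundedness of $\widetilde\Psi$ together with the bootstrap from the ``eventual contraction'' $\mathcal{I}(\gamma)<1$, and, when $\delta=0$, the bookkeeping ensuring that the $\ker(\AA^T\AA)$ directions do not spoil integrability — for the population-risk forcing term this needs $\delta>0$ or that $\mathscr{A},\mathscr{M}$ carry no mass there, which holds in all the applications, e.g.\ the multi-pass case $\mathscr{A}=0$, $\mathscr{M}=\tfrac1n\AA^T\AA$; everything else reduces to $L^1$-dominated convergence.
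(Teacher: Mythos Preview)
Your proposal is correct and takes a genuinely different route from the paper's proof.

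Both arguments start with the same time change $\tau=\Gamma(t)$, but they diverge in how the non-convolution kernel is handled. The paper proceeds by an $\varepsilon$-sandwich: for $t,s\ge t_0$ it bounds $K(t,s)$ above and below by the pure convolution kernels $\overline K$, $\underline K$ built from $\gamma\pm\varepsilon$, invokes a Gronwall-type comparison (\emph{Gripenberg} Thm.~9.8.2) to trap $\Psi_t$ between the solutions $\underline\Psi_t\le\Psi_t\le\overline\Psi_t$ of the two convolution Volterra equations, reads off their renewal limits, and sends $\varepsilon\to0$. Boundedness of $\Psi_t$ is obtained separately by partitioning $[0,\infty)$ into finitely many intervals on each of which the kernel has $L^\infty$-norm $<1$ and citing Gripenberg Thm.~9.3.13 for the resolvent. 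You instead write the kernel as $\bar K(\tau-u)+\tfrac{\widetilde\gamma(u)-\gamma}{\gamma}\bar K(\tau-u)$, push the convolution piece to the resolvent side, and treat the correction as an additional forcing $g_\tau$ that vanishes by dominated convergence once $\widetilde\Psi$ is known to be bounded; the limit then drops out of a single application of ``$L^1$-kernel convolved with bounded convergent function''. Your boundedness step (Gronwall on $[0,\tau_0]$ plus eventual contraction of the tail operator) is in spirit the same as the paper's interval decomposition.

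What each buys: the paper's sandwich argument is entirely monotone and rests on off-the-shelf Volterra theory, at the cost of an extra $\varepsilon\to0$ limit and some bookkeeping with the upper/lower forcing functions $\overline F,\underline F$. Your resolvent-perturbation argument is shorter and avoids the sandwich, but it requires boundedness of $\widetilde\Psi$ \emph{before} you can assert $g_\tau\to0$, so the bootstrapping order matters; it is also the natural approach if one wants explicit \emph{rates} of convergence, which the sandwich does not give. Your caveat about $\delta=0$ and mass on $\ker(\AA^T\AA)$ is apt and is not addressed explicitly in the paper's statement either; it is harmless in the applications considered.
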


\begin{proof}First suppose that the limiting loss value of $\Psi_t$, defined in \eqref{eq:VLoss}, is bounded and it exists at infinity. We show under this condition on $\Psi_t$ that the limiting risk value holds for $\Omega_{\infty}$, defined in \eqref{eq:PLoss}. A simple computation with a change of variables gives
{\small \begin{equation}
\begin{aligned}
&\lim_{t \to \infty}  \Omega_t - \mathcal{R}(\bm{\mathscr{X}}_{\Gamma(t)}^{\text{gf}})\\
&= \lim_{t \to \infty}  \int_0^t \gamma^2(s) \tr \bigg (\nabla^2 \mathcal{R}) \mathcal{A} \exp \big (-2(\AA^T \AA + \delta \II_d)(\Gamma(t)-\Gamma(s)) \big ) \bigg )\, \dif s\\
& \qquad + \lim_{t \to \infty}  \int_0^t \gamma^2(s) \text{tr}\bigg ( (\nabla^2 \mathcal{R}) \mathcal{M} \exp \big (-2(\AA^T \AA + \delta \II_d)(\Gamma(t)-\Gamma(s)) \big ) \bigg ) \Psi_s \, \dif s\\
&= \lim_{t \to \infty}  \int_0^{\Gamma(t)} \gamma(s) \tr \bigg (\nabla^2 \mathcal{R}) \mathcal{A} \exp \big (-2(\AA^T \AA + \delta \II_d)(\Gamma(t)-s) \big ) \bigg )\, \dif s\\
& \qquad + \lim_{t \to \infty} \int_0^{\Gamma(t)} \gamma(s) \text{tr}\bigg ( (\nabla^2 \mathcal{R}) \mathcal{M} \exp \big (-2(\AA^T \AA + \delta \II_d)(\Gamma(t)-s) \big ) \bigg ) \Psi_{\Gamma^{-1}(s)} \, \dif s\\
&= \lim_{t \to \infty}  \int_0^{\Gamma(t)} \gamma(\Gamma(t)-v) \tr \bigg (\nabla^2 \mathcal{R}) \mathcal{A} \exp \big (-2(\AA^T \AA + \delta \II_d)v \big ) \bigg )\, \dif s\\
&\qquad + \lim_{t \to \infty} \int_0^{\Gamma(t)} \gamma(\Gamma(t)-v) \text{tr}\bigg ( (\nabla^2 \mathcal{R}) \mathcal{M} \exp \big (-2(\AA^T \AA + \delta \II_d)v \big ) \bigg ) \Psi_{\Gamma^{-1}(\Gamma(t)-v)} \, \dif v.
\end{aligned}
\end{equation} }
Dominated convergence theorem allows us to interchange the integral and limit as $\Psi_t$ and $\gamma(t)$ are bounded. We pull out the limiting values of $\lim_{t \to \infty} \gamma(t) = \gamma$ and $\Psi_{\infty}$. By integrating, we deduce 
{\small
\begin{equation}
\begin{aligned}
    &\lim_{t \to \infty} \Omega_t - \mathcal{R}(\bm{\mathscr{X}}_{\Gamma(t)}^{\text{gf}})\\
    &= \lim_{t \to \infty}  \int_0^{\Gamma(t)} \gamma(\Gamma(t)-v) \tr \bigg (\nabla^2 \mathcal{R}) \mathcal{A} \exp \big (-2(\AA^T \AA + \delta \II_d)v \big ) \bigg )\, \dif s\\
    & \qquad + \lim_{t \to \infty} \int_0^{\Gamma(t)} \gamma(\Gamma(t)-v) \text{tr}\bigg ( (\nabla^2 \mathcal{R}) \mathcal{M} \exp \big (-2(\AA^T \AA + \delta \II_d)v \big ) \bigg ) \Psi_{\Gamma^{-1}(\Gamma(t)-v)} \, \dif v\\
    &= \gamma \int_0^{\infty} \text{tr}\bigg ( (\nabla^2 \mathcal{R}) \mathcal{A} \exp \big (-2(\AA^T \AA + \delta \II_d)v \big ) \bigg ) \, \dif v \\
    & \qquad + \gamma \Psi_{\infty} \int_0^{\infty} \text{tr}\bigg ( (\nabla^2 \mathcal{R}) \mathcal{M} \exp \big (-2(\AA^T \AA + \delta \II_d)v \big ) \bigg ) \, \dif v\\
    &=  \gamma  \tr \bigg ( (\nabla^2 \mathcal{R}) \big ( \mathcal{M}\Psi_{\infty} + \mathcal{A} \big ) (2(\AA^T \AA + \delta \II_d))^{-1} \bigg ).
\end{aligned}
\end{equation}
}
The result for the limiting risk  value $\lim_{t \to \infty} \Omega_t - \mathcal{R}(\bm{\mathscr{X}}_{\Gamma(t)}^{\text{gf}})$ follows. 

It remains to show that $\Psi_t$ is bounded and exists at infinity with its limiting value given by \eqref{eq:limit_loss_main}. Recall the loss kernel for $\Psi_t$ given by
\begin{equation}
    K(t,s) \defas K(t,s; \AA^T\AA) = \gamma^2(s) \tr\bigg (  \mathcal{M} \AA^T\AA \exp \big (-2(\AA^T\AA + \delta\II_d)(\Gamma(t)-\Gamma(s)) \big) \bigg ),
\end{equation}
so that $\Psi_t$ is the solution to the Volterra equation
\begin{align} \label{eq:psi_volterra}
    \Psi_t = \mathscr{L}(\bm{\mathscr{X}}_{\Gamma(t)}^{\text{gf}}) + \int_0^t K(t,s) \Psi_s \, \dif s.
\end{align}
Under the kernel norm bounded by $1$, \eqref{eq:knorm_SLD}, we show that the kernel $K(s,t)$ is of $L^{\infty}$-type on $[0, \infty)$. A kernel is $L^\infty$-type if $\vertiii{K}_{L^{\infty}(J)} < \infty$ for a set $J \subset \mathbb{R}$ where $\vertiii{K}_{L^{\infty}(J)} = \sup_{t \in J} \int_J |K(s,t)| \, \dif s$ \citep[Chapter 9.2]{gripenberg1980volterra}. For this, we see that for each $t$ and $s$
\begin{equation} \label{eq:change_variables_volterra}
\begin{aligned}
K(t,s) \le \widehat{\gamma} \cdot \gamma(s) \tr\bigg (  \mathcal{M} \AA^T\AA \exp \big (-2(\AA^T\AA + \delta\II_d)(\Gamma(t)-\Gamma(s)) \big) \bigg ).
\end{aligned}
\end{equation}
This implies by change of variables that
\begin{equation}
    \begin{aligned}
    \int_0^t K(t,s) \, \dif s &\le \int_0^{\Gamma(t)} \widehat{\gamma} \tr\bigg (  \mathcal{M} \AA^T\AA \exp \big (-2(\AA^T\AA + \delta\II_d)(\Gamma(t)-s) \big) \bigg ) \dif s\\
    &\le \frac{\widehat{\gamma}}{2} \tr \big ( \mathcal{M} \AA^T\AA (\AA^T \AA + \delta\II_d)^{-1} \big ) < \infty. 
    \end{aligned}
\end{equation}
Hence, it follows that the kernel $K$ is $L^\infty$-type on $[0,\infty)$. To prove the boundedness assumption of $\Psi_t$, we will need something slightly stronger. We show that there exists a finite number of intervals $J_i$ such that $\cup_i J_i = [0, \infty)$ and $\vertiii{K}_{L^{\infty}(J_i)} \le 1$. From this and Theorem 9.3.13 in \citep{gripenberg1980volterra}, it will follow that the resolvent is also of type $L^\infty$ on $[0, \infty)$. Since $\gamma(t) \to \gamma$, there exists a $t_0$ such that for all $t \ge t_0$, $\gamma(t) \le \gamma + \varepsilon$. This $\varepsilon > 0$ can be chosen sufficiently small such that $\gamma + \varepsilon < 2\big (\tr( \mathcal{M} \AA^T \AA (\AA^T\AA + \delta \II_d)^{-1} ) \big )^{-1}$ (see \eqref{eq:knorm_SLD} which gives an upper bound on $\gamma$). First, we observe that 
\[\sup_{t \ge 0} \sup_{0 \le s \le t_0} K(t,s) \le \widehat{\gamma}^2 \tr \big ( \mathcal{M} (\AA^T\AA) e^{2(\AA^T \AA + \delta \II_d)\Gamma(t_0)} \big ) < \infty.\]
We break up the interval $[0, t_0]$ into finitely many intervals of length each of which has a length strictly less than $\big (\widehat{\gamma}^2 \tr \big ( \mathcal{M} \AA^T\AA e^{2(\AA^T \AA + \delta \II_d)\Gamma(t_0)} \big ) \big )^{-1}$. If we denote these intervals by $J_i$, then it immediately follows by bounding the integral using the sup of $K$ multiplied by the length of the interval $J_i$ that 
\[ \vertiii{K}_{L^{\infty}(J_i)} = \sup_{t \in J_i} \int_{J_i} K(t,s) \, \dif s < 1. \]
It only remains to show on the tail, that is, $J_{\infty} \defas (t_0, \infty)$, for which $\vertiii{K}_{L^{\infty}(J_{\infty})} < 1$. Using the same change of variables as in \eqref{eq:change_variables_volterra} and our choice of $t_0$, we have that for all $t \ge t_0$
\begin{align*}
    \int_{t_0}^t K(t,s) \, \dif s 
    &\le \int_{t_0}^t (\gamma + \varepsilon) \gamma(s) \tr\bigg ( \mathcal{M} \AA^T \AA \exp \big (-2(\AA^T\AA + \delta\II_d)(\Gamma(t)-\Gamma(s)) \big) \bigg ) \, \dif s\\
    &= \int_{\Gamma(t_0)}^{\Gamma(t)} (\gamma + \varepsilon) \tr\bigg ( \mathcal{M} \AA^T \AA \exp \big (-2(\AA^T\AA + \delta\II_d)(\Gamma(t)-s) \big) \bigg ) \dif s\\
    &\le \frac{\gamma + \varepsilon}{2} \tr \big ( \mathcal{M} \AA^T\AA (\AA^T \AA + \delta\II_d)^{-1} \big ) < 1.
\end{align*}
The last inequality following by our assumption on $\gamma + \varepsilon$ being sufficiently small. By Theorem 9.3.13 in \citep{gripenberg1980volterra}, we have that the resolvent is also of type $L^\infty$ on $[0, \infty)$. We also have that $K(t,s)$ is of bounded type, that is the kernel is bounded (see \citep[Definition 9.5.2]{gripenberg1980volterra} for precise definition). Since the forcing term $\mathscr{L}(\bm{\mathscr{X}}_{\Gamma(t)}^{\text{gf}})$ and $\int_0^t A(t,s; \AA^T\AA) \, \dif s$ are bounded, then it follows by \citep[Theorem 9.5.4]{gripenberg1980volterra} that the solution to the Volterra equation \eqref{eq:psi_volterra}, $\Psi_t$, is bounded. 

We now show that $\Psi_t$ exists at infinity. Fix a $\varepsilon > 0$. By the assumptions on the learning rate, there exists a $t_0 > 0$ such that for all sufficiently large $t \ge s \ge t_0$
\begin{equation}
    \begin{aligned}
    \gamma- \varepsilon  \le \gamma(t) \le \gamma + \varepsilon \quad \text{and} \quad (\gamma - \varepsilon)(t-s) \le \Gamma(t) - \Gamma(s) \le (\gamma+ \varepsilon) (t-s). 
    \end{aligned}
\end{equation}
Using these inequalities for $\gamma(t)$, we get an upper bound and lower bound on the kernel $K(t,s)$ which we denote by $\overline{K}(t,s)$ and $\underline{K}(t,s)$, respectively. Specifically for all $t, s \ge t_0$, 
\begin{equation}
    \begin{aligned} \label{eq:kernel_bound_1}
    K(t,s) &\le \overline{K}(t,s) \defas (\gamma+ \varepsilon)^2 \tr\bigg ( \mathcal{M} \AA^T \AA \exp \big (-2(\AA^T\AA + \delta\II_d)(\gamma-\varepsilon)(t-s) \big) \bigg )\\
    K(t,s) & \ge \underline{K}(t,s)\defas (\gamma- \varepsilon)^2 \tr\bigg ( \mathcal{M} \AA^T \AA \exp \big (-2(\AA^T\AA + \delta\II_d)(\gamma+\varepsilon)(t-s) \big) \bigg ).
    \end{aligned}
\end{equation}
The kernels $\overline{K}(t,s)$ and $\underline{K}(t,s)$ are substantially nicer than the original $K(t,s)$ because they are proper convolution kernels. Here one can define $\overline{K} \, : \, [0, \infty) \to \mathbb{R}$ by
\[
\overline{K}(t) \defas  (\gamma + \varepsilon)^2 \tr\bigg ( \mathcal{M} \AA^T \AA \exp \big (-2(\AA^T\AA + \delta\II_d)(\gamma-\varepsilon)t \big) \bigg ).
\]
Then it follows that $\overline{K}(t,s) = \overline{K}(t-s)$. A similar result holds for $\underline{K}(t,s)$. 

For ease of notation, define the forcing function: for $t \ge t_0$
\begin{equation}
    F(t) \defas \mathcal{L}(\bm{\mathscr{X}}_{\Gamma(t)}^{\text{gf}}) + \int_0^{t_0} K(t,s) \Psi_s \, \dif s + \int_0^{t} A(t,s; \AA^T \AA) \, \dif s,
\end{equation}
where $\Psi_t$ is a solution to \eqref{eq:psi_volterra}. Similar to the definitions of $\overline{K}(t)$ and $\underline{K}(t)$, we define $\overline{F}(t)$ and $\underline{F}(t)$ respectively as
\begin{equation}
    \underline{F}(t) \le F(t) \le \overline{F}(t),
\end{equation}
\begin{equation}
\begin{aligned}
    \text{where} \quad & \overline{F}(t) \defas  \mathcal{L}(\bm{\mathscr{X}}_{\Gamma(t)}^{\text{gf}}) + \int_0^{t_0} K(t,s) \Psi_s \, \dif s + \int_0^{t_0} A(t,s; \AA^T \AA) \, \dif s\\
    & \qquad + \int_0^t (\gamma + \varepsilon)^2 \tr\bigg ( \mathcal{A} \AA^T \AA \exp \big (-2(\AA^T\AA + \delta\II_d)(\gamma-\varepsilon)(t-s) \big) \bigg )\\
    \text{and} \quad & \underline{F}(t) \defas \mathcal{L}(\bm{\mathscr{X}}_{\Gamma(t)}^{\text{gf}}) + \int_0^{t_0} K(t,s) \Psi_s \, \dif s + \int_0^{t_0} A(t,s; \AA^T \AA) \, \dif s\\
    & \qquad + \int_0^t (\gamma- \varepsilon)^2 \tr\bigg ( \mathcal{A} \AA^T \AA \exp \big (-2(\AA^T\AA + \delta\II_d)(\gamma+\varepsilon)(t-s) \big) \bigg ) \, \dif s
    \end{aligned}
\end{equation}

Because $\Psi_s$ is bounded, it follows that $\lim_{t \to \infty} \int_0^{t_0} K(t,s) \Psi_s = \lim_{t \to \infty} A(t, s; \AA^T \AA) = 0$. Also it is clear that the $F(t), \overline{F}(t)$, and $\underline{F}(t)$ are bounded. 

Using the upper/lower bound on the kernel \eqref{eq:kernel_bound_1}, we can squeeze the value of $\Psi_t$ between two expressions: for $t, s\ge t_0$,
\begin{equation}
\begin{aligned} \label{eq:gronwall}
\underline{F}(t) + \int_{t_0}^t \underline{K}(t,s) \Psi_s \, \dif s \le \Psi_t \le \overline{F}(t) + \int_{t_0}^t \overline{K}(t,s) \Psi_s \, \dif s.
\end{aligned}
\end{equation}
 Using a similar argument for $K(t,s)$ and choosing $\varepsilon$ sufficiently small, $\overline{K}(t,s)$ and $\underline{K}(t,s)$ are $L^{\infty}$-type on $[0, \infty)$. Moreover using a similar argument as we did for $K$ itself, the norms $\vertiii{\overline{K}}_{L^\infty([0,\infty))} < 1$ and  $\vertiii{\underline{K}}_{L^\infty([0,\infty))} < 1$. Here we used the upper bound on $\gamma$ in \eqref{eq:knorm_SLD} and a sufficiently small $\varepsilon$. Note we do not need to break up into finite intervals. As before, the resolvent then is of $L^{\infty}$-type on $[0, \infty)$ \citep[Corollary 9.3.10]{gripenberg1980volterra}. Further because of non-negativity, Proposition 9.8.1 in \citep{gripenberg1980volterra} yields that the resolvents are also non-negative. 
 
 Consider the upper bound (a similar argument will hold for the lower bound). We can apply Gronwall's inequality \eqref{eq:gronwall} \citep[Theorem 9.8.2]{gripenberg1980volterra}. It follows that $\Psi_t$ is upper bounded (lower bounded) by the solutions $\overline{\Psi}_t$ ($\underline{\Psi}_t$) to the following convolution Volterra equations
\[ \overline{\Psi_t} = \overline{F}(t) + \int_{t_0}^t \overline{K}(t,s) \overline{\Psi}_s \, \dif s \quad \text{and} \quad \underline{\Psi}_t = \underline{F}(t) + \int_{t_0}^t \underline{K}(t,s) \underline{\Psi}_s \, \dif s.\]
Specifically, we have $\underline{\Psi}_t \le \Psi_t \le \overline{\Psi}_t$ for all $t \ge t_0$. Since $\overline{\Psi}_t$ and $\underline{\Psi}_t$ are solutions to a proper convolution-type Volterra equation and both functions $\overline{F}(t)$, and $\underline{F}(t)$ have limits at infinity ($\overline{F}(\infty) \defas \lim_{t\to\infty} \overline{F}(t)$ and $\underline{F}(\infty) \defas \lim_{t\to\infty} \underline{F}(t)$), by \citep{Asmussen}, for $t \ge t_0$
{\small \begin{equation}
    \limsup_{t \to \infty} \Psi_t \le \limsup_{t \to \infty} \overline{\Psi}_t = \overline{F}(\infty) 
    \big (1- \vertiii{ \overline{K}}_{L^{\infty}([t_0, \infty))} \big )^{-1} \le \overline{F}(\infty) \big (1- \vertiii{ \overline{K}}_{L^{\infty}([0, \infty))} \big )^{-1},
\end{equation}
}
and similarly, the lower bound gives
\begin{equation}
    \liminf_{t \to \infty} \Psi_t \ge \liminf_{t \to \infty} \underline{\Psi}_t  \le \underline{F}(\infty) \big (1- \vertiii{ \underline{K}}_{L^{\infty}([0, \infty))} \big )^{-1} .
\end{equation}
A simple computation yields that 
\begin{equation}
    \begin{gathered}
    \vertiii{ \overline{K}}_{L^{\infty}([0, \infty))} = \frac{(\gamma+\varepsilon)^2}{\gamma-\varepsilon} G(\mathcal{M}) \quad \text{and} \quad  \vertiii{ \underline{K}}_{L^{\infty}([0, \infty))} = \frac{(\gamma-\varepsilon)^2}{\gamma+\varepsilon} G(\mathcal{M})\\
    \text{and} \qquad \overline{F}(\infty) = \mathcal{L}(\gf_{\infty}) + \frac{(\gamma+\varepsilon)^2}{\gamma-\varepsilon} G(\mathcal{A}) \quad \text{and} \quad \underline{F}(\infty) = \mathcal{L}(\gf_{\infty}) + \frac{(\gamma-\varepsilon)^2}{\gamma+\varepsilon} G(\mathcal{A})\\
    \text{where} \quad G(\mathcal{H}) \defas \tfrac{1}{2} \tr \big ( \mathcal{H} \AA^T\AA (\AA^T \AA+ \delta \II_d)^{-1} \big ).
    \end{gathered}
\end{equation}
So for any sufficiently small $\varepsilon > 0$, we have that 
{\small \begin{equation} \begin{aligned}
   \left ( 1 - \frac{(\gamma-\varepsilon)^2}{\gamma+\varepsilon} G(\mathcal{M}) \right )^{-1} \times\left \{ \mathcal{L}(\gf_{\infty}) + \frac{(\gamma-\varepsilon)^2}{\gamma+\varepsilon} G(\mathcal{A}) \right \} &\le \liminf_{t \to \infty} \Psi_t \\
   \le \limsup_{t \to \infty} \Psi_t \le \left ( 1- \frac{(\gamma+\varepsilon)^2}{\gamma-\varepsilon} G(\mathcal{M}) \right )^{-1} &\times \left \{ \mathcal{L}(\gf_{\infty}) + \frac{(\gamma+\varepsilon)^2}{\gamma-\varepsilon} G(\mathcal{A}) \right \}.
   \end{aligned}
\end{equation}
}

As this holds for any sufficiently small $\varepsilon$, the result follows by sending $\varepsilon \to 0$.
\end{proof}




\section{Algorithmic regularization} \label{sec:app_algorithmic}
In this section, we discuss the exact asymptotic convergence rates for SGD and full batch momentum algorithms on high-dimensional $\ell^2$-regularized least squares problems. The results in this section (e.g., Theorems~\ref{thm:sgdrates} and \ref{thm:mgdrates}) were shown in a series of papers \citep{paquette2021dynamics, paquetteSGD2021,paquette2020halting} that explored exact trajectories of loss function.  

\subsection{Convergence rates of SGD} \label{sec:convergence_rate}

To characterize the rates, we define 
\(
\lambda_{\min}
\)
as the smallest non-zero eigenvalue of $\AA^T\AA$.  Then for generic initial conditions, (in particular almost surely if $\XX_0$ is isotropic norm $1$), then
\[
\lim_{t \to \infty} \biggl(
\mathscr{L}(\gf_t)
-
\mathscr{L}(\gf_\infty)
\biggr)^{1/t} = 
\begin{cases}
e^{-\gamma(\lambda_{\min} + \delta)}, & \text{if } \delta > 0, \\
e^{-2\gamma \lambda_{\min}}, & \text{otherwise}.
\end{cases}
\]
The rate of convergence of $\Psi_t$ to $\Psi_\infty$ is given by, (for small $\gamma$), the rate above.  For larger $\gamma$, another rate can frustrate the convergence.  Recall the \emph{Malthusian exponent} of the convolution Volterra equation in \eqref{eq:Malthusian} is given by
{\small 
    \begin{equation}
    \lambda_* = 
    \inf\biggl\{
    x : 
    1
    =\int_0^\infty \! \! \! \! \! e^{ xt}K(t; \AA^T\AA)\,\dif t 
    =\gamma^2 
    \int_0^\infty \! \! \! \! \!
    e^{ xt}
    \tr
    \biggl(
    \mathscr{M}
    \AA^T\AA
    \exp\bigl( -2\gamma(\AA^T \AA + \delta \II )t\bigr)
    \biggr)\,\dif t 
    \biggr\}.
    \end{equation}
}
The set may be empty, in which case the infimum is $\infty$.  We recall below Theorem~\ref{thm:sgdrates}.

\begin{theorem}\label{thm:sgdrates_1}
For $\gamma>0$ satisfying $\mathcal{I}(\gamma) < 1$ (see \eqref{eq:knorm}), define
\begin{equation}\label{eq:ratefunction_1}
\Xi(\gamma)
\defas
\begin{cases}
\min\{
\gamma(\lambda_{\min}+\delta),
\lambda_*(\gamma)
\}
& \text{ if } \delta > 0, \\
\min\{
2\gamma
\lambda_{\min},
\lambda_*(\gamma)
\}
& \text{ if } \delta = 0.
\end{cases}
\end{equation}
Then the rates of convergence of both the training and test loss are
\[
\lim_{t \to \infty} \bigl(
\Psi_t
-
\Psi_\infty
\bigr)^{1/t} = e^{-\Xi(\gamma)}
=
\lim_{t \to \infty} \bigl(
\Omega_t
-
\Omega_\infty
\bigr)^{1/t}
\]
Furthermore, when $\gamma = n/\tr(\AA^T\AA)$, we have the rate guarantee $\Xi(\gamma) \geq \tfrac{\lambda_{\min} n}{2\tr(\AA^T\AA)}.$
\end{theorem}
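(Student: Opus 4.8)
The plan is to read off both rates from the convolution Volterra equation obeyed by $\Psi_t$, working in the constant-learning-rate case ($\gamma(t)\equiv\gamma$, $\mathscr{A}\equiv0$, so $\Gamma(t)=\gamma t$ and the kernel is a genuine convolution $K(u)\defas\gamma^2\tr(\mathscr{M}\AA^T\AA\exp(-2\gamma(\AA^T\AA+\delta\II_d)u))$ with $\mathscr{M}=\tfrac1n\AA^T\AA$). First I would pass to the defect equation: since $\mathcal{I}(\gamma)=\int_0^\infty K<1$, Theorem~\ref{thm:sgdlimit} gives $\Psi_\infty=\mathscr{L}(\gf_\infty)/(1-\mathcal{I})$, and subtracting this from \eqref{eq:VLoss} shows that $\phi_t\defas\Psi_t-\Psi_\infty$ solves
\begin{equation*}
\phi_t=g(t)+\int_0^t K(t-s)\,\phi_s\,\dif s,\qquad
g(t)\defas\bigl(\mathscr{L}(\gf_{\gamma t})-\mathscr{L}(\gf_\infty)\bigr)-\Psi_\infty\!\int_t^\infty\! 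K(u)\,\dif u .
\end{equation*}
Writing $\rho\ge0$ for the resolvent kernel ($\rho=K+K*\rho$), we have $\phi_t=g(t)+(\rho*g)(t)$, so the exponential rate of $\phi_t$ is the slower of those of $\rho$ and $g$.

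Next I would pin down the rate of the resolvent. Taking Laplace transforms, $\widehat\rho(z)=\widehat K(z)/(1-\widehat K(z))$, and since $K\ge0$ the real function $x\mapsto\widehat K(x)$ is strictly decreasing on $(-2\gamma(\lambda_{\min}+\delta),\infty)$, equals $\mathcal{I}<1$ at $0$, and blows up as $x\downarrow-2\gamma(\lambda_{\min}+\delta)$; hence there is a unique $-\lambda_*$ in that interval with $\widehat K(-\lambda_*)=1$, which is exactly the Malthusian exponent \eqref{eq:Malthusian} and confirms $0<\lambda_*<2\gamma(\lambda_{\min}+\delta)$. Nonnegativity of $K$ also forces $|\widehat K(z)|<1$ for $\Re z>-\lambda_*$, and $|\widehat K(z)|<1$ for $\Re z=-\lambda_*$ with $z\ne-\lambda_*$, so $-\lambda_*$ is the unique rightmost singularity of $\widehat\rho$, and it is simple because $\widehat K'(-\lambda_*)\ne0$. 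A standard Paley--Wiener / renewal-theory contour shift (as in \citep{gripenberg1980volterra,Asmussen}) then yields $\rho(t)=c\,e^{-\lambda_* t}(1+o(1))$ with $c>0$. For the rate of $g$: gradient flow on $f$ is the linear ODE $\dif\gf_u=-(\AA^T\AA+\delta\II_d)\gf_u+\AA^T\bb$, so $\gf_u-\gf_\infty=e^{-(\AA^T\AA+\delta\II_d)u}(\gf_0-\gf_\infty)$; expanding the quadratic $\mathscr{L}$ about $\gf_\infty$ (using $\nabla\mathscr{L}(\gf_\infty)=-\delta\gf_\infty$) shows $\mathscr{L}(\gf_{\gamma t})-\mathscr{L}(\gf_\infty)\asymp e^{-\gamma(\lambda_{\min}+\delta)t}$ when $\delta>0$ (linear term dominates) and $\asymp e^{-2\gamma\lambda_{\min}t}$ when $\delta=0$ (gradient term vanishes, quadratic term dominates), for generic $\gf_0$, while $\int_t^\infty K(u)\,\dif u\asymp e^{-2\gamma(\lambda_{\min}+\delta)t}$ decays no slower; hence $g$ decays precisely at the gradient-flow rate $r_{\mathrm{GF}}$.

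Combining, $\rho*g$ decays at rate $\min\{\lambda_*,r_{\mathrm{GF}}\}$ — with only a harmless polynomial factor when the two rates coincide — and the relevant constant is finite ($\mathcal{I}<1$) and, under the generic-initialization hypothesis already invoked for the GF rate, nonzero, so $\lim_t\phi_t^{1/t}=e^{-\min\{\lambda_*,r_{\mathrm{GF}}\}}=e^{-\Xi(\gamma)}$. The population-risk statement is the same computation applied to \eqref{eq:PLoss}: $\Omega_t-\Omega_\infty=(\mathcal{R}(\gf_{\gamma t})-\mathcal{R}(\gf_\infty))+\int_0^t K(t-s;\nabla^2\mathcal{R})\,\phi_s\,\dif s-\Psi_\infty\int_t^\infty K(u;\nabla^2\mathcal{R})\,\dif u$, where the middle term — a convolution against $\phi$, which carries the (possibly tied) slowest rate $\Xi(\gamma)\le\lambda_*<2\gamma(\lambda_{\min}+\delta)$ — dictates the asymptotics, giving $\lim_t(\Omega_t-\Omega_\infty)^{1/t}=e^{-\Xi(\gamma)}$ as well. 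For the rate guarantee at $\gamma=n/\tr(\AA^T\AA)$, I would first note $\mathcal{I}(\gamma)=\tfrac{\gamma}{2n}\sum_i\tfrac{\lambda_i^2}{\lambda_i+\delta}\le\tfrac{\gamma}{2n}\sum_i\lambda_i=\tfrac12<1$, and then bound $\lambda_*\ge\gamma\lambda_{\min}$ (by monotonicity of $\widehat K$) via $\int_0^\infty e^{\gamma\lambda_{\min}t}K(t)\,\dif t=\tfrac{\gamma}{n}\sum_i\lambda_i\cdot\tfrac{\lambda_i}{2\lambda_i+2\delta-\lambda_{\min}}\le\tfrac{\gamma}{n}\sum_i\lambda_i=1$, using the termwise bound $\lambda_i\le2\lambda_i+2\delta-\lambda_{\min}$; since $r_{\mathrm{GF}}\ge\gamma\lambda_{\min}$ in both cases, $\Xi(\gamma)\ge\gamma\lambda_{\min}=\tfrac{\lambda_{\min}n}{\tr(\AA^T\AA)}\ge\tfrac{\lambda_{\min}n}{2\tr(\AA^T\AA)}$.

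The hard part will be making the middle two steps fully rigorous: extracting the sharp \emph{two-sided} exponential rate of a convolution Volterra equation requires careful control of the dominant singularity of the resolvent together with the genericity of $\XX_0$ (equivalently of $\gf_0-\gf_\infty$), without which one obtains only the one-sided bound $\limsup_t\phi_t^{1/t}\le e^{-\Xi(\gamma)}$ — indeed $\phi\equiv0$ if one initializes at the interpolating solution. These ingredients are exactly what the renewal-theoretic machinery of \citep{gripenberg1980volterra,Asmussen} supplies, and they were carried out in this setting in \citep{paquette2021dynamics,paquetteSGD2021,paquette2020halting}.
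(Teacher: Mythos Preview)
The paper does not give its own proof here; it simply cites \citep[Theorem~1.2]{paquetteSGD2021}. Your sketch is precisely the renewal-theoretic argument (defect equation, Laplace-transform location of the resolvent's dominant simple pole at the Malthusian exponent, combination with the gradient-flow forcing rate) that underlies that cited result --- and you already point to the same references at the end --- so your approach is correct and aligned with what the paper defers to; as a bonus, your rate-guarantee computation actually yields the slightly sharper $\Xi(\gamma)\ge\gamma\lambda_{\min}$, from which the stated $\tfrac{\gamma\lambda_{\min}}{2}$ follows immediately.
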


\begin{proof} See \citep[Theorem 1.2]{paquetteSGD2021} for proof.  
\end{proof}

\subsection{Momentum GD (M-GD) rates}
\label{sec:momentum}
In this section, we consider a popular \textit{deterministic} or \textit{full-batch} algorithm for solving the ridge regression problem in \eqref{eq:rr}, that is, gradient descent with momentum (a.k.a Polyak momentum). Throughout this section, we use the notation, $\mgd_t = \xx_t$. Gradient descent with momentum (M-GD), initialized at $\xx_0 \in \mathbb{R}^d$ and $\xx_1 = \xx_0 - \frac{\gamma}{1+m} \nabla f(\xx_0)$, iterates for $k \ge 1$
\begin{equation} \label{eq:GDM_recurrence}
    \begin{aligned}
    \xx_{k+1} = \xx_k + m(\xx_k-\xx_{k-1}) - \gamma \nabla f(\xx_k), 
    \end{aligned}
\end{equation}
where $\gamma, m > 0$ are the stepsize and momentum parameters respectively. From Proposition 3.1 in
\citep{paquette2020halting}, there exists $k$-degree polynomials $P_k$ and $Q_k$ such that the iterates of GD+M satisfy the following 
\begin{equation}
    \xx_k = P_k(\AA^T\AA + \delta \II) \xx_0 + Q_k(\AA^T\AA + \delta \II) \AA^T\bb, \quad \text{with $P_k(\lambda) = 1-(\lambda) Q_k(\lambda)$}
\end{equation}
and the coefficients of $P_k$ and $Q_k$ only depend on the largest and smallest eigenvalue of $\AA^T\AA$. For Polyak, similar to the work in \citep[Section 3.1]{paquette2020halting}, we can give an explicit representation for these polynomials $P_k$ and $Q_k$. 

\begin{proposition}[Polynomial representation of M-GD] Suppose $\xx_0 \in \mathbb{R}^d$ and fix a stepsize $\gamma > 0$ and momentum parameter $m > 0$. For the iterates of GD+M on \eqref{eq:rr} with ridge parameter $\delta > 0$, we have the following representation for the polynomials 
\begin{equation}
    \xx_k = P_k(\AA^T\AA + \delta \II) \xx_0 + Q_k(\AA^T \AA + \delta \II) \AA^T \bb, 
\end{equation}
where $P_k$ and $Q_k$ are $k$-degree polynomials satisfying
\begin{equation}
    \begin{gathered} \label{eq:polyak_poly}
    P_k(\lambda) = m^{k/2} \left ( \frac{2m}{1+m} T_k (\sigma(\lambda)) + \left (1 - \frac{2m}{1+m} \right ) U_k(\sigma(\lambda) \big ) \right ) \quad \text{and} \quad Q_k(\lambda) = \frac{1-P_k(\lambda)}{\lambda}\\
    \text{where} \quad \sigma(\lambda) = \frac{1 + m - \gamma \lambda}{2 \sqrt{m}}\\
    \text{ and $T_k$, $U_k$ are Chebyshev polynomials of the 1st and 2nd kind respectively.}
    \end{gathered}
\end{equation}
\end{proposition}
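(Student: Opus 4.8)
The plan is to follow the template of \citep[Section 3.1]{paquette2020halting}: exploit the affineness of $\nabla f$ to reduce the vector iteration to a scalar three-term recurrence indexed by the eigenvalues of $\AA^T\AA + \delta\II$, recognize that recurrence as a rescaled Chebyshev recurrence, and then pin down the two free coefficients from the prescribed initialization.

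First I would record that, with $\HH \defas \AA^T\AA + \delta\II$, the gradient of the objective in \eqref{eq:rr} is affine, $\nabla f(\xx) = \HH\xx - \AA^T\bb$, so \eqref{eq:GDM_recurrence} becomes $\xx_{k+1} = (1+m)\xx_k - m\xx_{k-1} - \gamma(\HH\xx_k - \AA^T\bb)$ for $k\ge 1$. Passing to error coordinates $\ee_k \defas \xx_k - \xx_*$ around the minimizer $\xx_* \defas \HH^{-1}\AA^T\bb$ (well defined since $\delta>0$) cancels the inhomogeneous term and gives the clean linear two-step recursion $\ee_{k+1} = (1+m-\gamma\HH)\ee_k - m\ee_{k-1}$, with $\ee_0 = \xx_0 - \xx_*$ and $\ee_1 = (\II - \tfrac{\gamma}{1+m}\HH)\ee_0$ coming from the stated $\xx_1$. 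Since every operator here is a polynomial in $\HH$, an immediate induction yields $\ee_k = P_k(\HH)\ee_0$ for polynomials $P_k$ obeying, as scalar functions of an eigenvalue $\lambda$,
\begin{equation*}
P_{k+1}(\lambda) = (1+m-\gamma\lambda)P_k(\lambda) - m P_{k-1}(\lambda), \qquad P_0(\lambda) = 1, \quad P_1(\lambda) = 1 - \tfrac{\gamma}{1+m}\lambda.
\end{equation*}
Translating back, $\xx_k = \ee_k + \xx_* = P_k(\HH)\xx_0 + (\II - P_k(\HH))\HH^{-1}\AA^T\bb$, which is exactly $P_k(\HH)\xx_0 + Q_k(\HH)\AA^T\bb$ with $Q_k(\lambda) = (1-P_k(\lambda))/\lambda$; evaluating the recurrence at $\lambda=0$ and inducting shows $P_k(0)=1$, so $1-P_k(\lambda)$ is divisible by $\lambda$ and $Q_k$ is a genuine polynomial (of degree $k-1$). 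The leading-coefficient recursion $c_{k+1} = -\gamma c_k$ shows $P_k$ has degree exactly $k$.

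Next I would solve the three-term recurrence in closed form. Setting $\sigma = \sigma(\lambda) = \tfrac{1+m-\gamma\lambda}{2\sqrt m}$ and substituting the ansatz $P_k(\lambda) = m^{k/2}S_k(\sigma)$ converts the recurrence into $S_{k+1} = 2\sigma S_k - S_{k-1}$, the common recursion satisfied by both Chebyshev families, so $S_k = a\,T_k + b\,U_k$ for constants $a,b$ independent of $k$. The initial data force $S_0 = 1$ and, using the identity $1 - \tfrac{\gamma}{1+m}\lambda = \tfrac{2\sqrt m}{1+m}\sigma$, also $S_1 = \tfrac{2}{1+m}\sigma$. Since $T_0 = U_0 = 1$, $T_1(\sigma) = \sigma$, $U_1(\sigma) = 2\sigma$, matching these gives $a+b = 1$ and $a+2b = \tfrac{2}{1+m}$, hence $a = \tfrac{2m}{1+m}$ and $b = \tfrac{1-m}{1+m} = 1 - \tfrac{2m}{1+m}$. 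Substituting back produces the claimed formula for $P_k$, and $Q_k = (1-P_k)/\lambda$ as above.

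I do not expect a serious obstacle: once the affineness of $\nabla f$ is used the argument is essentially bookkeeping. The only places needing care are (i) handling the nonstandard first step $\xx_1 = \xx_0 - \tfrac{\gamma}{1+m}\nabla f(\xx_0)$, which is precisely what makes the $T_k$/$U_k$ weights come out to $\tfrac{2m}{1+m}$ and $\tfrac{1-m}{1+m}$ rather than another normalization, and (ii) verifying $P_k(0)=1$ so that the division by $\lambda$ defining $Q_k$ is legitimate; both are one-line inductions.
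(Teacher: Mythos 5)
Your proof is correct. It reaches the same three-term recurrence as the paper,
\(P_{k+1}(\lambda) = (1+m-\gamma\lambda)P_k(\lambda) - mP_{k-1}(\lambda)\) with \(P_0=1\), \(P_1(\lambda)=1-\tfrac{\gamma}{1+m}\lambda\),
but then solves it by a different (and more elementary) method: the paper forms the generating function \(\mathcal{G}(\lambda,t)=\sum_k t^kP_k(\lambda)\), computes it in closed form, and identifies it with the Chebyshev generating functions after the rescaling \(t\mapsto t/\sqrt m\), whereas you substitute the ansatz \(P_k(\lambda)=m^{k/2}S_k(\sigma(\lambda))\) to reduce to the standard recursion \(S_{k+1}=2\sigma S_k - S_{k-1}\), use that \(T_k\) and \(U_k\) span its solution space, and fix the coefficients \(a=\tfrac{2m}{1+m}\), \(b=\tfrac{1-m}{1+m}\) from \(S_0=1\) and \(S_1=\tfrac{2\sigma}{1+m}\); I checked that these agree with the paper's generating function (whose displayed formula in fact contains sign/\(\lambda\) typos that your route sidesteps). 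Your write-up also supplies details the paper outsources to the cited reference: the reduction to the recurrence via the error coordinates \(\ee_k=\xx_k-(\AA^T\AA+\delta\II)^{-1}\AA^T\bb\) (which is where \(\delta>0\) is used), the role of the nonstandard first step in producing \(P_1\), and the verification that \(P_k(0)=1\) so that \(Q_k=(1-P_k)/\lambda\) is a genuine polynomial (of degree \(k-1\), a harmless discrepancy with the statement's ``\(k\)-degree''). The generating-function route generalizes more readily to extracting asymptotics via singularity analysis; your route is shorter and makes the origin of the weights \(\tfrac{2m}{1+m}\) and \(1-\tfrac{2m}{1+m}\) transparent.
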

\begin{proof} The proof can be found in \citep[Appendix A.2]{paquette2020halting} or \citep[Chapter 11] {ginsburg1959regined}. We include a sketch of the proof. From the recurrence in \eqref{eq:GDM_recurrence} and the gradient of the ridge regression, the polynomials $P_k$ that generate GD+M satisfy the following three-term recurrence, 
\begin{equation}
    \begin{aligned} \label{eq:GD_M_polynomial}
        P_{k+1}(\lambda) &= (1+m-\gamma \lambda) P_k(\lambda) - m P_{k-1}(\lambda)\\
        P_k(\lambda) &= \frac{P_{k+1}(\lambda) + m P_k(\lambda)}{1+m-\gamma \lambda}.
    \end{aligned}
\end{equation}
We define the polynomial generating function for $P_k$ as $\mathcal{G}(\lambda, t) = \sum_{k=0}^\infty t^k P_k(\lambda)$. Using the recurrence in \eqref{eq:GD_M_polynomial}, we get that 
\begin{align*}
    \mathcal{G}(\lambda, t) &= 1 + \frac{1}{t(1-m+\gamma \lambda)}\sum_{k=2}^\infty t^k P_k(\lambda) - \frac{mt}{1-m+\gamma \lambda} \sum_{k=0}^\infty t^k P_k(\lambda)\\ 
    &= 1 + \frac{1}{t(1-m + \gamma \lambda)} \big [ \mathcal{G}(\lambda, t) - 1 - t(1 - \tfrac{\gamma}{1+m} \lambda) \big ]  - \frac{mt}{1-m + \gamma \lambda} \mathcal{G}(\lambda, t).
\end{align*}
By solving this expression for the generating polynomial, we have
\[\mathcal{G}(\lambda, t) = \frac{1 + t(m-(\gamma + \frac{\gamma}{1+m}))}{1 - t(1-m + \gamma \lambda) - m t^2}. \]
This generating function for M-GD closely resembles the generating function for Chebyshev polynomials of the 1st and 2nd kind. Under simple transformations (e.g., $t \mapsto \frac{t}{\sqrt{m}}$), this is exactly the case. These transformations yield the expression in \eqref{eq:polyak_poly}. 
\end{proof}

The role of $\sigma(\lambda)$ is to transform the eigenvalues of $\AA^T\AA + \delta \II_d$ within a specific range controlled by the learning rate and momentum. It is known that the Chebyshev polynomials are well-behaved on the interval of $[-1, 1]$ and grow exponentially off of this region. 

Moreover for a generic quadratic applied to $\xx_k$, the rate of convergence will be controlled by $P_k(\lambda)$. Using standard asymptotic behavior of Chebyshev polynomials, we can derive asymptotic rates based on $\lambda_{\min} \defas \lambda_{\min}(\AA^T\AA + \delta \II_d)$ and $\lambda_{\max} \defas \lambda_{\max}(\AA^T\AA + \delta \II_d)$, the smallest (non-zero) and largest eigenvalues of $\AA^T \AA$ respectively. We record this result below

\begin{proposition}[Asymptotic rates of M-GD]
The asymptotic rate of M-GD is 
{\small 
\begin{equation}
\limsup_{k \to \infty} \sqrt[k]{P_k} = \begin{cases}
\sqrt{m} & \text{if $\gamma \in \left [ \frac{(1-\sqrt{m})^2}{\lambda_{\min}}, \frac{(1+\sqrt{m})^2}{\lambda_{\max}} \right ]$}\\
\sqrt{m} \big ( | \sigma(\lambda_{\min})| + \sqrt{\sigma(\lambda_{\min})^2 -1} \big )& \text{if $\gamma \in \big [0, \min\big \{\frac{2(1+m)}{\lambda_{\min} + \lambda_{\max}}, \frac{(1-\sqrt{m})^2}{\lambda_{\min}} \big \} \big ]$ }\\
\sqrt{m} \big ( | \sigma(\lambda_{\max})| + \sqrt{\sigma(\lambda_{\max})^2 -1} \big ) & \text{if $\gamma \in \big [ \max \big \{\frac{2(1+m)}{\lambda_{\min} + \lambda_{\max}}, \frac{(1+\sqrt{m})^2}{\lambda_{\max}} \big \}, \frac{2(1+m)}{\lambda_{\max}} \big ]$ }\\
\ge 1  & \text{otherwise}.
\end{cases}
\end{equation}
}
\end{proposition}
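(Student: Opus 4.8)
The plan is to read everything off the explicit polynomial representation just established, $P_k(\lambda)=m^{k/2}\bigl(\tfrac{2m}{1+m}T_k(\sigma(\lambda))+\tfrac{1-m}{1+m}U_k(\sigma(\lambda))\bigr)$ with $\sigma(\lambda)=\tfrac{1+m-\gamma\lambda}{2\sqrt m}$, together with the classical asymptotics of Chebyshev polynomials. Since for a generic quadratic evaluated at $\mgd_k$ the iterate's error is governed by $P_k(\AA^T\AA+\delta\II)$ acting on the spectral components, the relevant rate is $\sup_{\lambda\in[\lambda_{\min},\lambda_{\max}]}\limsup_{k}|P_k(\lambda)|^{1/k}$, so it suffices to compute $\limsup_k|P_k(\lambda)|^{1/k}$ at a fixed $\lambda$ and then optimize over the interval.

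First I would record the standard facts: for $|z|\le1$, writing $z=\cos\theta$ gives $|T_k(z)|\le1$ and $|U_k(z)|\le k+1$, so $\limsup_k|T_k(z)|^{1/k}=\limsup_k|U_k(z)|^{1/k}=1$; for $|z|>1$, writing $|z|=\cosh\mu$ with $\mu>0$ gives $T_k(z)=\operatorname{sgn}(z)^k\cosh(k\mu)$ and $U_k(z)=\operatorname{sgn}(z)^k\tfrac{\sinh((k+1)\mu)}{\sinh\mu}$, both of size $\Theta(e^{k\mu})$ with $e^\mu=|z|+\sqrt{z^2-1}$. Substituting into $P_k$, the one point that needs care is that the fixed combination $\tfrac{2m}{1+m}T_k+\tfrac{1-m}{1+m}U_k$ does not lose a power of $e^{k\mu}$ to cancellation: for $m\in(0,1)$ both coefficients are nonnegative and the leading asymptotics of $T_k(z)$ and $U_k(z)$ on $|z|>1$ carry the same sign $\operatorname{sgn}(z)^k$, so the sum of leading terms is nonzero; the edge case $m=1$ collapses to $P_k=T_k(\sigma(\lambda))$ directly. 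Hence, for every fixed $\lambda$,
\[
\limsup_{k\to\infty}|P_k(\lambda)|^{1/k}=\sqrt m\cdot\max\bigl\{1,\ |\sigma(\lambda)|+\sqrt{\sigma(\lambda)^2-1}\bigr\},
\]
where the square root is read as $0$ when $|\sigma(\lambda)|\le1$.

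Next I would optimize over $\lambda\in[\lambda_{\min},\lambda_{\max}]$. Write $r(u)\defas\max\{1,\,|u|+\sqrt{\max(u^2-1,0)}\}$; this is continuous and nondecreasing in $|u|$, and $\lambda\mapsto|\sigma(\lambda)|$ is the absolute value of an affine decreasing function, hence either monotone or V-shaped on the interval, so its maximum---and therefore the maximum of $r(\sigma(\lambda))$---is attained at $\lambda_{\min}$ or $\lambda_{\max}$. Thus the rate is $\sqrt m\cdot\max\{r(\sigma(\lambda_{\min})),\,r(\sigma(\lambda_{\max}))\}$. The remainder is pure bookkeeping on $\gamma$: $|\sigma(\lambda_{\min})|\le1\iff\gamma\ge(1-\sqrt m)^2/\lambda_{\min}$ (the other branch $\gamma\le(1+\sqrt m)^2/\lambda_{\min}$ being automatic in the relevant range), $|\sigma(\lambda_{\max})|\le1\iff\gamma\le(1+\sqrt m)^2/\lambda_{\max}$, and $r(\sigma(\lambda_{\min}))\ge r(\sigma(\lambda_{\max}))\iff|\sigma(\lambda_{\min})|\ge|\sigma(\lambda_{\max})|\iff\gamma\le\tfrac{2(1+m)}{\lambda_{\min}+\lambda_{\max}}$ (a short case split on the signs of $\sigma$ at the endpoints, using $\lambda_{\min}\le\lambda_{\max}$). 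Intersecting these conditions produces exactly the four listed cases with their stated exponents; in particular the plateau value $\sqrt m$ occurs precisely when both endpoints land in $[-1,1]$, i.e.\ on $[(1-\sqrt m)^2/\lambda_{\min},(1+\sqrt m)^2/\lambda_{\max}]$. Finally, for the divergent regime, $\gamma>\tfrac{2(1+m)}{\lambda_{\max}}$ forces $|\sigma(\lambda_{\max})|>\tfrac{1+m}{2\sqrt m}$, hence $\sqrt m\,r(\sigma(\lambda_{\max}))>\sqrt m\cdot\tfrac{(1+m)+|1-m|}{2\sqrt m}\ge1$.

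The main obstacle is not any single hard estimate but the exhaustive and slightly delicate case analysis showing that the supremum over the spectrum selects exactly the stated piecewise exponents and threshold points: one must track the crossover where $\sigma(\lambda_{\min})$ and $\sigma(\lambda_{\max})$ have opposite signs (so $|\sigma|$ is only piecewise monotone on $[\lambda_{\min},\lambda_{\max}]$) and handle the degenerate endpoints $\sigma=\pm1$ and $m=1$, where the $U_k$ term contributes only polynomial growth and so does not affect the $k$-th root. Everything else follows mechanically from the generating-function representation of the previous proposition combined with the Chebyshev asymptotics.
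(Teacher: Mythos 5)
Your proposal is correct and follows exactly the route the paper takes: the paper's "proof" is a one-sentence sketch (use the Chebyshev representation of $P_k$ from the preceding proposition, then apply standard Chebyshev asymptotics) together with a citation to \citet{pedregosa2021residual}, and your write-up is that sketch carried out, including the two points the sketch leaves implicit (no cancellation between the $T_k$ and $U_k$ terms since both coefficients are nonnegative for $m\in(0,1]$ and both carry the sign $\operatorname{sgn}(z)^k$ outside $[-1,1]$, and the reduction of the supremum over the spectrum to the endpoints $\lambda_{\min},\lambda_{\max}$ via monotonicity of $|u|+\sqrt{u^2-1}$ in $|u|$). The remaining interval bookkeeping you describe does resolve consistently with the stated cases, so no gap.
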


\begin{proof} See \citep{pedregosa2021residual} for a complete proof. The result follows from knowing that the iterates are given by Chebyshev polynomials and then applying well-known asymptotics of Chebyshev polynomials to get the convergence rate. 
\end{proof}

We can minimize over the rate to find the optimal parameters. In this case, they become the parameters used in the Heavy-Ball algorithm \citep{Polyak1962Some} where 
\begin{equation}
    \begin{gathered}
    m = \left ( \frac{\sqrt{\lambda_{\max}} - \sqrt{\lambda_{\min}}}{\sqrt{\lambda_{\max}} + \sqrt{\lambda_{\min}}} \right )^2 \quad \text{and} \quad \gamma = \left ( \frac{2}{\sqrt{\lambda_{\max}} + \sqrt{\lambda_{\min}} } \right )^2.
    \end{gathered}
\end{equation}
A simple computation yields that the asymptotic rate for Heavy-Ball is $\frac{\sqrt{\lambda_{\max}} - \sqrt{\lambda_{\min}}}{\sqrt{\lambda_{\max}} + \sqrt{\lambda_{\min}}}$.

\begin{figure}
    \centering
           \includegraphics[scale = 0.18]{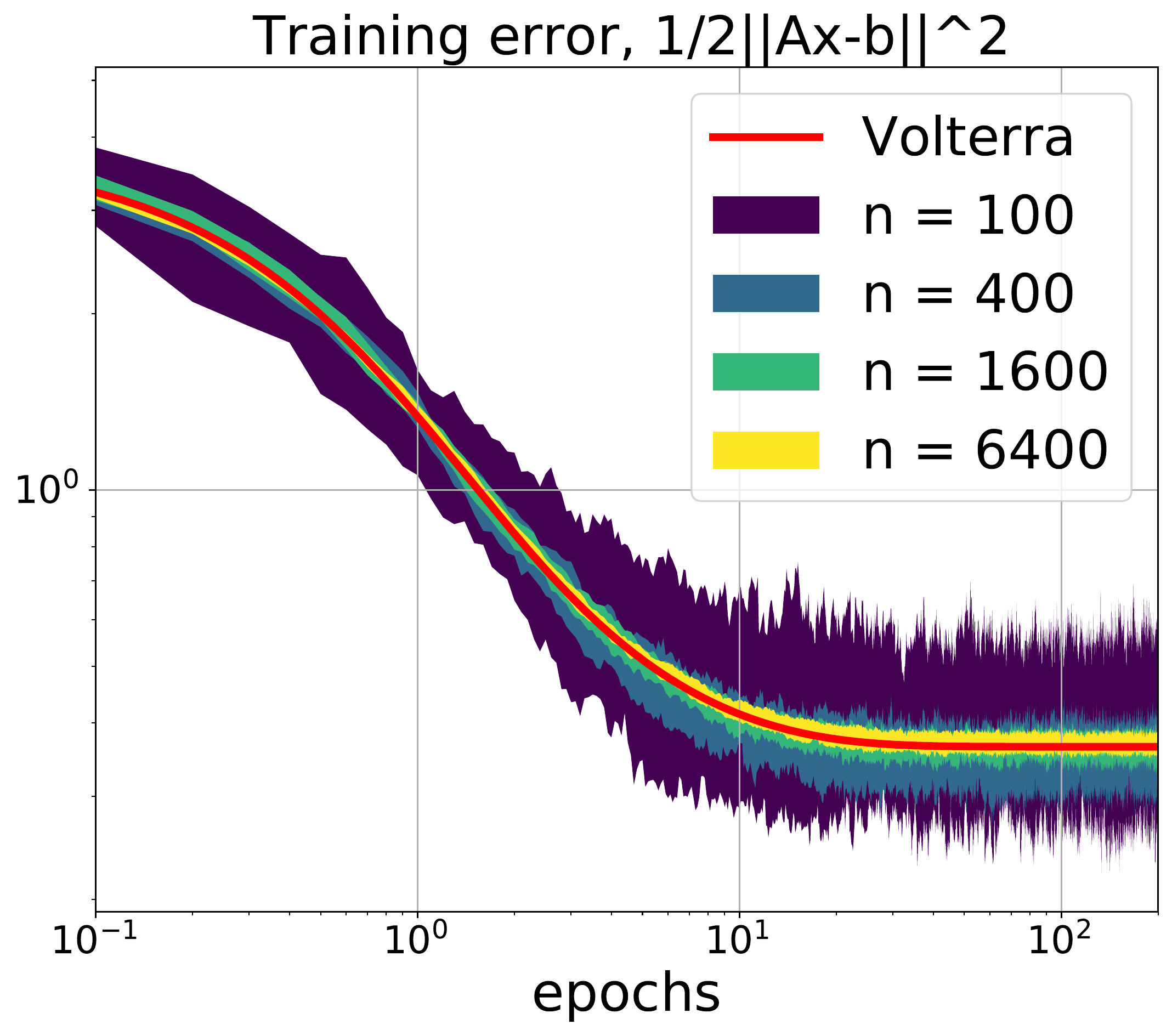}
       \includegraphics[scale = 0.18]{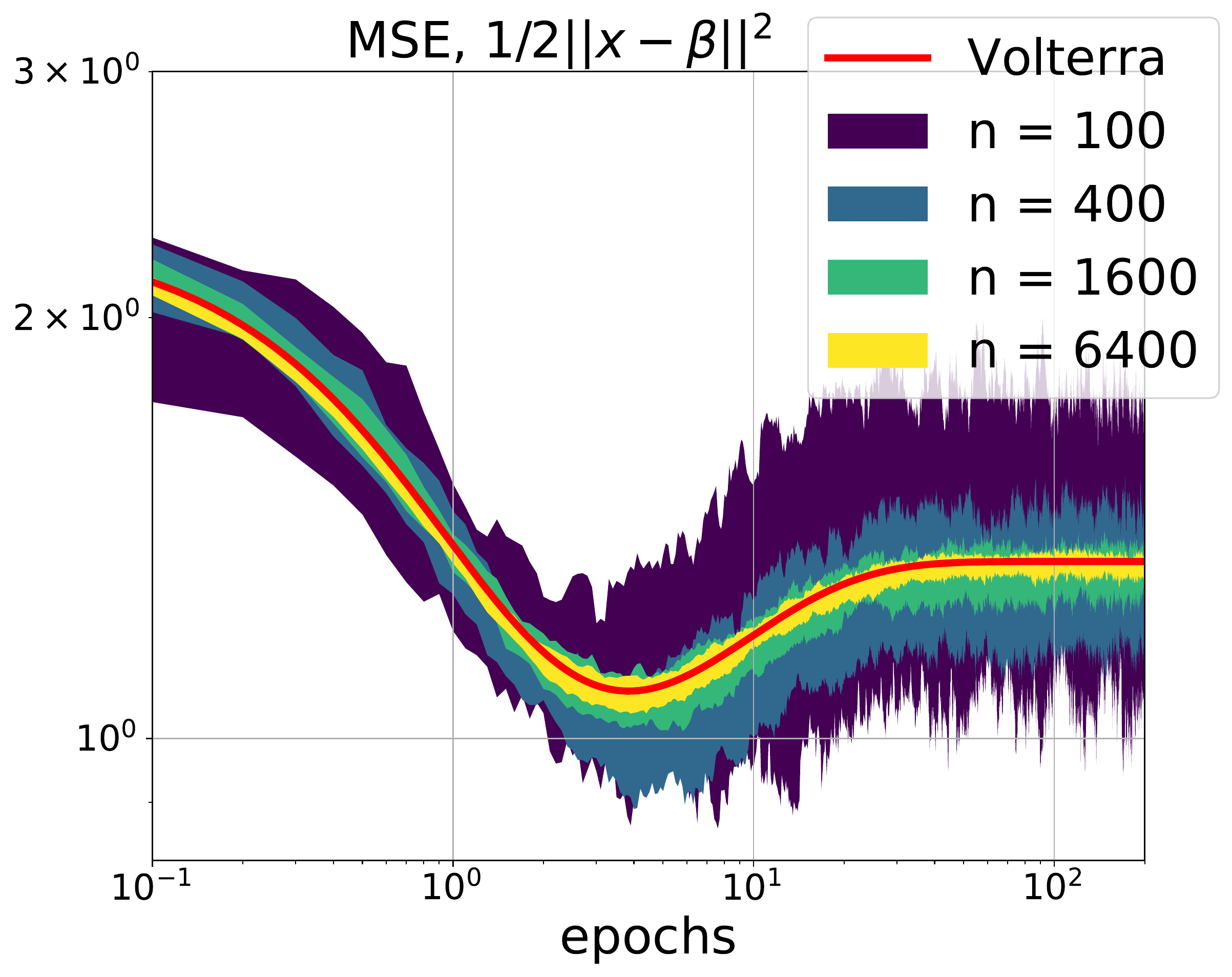}
       \quad \includegraphics[scale = 0.18]{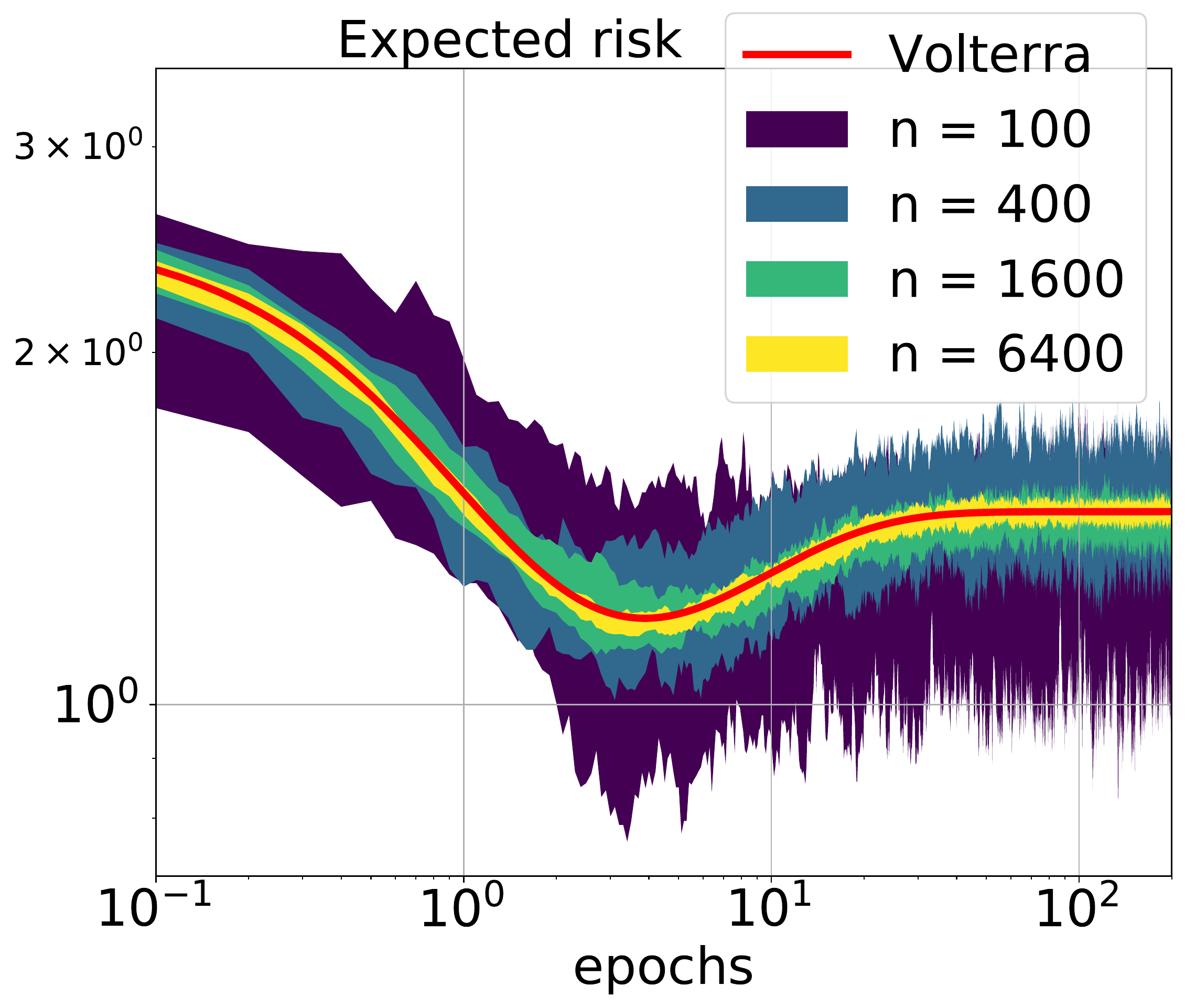}
     \caption{\textbf{Concentration of mean squared error (MSE) and expected test loss, $\tfrac{1}{2}\|\xx-\bbeta\|_2^2$, for SGD} on a Gaussian random $\ell^2$-regularized least-squares problem (Section~\ref{sec:quasi_random}) where $\bbeta \sim N(\bm{0}, \II_d)$ is the ground truth signal and a generative model $\bb =  \AA \bbeta + \eeta$ where entries of $\eeta$ iid standard normal with $\|\eeta\|_2^2 = 2.25$, $n = 0.9d$ with $\ell^2$-regularization parameter $\delta = 0.1$. SGD with constant learning rate $\gamma  =0.8$ was initialized at $\xx_0 \sim N(\bm{0}, 4 \II_d)$ (independent of $\AA$, $\bbeta$); an $80\%$ confidence interval (shaded region) over $10$ runs for each $n$. Any quadratic statistic, such as the MSE, becomes non-random in the large limit and all runs of SGD converge to a deterministic function $\Omega_t$ \textcolor{red}{(red)} solving a Volterra equation \eqref{eqa:PLoss}. This is an illustration of Theorem~\ref{thm:lppp} and Theorem~\ref{thm:trainrisk}.
     }
    \label{fig:MSE_Gaussian}
\end{figure}

\section{Numerical simulations} \label{sec:numerical_simulations}

To illustrate our theoretical results and conjectures we report simulations and experiments using SGD with constant learning rate on the $\ell^2$-regularized least squares problem. In all simulations for the random $\ell^2$-regularized least-square problem, the vectors $\eeta$, and $\bbeta$ are sampled from a standard Gaussian and the initialization vector $\xx_0 = \bm{0}$ (for Figures~\ref{fig:CIFAR_5M_streaming} and \ref{fig:gaussianstreaming}) and $N(0, 4 \II_d)$ (Figure~\ref{fig:MSE_Gaussian}). For the random features model (see Section~\ref{sec: motivating_applications} and Figure~\ref{fig:CIFAR_5M_streaming}, a standardized ReLu activation function was applied, that is 
\begin{equation} \label{eq:standarized_relu}
\sigma(\cdot) = \frac{\max \{\cdot, 0\} - 0.5( \pi)^{-1}}{0.5-0.5 \pi^{-1}}.
\end{equation}
The entries of the hidden weight matrix $\WW \in \mathbb{R}^{n_0 \times d}$ in the random feature model are standard normal.

\paragraph{Volterra: computing theoretical dynamics.} When the entries of $\AA$ are generated by standard Gaussians, a celebrated work \citep{marvcenko1967distribution} gives an explicit limiting density for the eigenvalues when $d$ and $n$ are proportional. In this case, the Volterra equation \eqref{eqa:VLoss} for the loss function $\mathscr{L}$ is computable without needing to input the empirical eigenvalues of the data matrix $\AA \AA^T$. Since the covariance of standard Gaussians is explicitly $\SSigma_f = \II_d$ (see Appendix~\ref{sec: motivating_applications}), one can also directly solve for the expected risk \eqref{eqa:PLoss} for applications such as in-distribution expected risk. As such, the Volterra equation is completely determined. To solve it, a Chebyshev quadrature was used to derive a numerical approximation for the kernel, $K$, \eqref{eqa:VLoss}. The size of the grid points used to compute the numerical integration does effect the Volterra equations convergence to the theoretical limit. We suggest that the number of epochs be equal to the number of grid points used in the numerical quadrature rule. Next, to generate the solution $\mathscr{L}$ of the Volterra equation, we implement a Picard iteration which finds a fix point to the Volterra equation by repeatedly convolving the kernel and adding the forcing term. Despite the numerical approximations to integrals, the resulting solutions to the Volterra equation ($\Psi$ and $\Omega$) model the true behavior of SGD remarkably well. Similarly, by evaluating contour integrals, random features with Gaussian $\XX$ and $\WW$ known explicit formulas for the limiting densities of eigenvalues and eigenvectors (see e.g., \citep{adlam2020neural}). This approach was used to compute the theoretical dynamics in Figure \ref{fig:MSE_Gaussian}.

When the limiting eigenvalues and eigenvectors are unavailable, as in the case of real data sets, an empirical Volterra equation solver was used. We computed the svd of the data matrix $\AA$ and calculated an empirical covariance for $\SSigma_f$ (see Appendix~\ref{sec: motivating_applications}). The singular values and vectors of $\AA$ and $\SSigma_f$ were then used to compute the forcing term (i.e., the GF terms $\mathscr{L}(\gf)$ and $\mathcal{R}(\gf)$) and kernel $K$ \eqref{eqa:V}. As before, a Chebyshev quadrature was used to derive the integral for the kernel $K$ and a Picard iteration to find the fix point of the Volterra was applied. This method was used to compute the theoretical dynamics $\Omega_t$ and $\Psi_t$ in Figures \ref{fig:CIFAR_5M_streaming} and \ref{fig:gaussianstreaming}.

\renewcommand{\arraystretch}{1.1}
\ctable[notespar,
caption = {{\bfseries Summary of the eigenvalues in ICR} with normalized trace equal to 1.0, i.e., $\tfrac{1}{n} \tr( \AA \AA^T ) = 1.0$. All data sets were standardized before applying any transformations (e.g., random features). For random features (RF), standard Gaussian $\WW \in \mathbb{R}^{n_0 \times d}$ applied to the data set followed by entry-wise application of standardized ReLu (see \eqref{eq:standarized_relu} and Appendix~\ref{sec: motivating_applications} and Appendix~\ref{sec:numerical_simulations} for exact set-up). \vspace{-0.5em}
} ,label = {table:ICR_values},
captionskip=2ex,
pos =!t
]{l c c c c}{\tnote[1]{\citep{krizhevsky09learningmultiple}} \tnote[2]{\citep{nakkiran2021bootstrap}} \tnote[3]{\citep{lecun2010mnist}} }{
\toprule
 & & & \multicolumn{2}{c}{\textbf{Eigenvalues of $\AA \AA^T$}} \\
\textbf{Data set} & \textbf{Samples} $(n)$ & \textbf{Features} $(d)$ & \textit{Largest} & \textit{Smallest}\\
\midrule
\begin{minipage}{0.25\textwidth}  CIFAR-10\tmark[1] (all)
\end{minipage} & 50,000 & 3,072  & 11,118.80 & $4.7\cdot 10^{-4}$\\
\midrule
\begin{minipage}{0.25\textwidth}  CIFAR-10\tmark[1] RF\\
{\small large $d$}
\end{minipage} & 50,000 & 5,551 & 8,162.84 & $2.8\cdot10^{-1}$\\
\midrule
\begin{minipage}{0.25\textwidth}  CIFAR-10\tmark[1] RF\\
{\small small $d$}
\end{minipage} & 50,000 & 452 & 8,403.31 & 13.18\\
\midrule
\begin{minipage}{0.25\textwidth}  CIFAR-5m\tmark[2] (all)
\end{minipage} &  5 million & 3,072 & 1,195,595.52 & $1.03 \cdot 10^{-1}$\\
\midrule
\begin{minipage}{0.25\textwidth}  CIFAR-5m\tmark[2] (car/plane)
\end{minipage} &  1 million & 3,072  & 258,599.09 & $1.7 \cdot 10^{-2}$\\
\midrule
\begin{minipage}{0.25\textwidth}  Gaussian\\
{\small \textit{under parameterized} }
\end{minipage} & 2,000 & 100  & 29.35 & 12.4\\
\midrule
\begin{minipage}{0.25\textwidth}  Gaussian\\
{\small \textit{equal} }
\end{minipage} & 2,000 & 1,930  & 4.06 & $3.4 \cdot 10^{-4}$\\
\midrule
\begin{minipage}{0.25\textwidth}  Gaussian\\
{\small \textit{over parameterized} }
\end{minipage} & 2,000 & 100,000  & 1.30 & $7.4 \cdot 10^{-1}$\\
\midrule
\begin{minipage}{0.25\textwidth}  Gaussian-RF\\
{\small \textit{under parameterized} }
\end{minipage} & 2,000 & 100  & 66.38 & 3.95\\
\midrule
\begin{minipage}{0.25\textwidth}  Gaussian-RF\\
{\small \textit{equal} }
\end{minipage} & 2,000 & 1,467  & 27.15 & $6.2 \cdot 10^{-3}$ \\
\midrule
\begin{minipage}{0.25\textwidth}  Gaussian-RF\\
{\small \textit{over parameterized} }
\end{minipage} & 2,000 & 316,227  & 21.77 & $6.6 \cdot 10^{-2}$\\
\midrule
\begin{minipage}{0.25\textwidth}  MNIST\tmark[3] (all)
\end{minipage} & 60,000 & 784 & 5,562.79 & $1.1 \cdot 10^{-2}$\\
\midrule
\begin{minipage}{0.25\textwidth}  MNIST\tmark[3] RF\\
{\small large $d$}
\end{minipage} & 60,000 & 5,551 & 4,249.29 & $2.8 \cdot 10^{-1}$\\
\midrule
\begin{minipage}{0.25\textwidth}  MNIST\tmark[3] RF\\
{\small small $d$}
\end{minipage} & 60,000 & 452 & 4,564.77 & 15.09\\

\bottomrule
}

\paragraph{Real data.} 
The CIFAR-5m \citep{nakkiran2021bootstrap} example (Figures~\ref{fig:CIFAR_5M_streaming} is shown to demonstrate that large-dimensional random matrix predictions often work for large dimensional real data. Random features models were used to predict the car/plane class vector which has approximately 1 million samples. The data sets were all standardized and pre-processed to have mean $0$ and variance $1$ before applying the random features model with standardized ReLu. 

We give specific simulation/experimental details below:
\begin{itemize}
 \item \textit{CIFAR-5m streaming, Figure~\ref{fig:CIFAR_5M_streaming}}: Plots of single runs of SGD on CIFAR-5m \citep{nakkiran2021bootstrap} using the car/plane class vector (samples $n$ = 1 million, features $n_0 = 32 \times 32 \times 3$) on a random features model with standardized ReLu (see \eqref{eq:standarized_relu}). CIFAR-5m car/plane data set was standardized so that entries were mean $0$ and variance $1$.  Standard Gaussian weight matrix $\WW \in \mathbb{R}^{n_0 \times d}$ with fixed $d = 6,000$ used in the random features set-up (see Appendix~\ref{sec: motivating_applications}). Multi-pass SGD with constant learning rate $\gamma = 0.8$ applied to various sample size $n = 1000\cdot[4,6,10,20,40]$ on $\eqref{eq:rr}$ with $\delta = 0.01$. Empirical volterra solver was applied to match the multi-pass setting using the same variables. An empirical covariance $\SSigma_{\sigma}(\WW)$ computed using all 1 million samples. Streaming SGD using constant learning rate $\gamma = 0.8$ applied to the expected risk using the empirical covariance $\SSigma_{\sigma}(\WW)$. As the $\ell^2$ regularization parameter $\delta$ is hit by a factor of $n$, in the streaming setting, the regularization is set to $0.0$.  Empirical Volterra using the eigenvalues of $\SSigma_{\sigma}(\WW)$ with $\gamma = 0.8$ and $\delta = 0.0$ matched the SGD steaming setting. 
    \item \textit{Random features theory}.  
    \item \textit{ICR, Figure~\ref{fig:ICR}}: Graph of the ICR under the assumption that the normalized trace of $\nabla^2 \mathscr{L}$ is $1.0$, that is, $\tfrac{1}{n} \tr(\nabla^2 \mathscr{L}) = 1.0$. All data sets, MNIST, CIFAR-10, and CIFAR-5m are standardized (i.e., entries normalized so that mean 0.0 and variance 1.0). Largest and smallest (non-zero) eigenvalues of the feature covariance reported. For the random features set-up (RF), standard Gaussian matrix $\WW \in \mathbb{R}^{n_0 \times d}$ where $n_0$ is the underlying number of features from the data set and $d$ ranged from $10^{2.5}$ to $10^{3.9}$ was applied to the data set followed by an entry-wise activation standardized ReLu. Reported (dashed lines) are the largest and smallest eigenvalues after applying the standardized ReLu and making the normalized trace equal to 1.0. In the Gaussian set-up, the number of samples $n$ was fixed at $2000$ and $d$ ranged from $10^2$ to $10^5$; entries of $\AA$ standard Gaussians. In the random features Gaussian (Gaussian-RF), we fixed the samples $n = 2000$ and $n_0 = 100$ and varied the $d = 10^2$ to $10^{5.5}$. Largest and smallest eigenvalues of $\sigma(\XX \WW)^T \sigma(\XX \WW)$ reported after making the normalized trace $1.0$. 
    \item \textit{Gaussian linear regression streaming, Figure~\ref{fig:gaussianstreaming}}: Simple linear regression with targets from a generative model, $\bb = \AA \bbeta + \xxi$; signal $\bbeta \sim N(\bm{0}, \tfrac{1}{d} \II_d)$ and noise $\xxi \sim N(\bm{0}, \tfrac{0.04}{d} \II)$. A $(n \times 2000)$ data matrix $\AA$ with $\AA_{ij} \sim N(0, 1/2000)$ with various $n$ values (see figure). SGD with constant learning rate $\gamma = 0.8$  initialized at $\xx_0 = \bm{0}$ was applied to the linear regression problem with a regularization parameter of $0.01$, see training loss and excess risk in linear regression in Appendix~\ref{sec: motivating_applications}. In this setting, the covariance of the expected risk is explicitly given by $\II_d / d$. A new data point $\aa \sim N(0, \tfrac{1}{d} \II_d)$ and $b = \aa \bbeta + 0.2 Z$ with $Z \sim N(0,1)$ generated and the expected risk computed as $(\aa \xx_t - b)^2$ where $\xx_t$ are the iterates of SGD. Empirical volterra solver used with   grid points $\approx$ number of iterations of SGD.
    \item \textit{Gaussian linear regression concentration, Figure~\ref{fig:MSE_Gaussian}}: Simple linear regression with targets from generative model, $\bb = \AA \bbeta + \xxi$; signal $\bbeta \sim N(\bm{0}, \tfrac{1}{d}\II_d)$, noise $\xxi \sim N(\bm{0}, \tfrac{1.5^2}{n}\II_n)$. Matrix $\AA \in \mathbb{R}^{n \times d}$ is row normalized and $\tfrac{d}{n} = 0.9$ for $n = \{100, 400, 1600, 6400\}$. 10 runs of SGD with constant learning $\gamma = 0.8$ started at $\xx_0 \sim N(0, \tfrac{4}{n} \II_d)$ applied to
    the $\ell^2$-regularized least squares problem with $\delta = 0.1$, see training loss and excess risk in linear regression in Appendix~\ref{sec: motivating_applications}. 80\% confidence interval (shaded) depicted in Figure~\ref{fig:MSE_Gaussian}. Volterra equation solver used with grid points approximately the same as epochs. Expected risk computed as in Figure~\ref{fig:gaussianstreaming}. Concentration around the Volterra equation occurs as $n \text{ (or $d$)} \to \infty$ across different risk functions. 
\end{itemize}


\end{document}